\newcommand{\edo}{\phi^\star}
\newcommand{\exo}{\phi^\star_\xi}
\newcommand{\Tex}{T_\xi}
\newcommand{\muex}{\mu_\xi}
\newcommand{\TV}[1]{\nbr{#1}_{\textrm{TV}}}
\newcommand{\pairD}{\Dcal_{\textrm{for}}}
\newcommand{\etamin}{\eta_{\textrm{min}}}
\newcommand{\nsamp}{n_{\textrm{samp}}}
\newcommand{\ghat}{\hat{g}}
\newcommand{\upairdist}{D_{\textrm{unf}}}
\newcommand{\kpairdist}{D_k}
\newcommand{\pairdist}{D_{pr}}
\newcommand{\lossauto}{\ell_{\textrm{auto}}}
\newcommand{\losstemp}{\ell_{\textrm{temp}}}
\newcommand{\lossfor}{\ell_{\textrm{for}}}
\newcommand{\losscont}{\ell_{\textrm{cont}}}
\newcommand{\phihat}{\hat{\phi}}
\newcommand{\fhat}{\hat{f}}
\newcommand{\cont}{\textrm{cont}}
\newcommand{\deltacont}{\Delta_{\textrm{cont}}}
\newcommand{\distcont}{D_{\textrm{cont}}}
\newcommand{\dcoup}{D_{\textrm{coup}}}
\newcommand{\unf}{\textrm{Unf}}
\newcommand{\supp}{\textrm{supp}}
\newcommand{\psiall}{\Psi_{\textrm{all}}}
\newcommand{\formargin}{\beta_{\textrm{for}}}
\newcommand{\tempmargin}{\beta_{\textrm{temp}}}
\newcommand{\formargink}{\beta_{\textrm{for}}^{(k)}}
\newcommand{\tempmargink}{\beta_{\textrm{temp}}^{(k)}}
\newcommand{\formarginu}{\beta_{\textrm{for}}^{(u)}}
\newcommand{\tempmarginu}{\beta_{\textrm{temp}}^{(u)}}
\newcommand{\dpol}{\Pi_{D}}
\newcommand{\poly}{{\tt poly}}
\newcommand{\xtil}{\tilde{x}}
\newcommand{\stil}{\tilde{s}}
\newcommand{\xitil}{\tilde{\xi}}
\newcommand{\psamp}{p_{\textrm{samp}}}
\newcommand{\pidata}{\pi_{\textrm{data}}}
\newtheorem{assumption}{Assumption}
\renewcommand{\epsilon}{\varepsilon}
\renewcommand{\hat}{\widehat}
\Crefname{assumption}{Assumption}{Assumptions}
    \let\Cref\crtCref
    \let\cref\crtcref
\Crefname{assumption}{Assumption}{Assumptions}
\newcounter{HALG@line}
\renewcommand{\theHALG@line}{\thealgorithm.\arabic{ALG@line}}
\newcommand{\1}{\mathds 1}
\newcommand{\cpfname}[1]{\bf\em Proof of \cref{#1}}
\newcommand{\fct}[1]{{\scriptscriptstyle(#1)}}
\title{Towards Principled Representation Learning from Videos for Reinforcement Learning}
\author{Dipendra Misra\footnote{DM and AS contributed equally.} \quad Akanksha Saran$^*$\thanks{Work done while at Microsoft Research NYC.} \quad Tengyang Xie \quad Alex Lamb \quad John Langford\\[.5cm]
$^*$Microsoft Research New York\hspace{7mm}$^\dagger$Sony Research\\[.1cm]
\normalsize\texttt{ dimisra@microsoft.com, akanksha.saran@sony.com, \{tengyangxie, lambalex, jcl\}@microsoft.com}
}
\date{}
\begin{document}

%%%%%%%%%%
%%%%% Concerns table of content
\doparttoc % Tell to minitoc to generate a toc for the parts
\faketableofcontents % Run a fake tableofcontents command for the partocs

%\part{} % Start the document part
\parttoc % Insert the document TOC
%%%%%%%%%%
%%%%%%%%%%

\maketitle

% Add abstract
\begin{abstract}
We study pre-training representations for decision-making using video data, which is abundantly available for tasks such as game agents and software testing. Even though significant empirical advances have been made on this problem, a theoretical understanding remains absent. We initiate the theoretical investigation into principled approaches for representation learning and focus on learning the latent state representations of the underlying MDP using video data. We study two types of settings: one where there is iid noise in the observation, and a more challenging setting where there is also the presence of exogenous noise, which is non-iid noise that is temporally correlated, such as the motion of people or cars in the background. We study three commonly used approaches: autoencoding, temporal contrastive learning, and forward modeling. We prove upper bounds for temporal contrastive learning and forward modeling in the presence of only iid noise. We show that these approaches can learn the latent state and use it to do efficient downstream RL with polynomial sample complexity. When exogenous noise is also present, we establish a lower bound result showing that the sample complexity of learning from video data can be exponentially worse 
than learning from action-labeled trajectory data. This partially explains why reinforcement learning with video pre-training is hard. 
We evaluate these representational learning methods in two visual domains, yielding results that are consistent with our theoretical findings.
\end{abstract}

% Add Section 1: introduction
\section{Introduction}

Representations pre-trained on large amounts of offline data have led to significant advances in machine learning domains such as natural language processing~\citep{liu2019roberta,brown2020language} and multi-modal learning~\citep{lin2021vx2text,radford2021learning}. This has naturally prompted a similar undertaking in reinforcement learning (RL) with the goal of training a representation model that can be used in a policy to solve a downstream RL task. The natural choice of data for RL problems is trajectory data, which contains the agent's observation along with actions taken by the agent and the rewards received by it~\citep{sutton2018reinforcement}. A line of work has proposed approaches for learning representations with trajectory data in both offline~\citep{uehara2021representation,islam2022acro} and online learning settings~\citep{nachum2018near, bharadhwaj2022infopower}. However, unlike text and image data, which are abundant on the internet or naturally generated by users, trajectory data is comparatively limited and expensive to collect. In contrast, video data, which only contains a sequence of observations (without any action or reward labeling), is often plentiful, especially for domains such as gaming and software. 
This motivates a line of work considering learning representations for RL using video data~\citep{Zhao2022video}. \emph{But is there a principled foundation underlying these approaches?} \emph{Are representations learned from video data as useful as representations learned from trajectory data?} We initiate a theoretical understanding of these approaches to show when and how these approaches yield representations that can be used to solve a downstream RL task efficiently.

\begin{figure}
    \centering
    \includegraphics[scale=0.33]{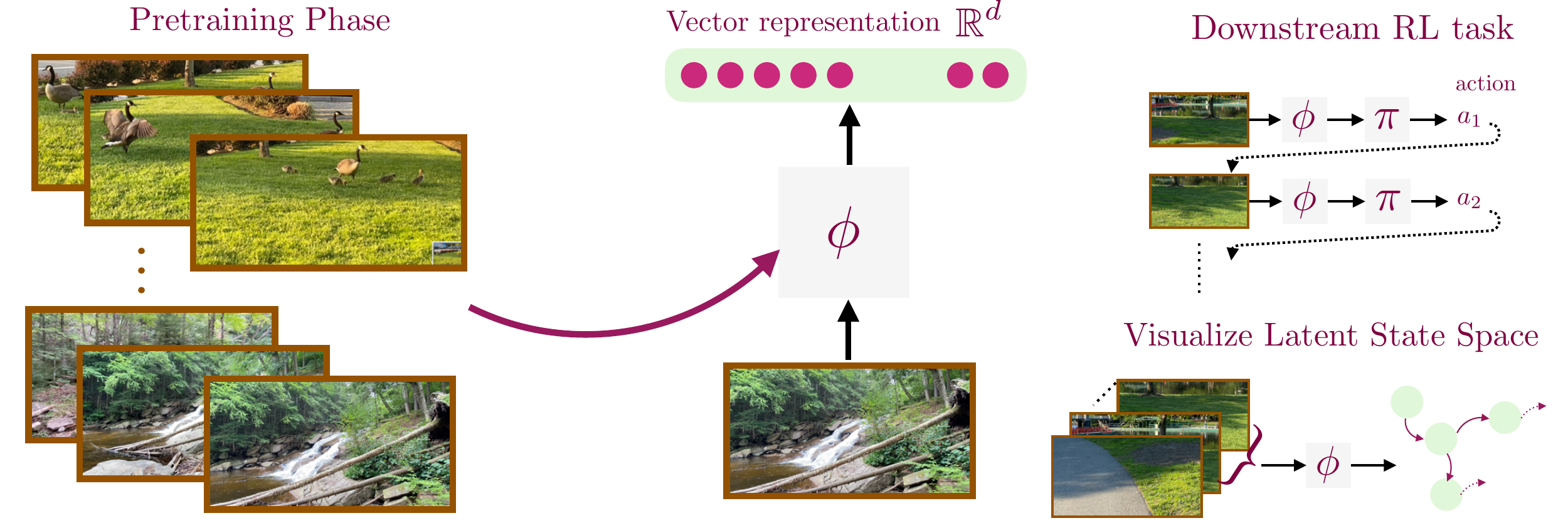}
    \caption{A flowchart of our video pre-training phase. \textbf{Left:} We assume access to a large set of videos (or, unlabeled episodes). \textbf{Center:} A representation learning method is used to train a model $\phi$ which maps an observation to a vector representation. \textbf{Right:} This representation can be used in a downstream task to do reinforcement learning or visualize the latent world state.}
    \label{fig:main-fig}
\end{figure}

Consider a representation learning pipeline shown in~\pref{fig:main-fig}. We are provided videos, or equivalently a sequence of observations, from agents navigating in the world. We make no assumption about the behavior of the agent in the video data. They can be trying to solve one task, many different tasks, or none at all. This video data is used to learn a model $\phi$ that maps any given observation to a vector representation. This representation is subsequently used to perform downstream RL --- defining a policy on top of the learned representation and only training the policy for the downstream task. 
We can also use this representation to define a dynamics model or a critique model.
The representation can also help visualize the agent state space or dynamics for the purpose of debugging.

A suitable representation for performing RL efficiently is aligned with the underlying dynamics of the world. Ideally, the representation captures the latent agent state, which contains information about the world relevant to decision-making while ignoring any noise in the observation. For example, in~\pref{fig:main-fig}, ignoring noise such as the motion of geese in the background is desirable if the task involves walking on the pavement. We distinguish between two types of noise: (1) temporally independent noise that occurs at each time step independent of the history, (2) temporally dependent noise, or exogenous noise, that can evolve temporally but in a manner independent of the agent's actions (such as the motion of geese in~\cref{fig:main-fig}). 

A range of approaches have been developed that provably recover the latent agent state from observations using trajectory data~\citep{misra2020kinematic,efroni2022ppe} which contains action labels. However, for many domains there is relatively little trajectory data that exists naturally, making it expensive to scale these learning approaches. In contrast, video data is more naturally available but these prior provable approaches do not work with video data. 
On the other hand, it is unknown whether approaches that empirically work with video data 
provably recover the latent representation and lead to efficient RL.
Motivated by this, we build a theoretical understanding of three such video-based representation learning approaches: \emph{autoencoder} which trains representations by reconstructing observations, \emph{forward modeling} which predicts future observations, and \emph{temporal contrastive} learning which trains a representation to determine if a pair of observations are causally related or not.

Our first theoretical result shows that in the absence of exogenous noise, forward modeling and temporal contrastive learning approaches both provably work. Further, they lead to efficient downstream RL that is strictly more sample-efficient than solving these tasks without any pre-training. Our second theoretical result establishes a lower bound showing that in the presence of exogenous noise, any compact and frozen representation that is pre-trained using video data cannot be used to do efficient downstream RL. In contrast, if the trajectory data was available, efficient pre-training would be possible. This establishes a statistical gap showing that video-based representation pre-training can be exponentially harder than trajectory-based representation pre-training.

We empirically test our theoretical results in three visual domains: GridWorld (a navigation domain), ViZDoom basic (a first-person 3D shooting game), and ViZDoom Defend The Center (a more challenging first-person 3D shooting game). We evaluate the aforementioned approaches along with ACRO~\citep{islam2022acro}, a representation pre-trained using trajectory data and designed to filter out exogenous noise. We observe that in accordance with our theory, both forward modeling and temporal contrastive learning succeed at RL when there is no exogenous noise. However, in the presence of exogenous noise, their performance degrades. Specifically, we find that temporal contrastive learning is especially prone to fail in the presence of exogenous noise, as it can rely exclusively on such noise to optimally minimize the contrastive loss. While we find that forward modeling is somewhat robust to exogenous noise, however, as exogenous noise increases, its performance quickly degrades as well. While any finite-sample guarantees for the autoencoding method remain an open question, empirically, we find that the performance of autoencoder-based representation learning is unpredictable. On the other hand, ACRO continues to perform well, highlighting a disadvantage of video pre-training. 
The code for all experiments is available as part of the Intrepid codebase at
~\url{https://github.com/microsoft/Intrepid}.
\looseness=-1

% Add Section 2: preliminaries and overview
\section{Preliminaries and Overview}

We cover the theoretical settings and the dataset type in this section. We start by describing the notations in this work.

\paragraph{Mathematical Notation.} We use $[N]$ for $N \in \NN$ to define the set $\{1, 2, \cdots, N\}$. We use calligraphic notations to define sets (e.g., $\Ucal$, $\Scal$). For a given countable set $\Ucal$, we define $\Delta(\Ucal)$ as the space of all distributions over $\Ucal$. We denote the cardinality of a set $\Ucal$ by $|\Ucal|$.
We assume all sets to be countable. Finally, we denote $\poly\{\cdot\}$ to denote that something scales polynomially in listed quantities. 

\paragraph{Block MDPs.} We study episodic reinforcement learning in Block Markov Decision Processes (Block MDP)~\citep{du2019provably}. A Block MDP is defined by the tuple $(\Xcal, \Scal, \Acal, T, R, q, \mu, H)$ where $\Xcal$ is a set of observations that can be infinitely large, $\Scal$ is a finite set of \emph{latent} states, and $\Acal$ is a set of finite actions. The transition dynamics $T: \Scal \times \Acal \rightarrow \Delta(\Scal)$ define transitions in the latent state space. The reward function $R: \Scal \times \Acal \rightarrow [0, 1]$ assigns a reward $R(s, a)$ if action $a$ is taken in the latent state $s$. Observations are generated from states using the emission function $q: \Scal \rightarrow \Delta(\Xcal)$ that generates an observation $x \sim q(\cdot \mid s)$ when the agent is in latent state $s$. This emission process contains temporally independent noise but no exogenous noise. Finally, $\mu \in \Delta(\Scal)$ is the distribution over the initial latent state and $H$ is the horizon denoting the number of actions per episode. The agent interacts with a block MDP environment by repeatedly generating an episode $(x_1, a_1, r_1, x_2, a_2, r_2, \cdots, x_H, a_H, r_H)$ where $s_1 \sim \mu$ and for all $h \in [H]$ we have $x_h \sim q(\cdot \mid s_h)$, $r_h = R(s_h, a_h)$, and $s_{h+1} \sim T(\cdot \mid s_h, a_h)$, and all actions $\{a_h\}_{h=1}^H$ are taken by the agent. The agent never directly observes the latent states $(s_1, s_2, \cdots, s_H)$.

A key assumption in Block MDPs is that two different latent states cannot generate the same observation. This is called the \emph{disjoint emission property} and environments with this property are called rich observation environments, meaning the observation is \emph{sufficiently rich} to allow decoding of the latent state from it. Formally, this property allows us to define a decoder $\edo: \Xcal \rightarrow \Scal$ that maps an observation to the unique state that can generate it. The agent does not have access to $\edo$. If the agent had access to $\edo$, one could map each observation from an infinitely large space to the finite latent state space, which allows the use of classical finite RL methods~\citep{kearns2002near}.

\paragraph{Exogenous Block MDPs (Ex-Block MDP).} We also consider reinforcement learning in Exogenous Block MDPs (Ex-Block MDPs) which can be viewed as an extension of Block MDPs to include exogenous noise~\citep{efroni2022ppe}. An Ex-Block MDP is defined by $(\Xcal, \Scal, \Xi, \Acal, T, \Tex, R, q, H, \mu, \muex)$ where $\Xcal, \Scal, \Acal, T, R, H$ and $\mu$ have the same meaning and type as in Block MDPs. The additional quantities include $\Xi$ which is the space of exogenous noise. We use the notation $\xi \in \Xi$ to denote the exogenous noise. For the setting in \cref{fig:main-fig}, the exogenous noise variable $\xi$ captures variables such as the position of geese, the position of leaves on the trees in the background, and lighting conditions. We allow the space of exogenous noise to be infinitely large. The exogenous noise $\xi$ changes with time according to the transition function $\Tex: \Xi \rightarrow \Delta(\Xi)$. Note that unlike the agent state $s \in \Scal$, the exogenous noise $\xi \in \Xi$, evolves independently of the agent's action and does not influence the evolution of the agent's state. The emission process $q: \Scal \times \Xi \rightarrow \Delta(\Xcal)$ in Ex-Block MDP uses both the current agent state and exogenous noise, to generate the observation at a given time. For example, the image generated by the agent's camera contains information based on the agent's state (e.g., agent's position and orientation), along with exogenous noise (e.g., the position of geese). Finally, $\muex$ denotes the distribution over the initial exogenous noise $\xi$.

Similar to the Block MDP, we assume a disjoint emission property, i.e., we can decode the agent state and exogenous noise from a given observation. We use $\edo: \Xcal \rightarrow \Scal$ to denote the true decoder for the agent state ($s$), and $\exo: \Xcal \rightarrow \Xi$ to denote the true decoder for exogenous noise ($\xi$).

\paragraph{Provable RL.} We assume access to a policy class $\Pi = \{\pi: \Xcal \rightarrow \Acal\}$ where a policy $\pi \in \Pi$ allows the agent to take actions. For a given policy $\pi$, we use $\EE_\pi\sbr{\cdot}$ to denote expectation taken over an episode generated by sampling actions from $\pi$. We define the value of a policy $V(\pi) = \EE_\pi\sbr{\sum_{h=1}^H r_h}$ as the expected total reward or expected return. Our goal is to learn a near-optimal policy $\hat{\pi}$, i.e., $\sup_{\pi \in \Pi} V(\pi) - V(\hat{\pi}) \le \epsilon$ with probability at least $1-\delta$ for a given tolerance parameter $\epsilon > 0$ and failure probability $\delta \in (0, 1)$, using number of episodes that scale polynomially in $\nicefrac{1}{\epsilon}$, $\nicefrac{1}{\delta}$, and other relevant quantities. We will call such an algorithm as provably efficient. There exist several provably efficient RL approaches for solving Block MDPs~\citep{mhammedi2023representation,misra2020kinematic}, and Ex-Block MDPs~\citep{efroni2022ppe}. These approaches typically assume access to a decoder class $\Phi=\cbr{\phi:\Xcal \rightarrow [N]}$ and attempt to learn $\edo$ using it. These algorithms don't use any pre-training and instead directly interact with the environment and learn a near-optimal policy by using samples that scale with $\poly(S, A, H, \ln|\Phi|, \nicefrac{1}{\epsilon}, \nicefrac{1}{\delta})$. Crucially, the dependence on $\ln|\Phi|$ cannot be removed.
The decoder class $\Phi$ and all other function classes in this work are assumed to have bounded statistical complexity measures. For simplicity, we will assume that these function classes are finite and derive guarantees that scale logarithmically in their size (e.g., $\ln|\Pi|$).\footnote{Our theoretical analyses can be extended to other statistical complexity metrics such as Rademacher complexity.}

\paragraph{Representation Pre-training using Videos.} As mentioned earlier, existing algorithms for Block MDPs and Ex-Block MDPs require online interactions that scale with $\ln|\Phi|$. For real-world problems, the decoder class $\Phi$ can be a complex neural network and these approaches may, therefore, require lots of expensive online data. Offline RL approaches offer a substitute for expensive online interactions but require access to labeled episodes (with actions and rewards) that are not naturally available for many problems \citep[e.g.,][]{uehara2021representation}. Our goal instead is to pre-train the decoder $\phi$ using video data which is more easily available. %Formally, we define the video data $\Dcal = \{(x_1, x_2, \cdots, x_H\}_{i=1}^n$ 

\looseness=-1

\paragraph{Problem Statement.} Here, we state our problem formally. We are given access to two hyperparameters $\epsilon > 0$ and $\delta \in (0, 1)$ and a sufficiently large dataset of videos. We are also given a decoder class $\Phi = \{\phi: \Xcal \rightarrow [N]\}$ containing decoders that map an observation to one of the $N$ possible \emph{abstract states}.\footnote{We use a discrete representation to learn the latent state since Block MDP and Ex-Block MDP have a finite state space, and a discrete representation suffices for this purpose.} During the pre-training phase, we learn a decoder $\phi \in \Phi$ using the video data. We then freeze $\phi$ and use it to do RL in a downstream task. Instead of using any particular choice of algorithm for RL, we assume we are given a provably efficient tabular RL algorithm $\mathscr{A}$. We convert the observation-based RL problem to a tabular MDP problem by converting observations $x$ to its abstract state representation $\phi(x)$ using the frozen learned decoder $\phi$. The algorithm $\mathscr{A}$ uses $\phi(x)$ instead of $x$ and outputs an \emph{abstract policy} $\varphi: [N] \rightarrow \Acal$. We want that $\sup_{\pi \in \Pi} V(\pi) - V(\varphi \circ \phi) \le \epsilon$ with probability at least $1-\delta$, where $\varphi \circ \phi: x \mapsto \varphi(\phi(x))$ is our learned policy. Additionally, we want the number of online episodes needed in the downstream RL task to not scale with the size of the decoder class $\Phi$. This allows us to minimize online episodes while using more naturally available video data for pre-training.

% Add Section 3: Representation learning algorithms using video data
\section{Representation Learning for RL using Video Dataset}
\label{sec:rep_alg}

We assume access to a dataset $\Dcal$ of $n$ videos $\Dcal = \{(x^{(i)}_1, x^{(i)}_2, \cdots, x^{(i)}_H)\}_{i=1}^n$ where $x^{(i)}_j$ is the $j^{th}$ observation (or frame) of the $i^{th}$ video. We are provided a decoder class $\Phi = \{\phi: \Xcal \rightarrow [N]\}$, and our goal is to learn a decoder $\phi \in \Phi$ that captures task-relevant information in the underlying state $\phi^\star(x)$ while throwing away as much exogenous noise as possible. Instead of proposing a new algorithm, we consider the following three classes of well-known representation learning methods that only assume access to video data. Our goal is to understand whether these methods provably learn useful representations. 

We define the loss function for these approaches that we use for our theoretical analysis. In practice, these methods either use continuous representations or use a Vector Quantized bottleneck to model discrete representations. We discuss these practical details at the end of this section.

\paragraph{Autoencoder.} This approach first maps a given observation $x$ to an abstract state $\phi(x)$ using a decoder $\phi \in \Phi$, and then uses it to reconstruct the observation $x$ with the aid of a reconstruction model $z \in \Zcal$ where $\Zcal = \cbr{z: [N] \rightarrow \Xcal}$ is our reconstruction model class. Formally, we optimize the following loss:
\begin{equation}\label{eqn:loss-auto}
   \lossauto(z, \phi) = \frac{1}{nH}\sum_{i=1}^n \sum_{h=1}^H \|z(\phi(x^{(i)}_h)) - x^{(i)}_h\|^2_2.
\end{equation}
In practice, autoencoders are typically implemented using a Vector Quantized bottleneck trained in a Variational AutoEncoder manner, which is called the VQ-VAE approach~\citep{oord2017vq}.

\paragraph{Forward Modeling.} This approach is similar to the autoencoder approach but instead of reconstructing the input observation, we reconstruct a future observation using a model class $\Fcal = \cbr{f: [N] \times [K] \rightarrow \Delta(\Xcal)}$ where $N$ is the output size of the decoder class $\Phi$ and $K \in \NN$ is a hyperparameter representing the forward time steps from the current observation. We collect a dataset of \emph{multistep transitions} $\Dcal_{\textrm{for}} = \{(x^{(i)}, k^{(i)}, x'^{(i)})\}_{i=1}^n$ 
sampled iid using the video dataset $\Dcal$ where the observation $x^{(i)}$ is sampled randomly from the $i^{th}$ video, $k^{(i)} \in [K]$, and $x'^{(i)}$ is the frame $k^{(i)}$-steps ahead of $x^{(i)}$ in the $i^{th}$ video. We distinguish between two types of sampling procedures, one where $k^{(i)}$ is always a fixed given value $k \in [K]$, and one where $k^{(i)} \sim \unf\rbr{[K]}$. Given the dataset $\Dcal_{\textrm{for}}$, we optimize the following loss:
\begin{equation}\label{eqn:loss-for}
    \lossfor(f, \phi) = \frac{1}{n} \sum_{i=1}^n \ln f\rbr{x'^{(i)} \mid \phi(x^{(i)}), k^{(i)}}.
\end{equation}

\paragraph{Temporal Contrastive Learning.} Finally, this approach trains the decoder $\phi$ to learn to separate a pair of temporally causal observations from a pair of temporally \emph{acausal} observations. We collect a dataset of $\Dcal_{\textrm{temp}} = \{(x^{(i)}, k^{(i)}, x'^{(i)}, z^{(i)})\}_{i=1}^{\lfloor n/2\rfloor}$ tuples using the multistep transitions dataset $\Dcal_{\textrm{for}}.$ 
We use 2 multistep transitions to create a single datapoint for $\Dcal_{\textrm{temp}}$ to keep the datapoints independent.
To create the $i^{th}$ datapoint for $\Dcal_{\textrm{temp}}$, we use the multistep transitions $(x^{(2i)}, k^{(2i)}, x'^{(2i)})$ and $(x^{(2i+1)}, k^{(2i+1)}, x'^{(2i+1)})$ and sample $z^{(i)} \sim \unf(\{0, 1\})$. If $z^{(i)}=1$, then our $i^{th}$ datapoint is a causal observation pair $(x^{(2i)}, k^{(2i)}, x'^{(2i)}, z^{(i)})$, otherwise, it is an acausal observation pair $(x^{(2i)}, k^{(2i)}, x^{(2i + 1)}, z^{(i)})$. Depending on how we sample $k$, we collect a different multistep transition dataset $\Dcal_{\textrm{for}}$, and accordingly a different contrastive learning dataset $\Dcal_{\textrm{temp}}$.
Given the dataset $\Dcal_{\textrm{temp}}$, we optimize the following loss using a regression model $g$ belonging to a model class $\Gcal = \{g: \Xcal \times [K] \times \Xcal \rightarrow [0, 1]\}$:
\begin{equation}\label{eqn:loss-temp}
   \losstemp(g, \phi) = \frac{1}{\lfloor n/2 \rfloor} \sum_{i=1}^{\lfloor n/2 \rfloor} \rbr{z^{(i)} - g(\phi(x^{(i)}), k^{(i)}, x'^{(i)})}^2.
\end{equation}

\paragraph{Practical Implementations.} We use the description of the aforementioned methods for theoretical analysis. However, their practical implementations differ in a few notable ways. Firstly, we either use a continuous vector representation $\phi: \Xcal \rightarrow \RR^d$ for modeling the decoder class $\Phi$, or apply a Vector Quantized (VQ) bottleneck on top of the vector representation to model a discrete-representation decoder. The VQ bottleneck is applied on the final output of $\phi(x)$ to force the representations into a discrete state space~\citep{oord2017vq}. The VQ-bottleneck is expressive yet forces the model to remove excess information~\citep{liu2021dvnc}. The gradients into $\phi$ are produced using the straight-through estimator, while an extra loss attracts the output of $\phi$ to the discrete embeddings. 

We also optimize all loss functions described above using the Adam optimizer with mini-batches (instead of directly minimizing the loss in~\pref{eqn:loss-auto}-\ref{eqn:loss-temp}). Finally, we use square loss instead of log loss for forward modeling and SimCLR-style batched loss for temporal contrastive learning. 

There are several variations of the above methods in prior literature. Our goal is to not comprehensively cover all such variants but instead describe the core approach and analyze it theoretically. 
We will empirically show that the findings of our theoretical analysis apply to our practical implementations.

% Add Section 4: Theory section
\section{Is Video Based Representation Learning Provably Correct?}
\label{sec:theory}

In this section, we establish the theoretical foundation for when video-based representation learning is useful for downstream RL. We first present upper bounds for video representation pre-training via forward modeling and temporal contrastive learning in Block MDPs. We then present a lower-bound construction that shows that the video-based representation pre-training setting is exponentially harder than representation pre-training using trajectories where actions are available. This also explains why learning from video data is challenging in practice~\citep{Zhao2022video}.
For most of our results in this section, we will defer the proof to the Appendix and only provide a sketch here.

\subsection{Upper Bound in Block MDP Setting}

We start by stating our theoretical setting and our main assumptions. 

\paragraph{Theoretical Setting.}  We assume a Block MDP setting and access to a dataset $\Dcal = \cbr{(x^{(i)}_1, x^{(i)}_2, \cdots, x^{(i)}_H)}_{i=1}^n$ of $n$ independent and identically distributed (iid) videos sampled from data distribution $D$. We denote the probability of a video as $D(x_1, x_2, \cdots, x_H)$. We assume that $D$ is generated by a mixture of Markovian policies $\dpol$, i.e., the generative procedure for $D$ is to sample a policy $\pi \in \dpol$ with some probability and then generate an entire episode using it. We assume that observations encode time steps. This can be trivially accomplished by simply concatenating the time step information to the observation. We also assume that the video data has good state space coverage and that the data is collected by \emph{noise-free policies.}
\begin{assumption}[Requirements on Data Collection]\label{assum:data-collection} There exists an $\etamin > 0$ such that if $s$ is a state reachable at time step $h$ by some policy in $\Pi$, then $D\rbr{\edo(x_h)=s} \ge \etamin$. Further, we assume that every data collection policy $\pi \in \dpol$ is noise-free, i.e., $\pi(a \mid x_h) = \pi(a \mid \edo(x_h))$ for all $(a, x_h)$.
\end{assumption}
\noindent\textbf{Justification for~\pref{assum:data-collection}}
In practice, we expect this assumption to hold for tasks such as gaming, or software debugging, where video data is abundant and, therefore, can be expected to provide good coverage of the underlying state space. This assumption is far weaker than the assumption in batch RL which also requires actions and rewards to be labeled, which makes it more expensive to collect data that has good coverage~\citep{chen2019information}.
Further, unlike imitation learning from observations (ILO)~\citep{torabi2019recent}, we don't require that these videos provide demonstrations of the desired behavior. E.g., video streaming of games is extremely common on the internet, and one can get many hours of video data this way. However, this data wouldn't come with actions (which will be mouse or keyboard strokes) or reward labeling, and the game levels or tasks that these players are solving can be all different or even unrelated to the downstream tasks we want to solve. As such, the video data do not provide demonstrations of any one desired task. Further, as the video data is typically generated by humans, we can expect the data collection policies to be noise-free, as these policies are realized by humans who would not make decisions based on noise. E.g., a human player is unlikely to turn left due to the background motion of leaves that is unrelated to the game's control or objective.\looseness=-1 

We analyze the temporal contrastive learning and forward modeling approaches and derive upper bounds for these methods in Block MDPs. While autoencoder-based approaches sometimes do well in practice, it is an open question whether finite-sample bounds exist for them and we leave their theoretical analysis to future work and instead evaluate them empirically. We also do not use a VQ bottleneck for our theoretical analysis since our decoder class directly outputs discrete values. We can view the VQ bottleneck as a way to do the discrete encoding in practice. In addition to the decoder class $\Phi$, we assume a function class $\Fcal$ to model $f$ for forward modeling and $\Gcal$ to model $g$ for temporal contrastive learning.  We make a realizability assumption on these function classes.

\begin{assumption}[Realizability]\label{assum:realizability} There exists $f^\star \in \Fcal, g^\star \in \Gcal$ and $\phi_{\textrm{for}}, \phi_{\textrm{temp}} \in \Phi$ such that $f^\star(X' \mid \phi_{\textrm{for}}(x), k) = \PP_{\textrm{for}}(X' \mid x, k)$ and $g^\star(z \mid \phi_{\textrm{temp}}(x), k, x') = \PP_{\textrm{temp}}(z=1 \mid x, k, x')$ on the appropriate support, and where $\PP_{\textrm{for}}$ and $\PP_{\textrm{temp}}$ are respectively the Bayes classifier for the forward modeling and temporal contrastive learning methods.
\end{assumption}
\noindent\textbf{Justification for~\pref{assum:realizability}.} Realizability is a typical assumption made in theoretical analysis of RL algorithms~\citep{agarwal2020flambe}. Intuitively, the assumption states that the function classes are expressive enough to represent the Bayes classifier of their problem. In practice, this is usually not a concern as we will use expressive deep neural networks to model these function classes. We will empirically show the feasibility of this assumption in our experiments. 

Our statement of~\pref{assum:realizability} implicitly makes use of~\pref{assum:data-collection} which enables us to write the Bayes classifier for these problems as a function of just the state $\edo(x)$ rather then the actual observation $x$. This is formally proven in our analysis. This allows us to make a realizability assumption while still assuming a discrete decoder class $\Phi$.

Finally, we assume that our data distribution has the required information to separate the latent states. We state this assumption formally below and then show settings where this is true.
\begin{assumption}[Margin Assumption]\label{assum:margin} We assume that the margins $\formargin$ and $\tempmargin$ defined below:
\begin{align*}
    \formargin &= \inf_{s_1, s_2 \in \Scal, s_1 \ne s_2}\EE_{k}\sbr{\TV{\PP_{\textrm{for}}(X' \mid s_1, k) - \PP_{\textrm{for}}(X' \mid s_2, k)}}\\
    \tempmargin &= \inf_{s_1, s_2 \in \Scal, s_1 \ne s_2} \frac{1}{2}\EE_{k, s'}\sbr{\abr{\PP_{\textrm{temp}}(z=1 \mid s_1, k, s') - \PP_{\textrm{temp}}(z=1\mid s_2, k, s')}},
\end{align*}
are strictly positive, and where in the definition of $\tempmargin$, we sample $s'$ from the video data distribution and $k$ is sampled according to our data collection procedure.
\end{assumption}
\noindent\textbf{Justification for~\pref{assum:margin}.} This assumption states that we need margins ($\formargin$) for forward modeling and ($\tempmargin)$ for temporal contrastive learning. A common scenario where these assumptions are true is when for any pair of different states $s_1, s_2$, there is a third state $s'$ that is reachable from one but not the other. If the video data distribution $D$ supports all underlying transitions, then this immediately implies that $\TV{\PP_{\textrm{for}}(X' \mid s_1, k) - \PP_{\textrm{for}}(X' \mid s_2, k)} > 0$ which implies $\formargin > 0$. This scenario occurs in almost all navigation tasks. Specifically, it occurs in the three domains we experiment with. While it is less clear, under this assumption we also have $\tempmargin > 0$.

We now state our main result for forward modeling under \pref{assum:data-collection}-\ref{assum:margin}.

\begin{theorem}[Forward Modeling Result]\label{thm:main-upper-bound} Fix $\epsilon > 0$ and $\delta \in (0, 1)$ and let $\mathscr{A}$ be any provably efficient RL algorithm for tabular MDPs with sample complexity $\nsamp(S, A, H, \epsilon, \delta)$. If $n$ is $\poly\left\{S, H, \nicefrac{1}{\etamin}, \nicefrac{1}{\formargin}, \nicefrac{1}{\epsilon},
\ln(\nicefrac{1}{\delta}),\right.$ $\left. \ln|\Fcal|, \ln|\Phi|\right\}$ for a suitable polynomial, then forward modeling learns a decoder $\phihat: \Xcal \rightarrow [|\Scal|]$. Further, running $\mathscr{A}$ on the tabular MDP with $\nsamp(S, A, H, T, \epsilon/2, \delta/4)$ episodes returns a latent policy $\hat{\varphi}$. Then there exists a bijective mapping $\alpha: \Scal \rightarrow [|\Scal|]$ such that with probability at least $1-\delta$ we have:
\vspace*{-0.09cm}
\begin{equation*}
    \forall s \in \Scal, \qquad \PP_{x \sim q(\cdot \mid s)}\rbr{\phihat(x) = \alpha(s) \mid \edo(x)=s} \ge 1 - \frac{4S^3H^2}{\etamin^2\formargin}\sqrt{\frac{1}{n}\ln\rbr{\frac{|\Fcal|\cdot|\Phi|}{\delta}}},
\end{equation*}
and the learned observation-based policy $\hat{\varphi} \circ \phihat: x \mapsto \hat{\varphi}(\phihat(x))$ is $\epsilon$-optimal, i.e.,
    \begin{equation*}
        V(\pi^\star) - V(\hat{\varphi} \circ \phihat) \le \epsilon.
    \end{equation*}
Finally, the number of online episodes used in the downstream RL task is given by $\nsamp(S, A, H, \epsilon_\circ/2, \delta_\circ/4)$ and doesn't scale with the complexity of function classes $\Phi$ and $\Fcal$.
\end{theorem}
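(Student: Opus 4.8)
The plan is to prove the theorem in four stages: (i) a maximum-likelihood generalization bound for the jointly learned pair $(\fhat, \phihat)$; (ii) a reduction of the Bayes-optimal forward predictor to a function of the latent state; (iii) a margin-driven decoding-accuracy argument that produces the bijection $\alpha$; and (iv) a simulation argument that converts the decoding guarantee into the downstream RL bound. The first step treats $\lossfor$ as a log-likelihood objective and invokes a standard realizable conditional-MLE bound. Under \cref{assum:realizability} the pair $(f^\star, \phi_{\textrm{for}})$ attains the Bayes predictor, so the empirical maximizer $(\fhat, \phihat)$ obeys a squared-Hellinger bound which, via the usual relationship between squared Hellinger and total-variation distances, gives $\EE_{(x,k) \sim \pairD}\sbr{\TV{\fhat(\cdot \mid \phihat(x), k) - \PP_{\textrm{for}}(\cdot \mid x, k)}^2} \lesssim \frac{1}{n}\ln\rbr{\nicefrac{|\Fcal|\,|\Phi|}{\delta}}$. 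This is where the $\ln|\Fcal| + \ln|\Phi|$ dependence and the $1/n$ rate enter.

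Next I would use \cref{assum:data-collection}. Because every data-collection policy is noise-free (it acts only through $\edo(x)$) and the Block MDP emission carries only temporally independent noise, the future frame $x'$ is conditionally independent of the present frame $x$ given the latent state and the gap $k$; hence $\PP_{\textrm{for}}(X' \mid x, k) = \PP_{\textrm{for}}(X' \mid \edo(x), k)$. This is the fact flagged earlier that makes realizability over a discrete decoder class coherent, and it collapses the per-observation prediction target onto the finite latent space so that \cref{assum:margin} can be applied state-wise.

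The crux is stage (iii). Suppose two observations $x_1, x_2$ with $\edo(x_1) = s_1 \ne s_2 = \edo(x_2)$ receive the same abstract label $\phihat(x_1) = \phihat(x_2) = j$. Then the single predictor $\fhat(\cdot \mid j, k)$ must approximate both $\PP_{\textrm{for}}(\cdot \mid s_1, k)$ and $\PP_{\textrm{for}}(\cdot \mid s_2, k)$, which differ by at least $\formargin$ in expected total variation; a triangle inequality forces prediction error at least $\formargin/2$ on at least one of the two states. Using the coverage lower bound $\etamin$ so that each reachable state carries enough mass to be detected by the MLE bound, and summing this contradiction over the $O(S^2)$ state pairs, shows that the misdecoded mass per state is at most the stated $\frac{4S^3H^2}{\etamin^2\formargin}\sqrt{\frac{1}{n}\ln(\nicefrac{|\Fcal|\,|\Phi|}{\delta})}$. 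I would then define $\alpha(s)$ as the abstract label carrying the dominant mass of $q(\cdot \mid s)$ and argue via the same margin separation that $\alpha$ is injective, hence (since $N=|\Scal|$) bijective, giving the per-state inequality in the statement. I expect this to be the main obstacle: threading an \emph{aggregate} squared-TV bound through the coverage factor and a union bound over state pairs to obtain clean \emph{per-state} control is exactly what produces the $S^3/\etamin^2$ blow-up, and the bookkeeping there is delicate.

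Finally, for the downstream guarantee I would view $\phihat$ as inducing an abstract tabular MDP on $[|\Scal|]$ and show, through $\alpha$, that its transition and reward tables lie within the per-state decoding error of the true latent MDP's tables. A simulation lemma bounds $\abr{V(\pi^\star) - V(\hat{\varphi} \circ \phihat)}$ by the $\epsilon/2$ suboptimality of $\mathscr{A}$ on the abstract MDP plus an $O(SH^2)$-scaled decoding error; choosing $n$ as the stated polynomial drives the decoding term below $\epsilon/2$ and yields $\epsilon$-optimality. Since $\mathscr{A}$ only ever operates on the $S$-state abstract MDP, its online budget is $\nsamp(S, A, H, \epsilon/2, \delta/4)$, which is independent of $|\Phi|$ and $|\Fcal|$, as claimed.
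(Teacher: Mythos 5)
Your proposal is correct and follows essentially the same route as the paper: a realizable conditional-MLE bound in squared total variation, the noise-free-policy reduction $\PP_{\textrm{for}}(X'\mid x,k)=\PP_{\textrm{for}}(X'\mid \edo(x),k)$, a coupling/triangle-inequality argument showing that merging observations from distinct latent states forces a prediction error of order $\formargin$, and the $\arg\max$-mass definition of $\alpha$ with an $\etamin$-driven injectivity argument yielding the $S^3H^2/(\etamin^2\formargin)$ per-state bound. The only divergence is the final step, where you invoke a simulation lemma on perturbed transition/reward tables while the paper instead conditions on the event that every decoding within an episode is correct (accumulating failure probability $\nsamp H\vartheta$ and a $2H^2\vartheta$ value gap); both work, though the paper's event-coupling sidesteps the issue that the process induced by $\phihat$ is not exactly Markov.
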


The result for temporal contrastive is identical to~\pref{thm:main-upper-bound} but instead of $\formargin$ we have $\tempmargin$ and instead of $\Fcal$ we have $\Gcal$. These upper bounds provide the desired result which shows that not only can we learn the right representation and near-optimal policy but also do so without online episodes scaling with $\ln|\Phi|$. 

\pref{thm:main-upper-bound} shows that in the absence of exogenous noise, we can expect forward modeling and temporal contrastive learning to learn the right representation under~\pref{assum:data-collection}-\ref{assum:margin}. The upper bounds for these two approaches are nearly identical except for the difference in margins $\tempmargin$ vs $\formargin$, and the difference in function class. Typically, we expect the function class $\Fcal$ for forward modeling to have a higher statistical complexity than temporal contrastive learning $\Gcal$, as the former is modeling the entire observation compared to the latter which is solving a binary classification task. This leaves the comparison between their margins. We prove a result that relates their margin.

\begin{theorem}[Margin Relation] For any Block MDP and $K \in \NN$ we have $\frac{\etamin^2}{4H^2}\formargink \le \tempmargink \le \formargink$.
\end{theorem}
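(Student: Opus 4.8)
The plan is to fix the gap $k$, push both margins down to the latent space, and then compare them coefficient by coefficient. Write $p_i(s')$ for the $k$-step latent transition probability from $s_i$ to $s'$ under the data-collection dynamics, and $\rho(s')$ for the marginal probability that a randomly sampled ``future'' frame decodes to $s'$. By \pref{assum:data-collection} (noise-free data policies) together with the disjoint emission property, the induced latent process is Markov and both Bayes classifiers are functions of the decoded state only; in particular $\PP_{\textrm{for}}(x'\mid s_i,k)=p_i(\edo(x'))\,q(x'\mid\edo(x'))$. Substituting this into the contrastive posterior, the emission factor $q(x'\mid s')$ cancels between numerator and denominator, leaving
\[
\PP_{\textrm{temp}}(z=1\mid s_i,k,s') \;=\; \frac{p_i(s')}{p_i(s')+\rho(s')}.
\]
The same cancellation shows $\TV{\PP_{\textrm{for}}(X'\mid s_1,k)-\PP_{\textrm{for}}(X'\mid s_2,k)}=\tfrac12\sum_{s'}\abr{p_1(s')-p_2(s')}$, so that $\formargink=\inf_{s_1\ne s_2}\tfrac12\sum_{s'}\abr{p_1(s')-p_2(s')}$.

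Next I would compute the difference of the two contrastive posteriors directly, obtaining the key identity
\[
\abr{\PP_{\textrm{temp}}(z=1\mid s_1,k,s')-\PP_{\textrm{temp}}(z=1\mid s_2,k,s')} \;=\; \frac{\rho(s')\,\abr{p_1(s')-p_2(s')}}{(p_1(s')+\rho(s'))(p_2(s')+\rho(s'))}.
\]
Taking the expectation over $s'\sim\rho$ (the future marginal that also supplies the negatives) then gives the reduced form
\[
\tempmargink \;=\; \inf_{s_1\ne s_2}\;\tfrac12\sum_{s'}\frac{\rho(s')^2\,\abr{p_1(s')-p_2(s')}}{(p_1(s')+\rho(s'))(p_2(s')+\rho(s'))}.
\]
Both margins are now infima of sums of $\abr{p_1(s')-p_2(s')}$ weighted by a nonnegative coefficient, so the comparison is purely about bounding that coefficient.

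For the upper bound I would note that, since $p_1,p_2,\rho\ge 0$, we have $\rho^2\le p_1p_2+\rho(p_1+p_2)+\rho^2=(p_1+\rho)(p_2+\rho)$, so the temporal coefficient is at most $1$. Hence the temporal inner sum is bounded above by the forward inner sum for every pair $(s_1,s_2)$, and passing to infima gives $\tempmargink\le\formargink$.

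For the lower bound only the terms with $p_1(s')\ne p_2(s')$ matter, and each such $s'$ lies in the support of $p_1$ or $p_2$ and is therefore reachable. Using $p_i(s')+\rho(s')\le 2$ bounds the denominator by $4$, and I would show $\rho(s')\ge\etamin/H$, whence the coefficient is at least $\etamin^2/(4H^2)$; factoring this constant out of the surviving terms and taking infima yields $\tfrac{\etamin^2}{4H^2}\formargink\le\tempmargink$. The \emph{main obstacle} is exactly this lower bound $\rho(s')\ge\etamin/H$: it requires being precise that the current frame is sampled at a (near-)uniform time index (which produces the $1/H$), that the per-time-step state marginal of a reachable state is at least $\etamin$ by \pref{assum:data-collection}, and that the distribution from which $s'$ is drawn in \pref{assum:margin} is indeed the future marginal $\rho$ appearing in the denominator of $\PP_{\textrm{temp}}$. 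Under the standing convention that every state in $\Scal$ is reachable, \pref{assum:data-collection} applies to each surviving $s'$ and the argument closes; the remaining algebraic steps are routine.
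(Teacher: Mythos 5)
Your proposal is correct and follows essentially the same route as the paper's proof in \pref{app:margin-relations}: both reduce to the latent space, derive the identity $\abr{g^\star(s_1,k,s')-g^\star(s_2,k,s')}=\frac{\rho(s')\abr{p_1(s')-p_2(s')}}{(p_1(s')+\rho(s'))(p_2(s')+\rho(s'))}$ (the paper writes the resulting weight as $W_k(s_1,s')W_k(s_2,s')$ with $W_k(s,s')=\frac{\rho(s')}{\pairdist(s'\mid s,k)+\rho(s')}$), and then bound that coefficient above by $1$ and below by $\etamin^2/(4H^2)$ via $\rho(s')\ge\etamin/H$, exactly as in \Eqref{eqn:rho-bound}. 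The ``main obstacle'' you flag is precisely what the paper's \Eqref{eqn:rho-bound} supplies, so your argument closes as stated.
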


The proof of a generalized statement of this theorem can be found in~\pref{app:margin-relations}. This result shows that the margin for forward modeling $\formargin$ is at least as large as the margin for temporal contrastive approach $\tempmargin$. In summary, this leads to a trade-off between forward modeling and temporal contrastive learning approaches in terms of performance, where the former has a function class with a higher statistical complexity but has a higher margin, compared to temporal contrastive approaches.

\subsection{Learning from Video is Exponentially Harder Than Learning from Trajectory Data}

As discussed before, when online reinforcement learning (RL) is possible, some algorithms can extract latent states from observation. These algorithms can be used to extract a decoder $\hat{\phi}$ that with high probability learns to map observations from different states to different values and whose output range $N$ is roughly of the order $|\Scal|$.

This begs the following question: 
\emph{Is it always possible to learn a compact state representation that is useful for control from video data?}
From this following lower bound result, we argue that this is not always the case.

\begin{theorem}[Lower Bound for Video]
\label{thm:exo_lower_bound}
Suppose $|\Scal|, |\Acal|, H \geq 2$. Then, for any $\varepsilon \in (0,1)$, any algorithm $\mathscr A_1$ that outputs a state decoder $\phi$ with $\phi_h: \Xcal \to [L], ~ L \leq 2^{\nicefrac{1}{4\varepsilon}-1}, ~ \forall h \in [H]$ given a video dataset $\Dcal$ sampled from some MDP and satisfies \cref{assum:data-collection}, and any online RL algorithm $\mathscr A_2$ uses that state decoder $\phi$ in its interaction with such an MDP (i.e., $\mathscr A_2$ only observes states through $\phi$) and output a policy $\widehat \pi$, there exists an MDP instance $M$ in a class of MDPs which satisfies \cref{assum:margin} and is PAC learnable with $\widetilde O(\poly(|\Scal|,|\Acal|,H,\nicefrac{1}{\varepsilon}))$ complexity, such that
\begin{align*}
V_M(\pi_M^\star) - V_M(\widehat \pi) > \varepsilon,
\end{align*}
regardless of the size of the video dataset $\Dcal$ for algorithm $\mathscr A_1$ and the number of episodes of interaction for algorithm $\mathscr A_2$.
\end{theorem}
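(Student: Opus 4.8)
The plan is to exhibit a class of Ex-Block MDPs whose \emph{video} distributions are indistinguishable but whose optimal policies require information that any compact frozen decoder provably cannot retain. Concretely, I would let each observation (under the disjoint-emission property) reveal a string of $m$ bits, of which exactly one coordinate $\theta$ is the agent-controllable state and the remaining $m-1$ are exogenous noise evolving as independent Markov chains; the class is indexed by $\theta \in [m]$. The controllable coordinate is designed so that under the noise-free data-collection policy of \cref{assum:data-collection} — I would take the uniform-random policy, which is noise-free since it ignores the state entirely — its marginal law matches that of each exogenous coordinate (e.g.\ every bit is an independent uniform flip). Hence all $m$ coordinates are \emph{exchangeable} in the video data, and the observation-sequence law $D$ is invariant to $\theta$. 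The reward, which is the only thing distinguishing the endogenous coordinate, never appears in video. I will choose $m$ of order $1/\varepsilon$, so that the stated bound $L \le 2^{\nicefrac{1}{4\varepsilon}-1}$ forces $\log_2 L < m$.

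Given this class, the argument has four moves. First, since $\mathscr{A}_1$ observes only video and $D$ does not depend on $\theta$, the (possibly randomized) decoder $\phi$ it returns is independent of $\theta$; I may therefore fix a realization of $\phi$ and let an adversary pick $\theta$ afterwards, or equivalently average over $\theta$ uniform and extract the worst instance. Second, an information-theoretic counting step: writing $b_j$ for the $j$-th coordinate, the chain rule and the data-processing inequality give $\sum_{j=1}^m I\!\left(b_j; \phi_h(X)\right) \le H\!\left(\phi_h(X)\right) \le \log_2 L$, so the worst coordinate $\theta^\star$ satisfies $I\!\left(b_{\theta^\star}; \phi_h(X)\right) \le \log_2 L / m$. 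I set $\theta = \theta^\star$, making the poorly-retained coordinate the agent state. Third, I translate this into suboptimality via Fano: at the critical layer the optimal action must match $b_{\theta^\star}$, but $\widehat\pi$ acts only through $\phi$, so its error probability in predicting $b_{\theta^\star}$ is bounded below by the inverse binary entropy of $1 - \log_2 L/m$, which exceeds $\varepsilon$ exactly when $L \le 2^{\nicefrac{1}{4\varepsilon}-1}$. With a unit reward gap at that decision this yields $V_M(\pi_M^\star) - V_M(\widehat\pi) > \varepsilon$, and the bound holds no matter how many online episodes $\mathscr{A}_2$ runs, since the discarded information about $b_{\theta^\star}$ is unrecoverable from $\phi$-observations; I would locate the decisive choice at a designated layer whose reachable $\phi$-histories coincide so that neither within-episode history nor reward feedback can disambiguate.

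Fourth, I verify the two ``easiness'' properties that make the separation meaningful. Each instance must satisfy \cref{assum:margin}: since the endogenous coordinate is genuinely controllable and its two values are designed to induce distinct next-observation distributions, the forward conditionals differ across the two agent states, so $\formargin > 0$, and $\tempmargin > 0$ follows (directly, or via the Margin Relation theorem). Each instance must also be PAC-learnable with $\widetilde O(\poly(|\Scal|,|\Acal|,H,\nicefrac{1}{\varepsilon}))$ \emph{trajectories}; this follows by invoking an existing provably efficient Ex-Block MDP algorithm such as PPE~\citep{efroni2022ppe}, which uses action labels to detect that only coordinate $\theta$ responds to actions, strips the exogenous noise, and recovers the $O(1)$-sized endogenous state. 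Thus the gap is precisely between video and action-labeled data.

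The main obstacle is the tension built into the first paragraph: the video laws must be \emph{exactly} $\theta$-invariant, so that $\phi$ cannot depend on $\theta$, while each instance must remain individually easy — positive margin and poly-sample learnable from trajectories. These requirements pull in opposite directions, since exact exchangeability wants the coordinates statistically symmetric whereas margin and learnability want the endogenous coordinate identifiable and rewarding. The resolution I would make precise is that the entire asymmetry is carried by \emph{actions and reward}: both are absent from video but available to a trajectory learner, so matching the endogenous coordinate's action-marginalized dynamics to the exogenous ones preserves the video symmetry while the controllability and reward structure break it the instant action labels appear. A secondary technical point — ensuring the Fano-type error cannot be averaged away by rare-observation effects — is handled by the per-coordinate mutual-information averaging above, which forces the failure to occur on a constant-probability event rather than at a single aliased string.
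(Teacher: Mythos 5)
Your high-level construction is the same as the paper's: a product of binary coordinates, exactly one of which is secretly endogenous, with the family indexed by that coordinate, video-exchangeability forcing the decoder to be independent of the index, and a pigeonhole over coordinates showing some coordinate is poorly retained by any $[L]$-valued decoder. However, there is a genuine gap in how you instantiate the video symmetry, and it breaks the margin requirement that the theorem explicitly demands. You take the data policy to be uniform and make ``every bit an independent uniform flip.'' Then $\PP_{\textrm{for}}(X'\mid s_1=0)=\PP_{\textrm{for}}(X'\mid s_1=1)$ for the endogenous coordinate, so $\formargin=0$ (and hence $\tempmargin=0$), and the instance fails \cref{assum:margin}. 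Your proposed resolution of the tension --- that ``the entire asymmetry is carried by actions and reward'' --- does not repair this, because the margin in \cref{assum:margin} is defined on the action-marginalized video distribution itself; no amount of controllability visible only to a trajectory learner can make a video-defined margin positive. The paper's fix is different and essential: all coordinates (endogenous included, via a biased but still state-independent, hence noise-free, data policy) flip with a common probability $p\neq 1/2$. Exchangeability across coordinates in the video is preserved, yet each coordinate's next value is correlated with its current value, giving margin $|1-2p|>0$; the paper takes $p=1/3$ and explicitly remarks that $p=1/2$ (your choice) collapses the margin to zero.

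A secondary, quantitative issue: your Fano step does not deliver the claimed bound with $m=\nicefrac{1}{4\varepsilon}$. From $\sum_j I(b_j;\phi(X))\le \log_2 L\le m-1$ you get $H(b_{\theta^\star}\mid \phi(X))\ge \nicefrac{1}{m}=4\varepsilon$, hence error probability at least $h^{-1}(4\varepsilon)$, where $h$ is binary entropy; but $h(\varepsilon)\approx\varepsilon\log_2(\nicefrac{1}{\varepsilon})>4\varepsilon$ once $\varepsilon<\nicefrac{1}{16}$, so $h^{-1}(4\varepsilon)<\varepsilon$ and the bound fails for small $\varepsilon$. This is fixable by taking $m$ a constant factor larger than $\nicefrac{1}{4\varepsilon}$ (so that $H(b_{\theta^\star}\mid\phi(X))$ is bounded below by a constant), or by replacing the mutual-information/Fano detour with the paper's direct counting argument, which shows a decoder with range $[L]$ must merge pairs differing in some coordinate with probability at least $\nicefrac{(2^d-L)}{d2^d}$ and avoids the logarithmic loss entirely.
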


The basic idea behind that hard instance construction is that, without the action information, it is impossible for the learning agent to distinguish between endogenous states and exogenous noise. For example, consider an image consisting of $N\times N$ identical mazes but where the agent controls just one maze. Other mazes contain other agents which are exogenous for our purpose. In the absence of actions, we cannot tell which maze is the one we are controlling and must memorize the configuration of all $N\times N$ mazes which grow exponentially with $N$. 
Another implication from that hard instance is -- if the margin condition (\cref{assum:margin}) is violated, the exponentially large state decoder is also required for the regular block MDP without exogenous noise; a detailed discussion can also be found in \cref{sec:lb_proof}.

\paragraph{Can we get efficient learning under additional assumptions?} Our lower bound suggests that one can in general not learn efficient and correct representations with just video data. However, it may be possible in some cases to do so with an additional assumption. We highlight one example here and defer a proper formal analysis to future work. One such scenario can be when the best-in-class error: $\inf_{f, \phi}\EE_{x,k}\sbr{\TV{f(X' \mid x, k) - \PP_{\textrm{for}}(X' \mid x, k)}}$ is small. A domain where this can happen is when the endogenous state is more predictive of $x'$ than any other $\ln |\Scal|$ bits of information in $x$. E.g., in a navigation domain, there can be many people in the background, but memorizing all of them can easily overwhelm the decoder's model capacity. Focusing solely on modeling the agent's state can simplify the task of predicting the future. 

Recently some approaches have also considered recovering \emph{latent actions} from video data using an encoder-decoder approach~\citep{ye2022become}. In general, the lower bound in~\pref{thm:exo_lower_bound} applies to these methods and they do not provably work in the hard instances with exogenous noise. For example, the latent actions can capture \emph{exogenous noise} instead of actions, if the former is more predictive of changes in the observations. However, in simpler cases such as 3D games, where the agent's action is typically most predictive of changes in observations, or in settings with no exogenous noise, one can expect these approaches to do well.\looseness=-1

% Add Section 5: Experiment
\section{Experimental Results and Discussion}
\label{sec:exp}

We empirically evaluate the above video-based representation learning methods on three visual domains: (1) a 2D GridWorld domain, (2) ViZDoom basic environment, a first-person 3D shooting game, and (3) ViZDoom Defend The Center environment, which is a more challenging shooting game. Additional experimental details and results are mentioned in~\cref{appendix:exps}. Our main goal is to validate our theoretical findings from~\pref{sec:theory} and justify our assumptions. 

Specifically, we study the following questions in our experimental analysis:
\begin{itemize}
    \item How do autoencoding, forward modeling, and temporal contrastive learning methods perform in the absence of any noise, in the presence of IID noise, and the presence of exogenous noise?
    \item How does the performance of forward modeling vary with harder exogenous noise?
    \item How do these different video representation learning methods compare to trajectory-based learning methods such as ACRO?
\end{itemize}

\subsection{Experimental Details}

\paragraph{GridWorld.} We consider navigation in a $12\times 12$ Minigrid environment~\citep{MinigridMiniworld23}. 
The agent is represented as a red triangle and can take three actions: move forward, turn left, and turn right (\cref{fig:gridworld-reconstruction}). The goal of the agent is to reach a yellow key. The position of the agent and key randomizes each episode. The agent only observes an area around itself (as an agent-centric-view). Horizon $H=12$, and the agent gets a reward of +1.0 for reaching the goal and -0.01 in other cases. 

\paragraph{ViZDoom Basic.}
We use the basic ViZDoom environment \citep{wydmuch2018vizdoom, kempka2016vizdoom}, which is a first-person shooting game (\cref{fig:vizdoom-reconstruction}). The player needs to kill a monster to win. The episode ends when the monster is killed or after 300 steps.
The map of the environment is a rectangle with gray walls, ceiling, and floor. The player is spawned along the longer wall in the center. A red, circular monster is spawned randomly somewhere along the opposite wall. The player can take one of three actions at each time step (left, right, shoot). One hit is enough to kill the monster. The episode finishes when the monster is killed or on timeout. 
The reward scheme is as follows: +101 for shooting the enemy
, -1 per time step, and -5 for missed shots.

\paragraph{VizDoom Defend The Center.} We use an additional ViZDoom Defend the Center environment \citep{wydmuch2018vizdoom, kempka2016vizdoom}, which is a first-person shooting game (\cref{fig:vizdoom-dtc-reconstruction}). The player needs to kill a variety of monsters to score in the game. The episode ends when the monster is killed or after 500 steps. Results for this environment are shown in \cref{fig:vizdoom_dtc_exps} and \cref{fig:vizdoom-dtc-reconstruction} and further validate our findings from theory and experiments.

\paragraph{Exogenous Noise.} For all domains, the observation is an RGB image. We add exogenous noise to it by superimposing 10 generated diamonds of a particular size. The color and position of these diamonds are our exogenous state. At the start of each episode, we randomly generate these diamonds, after which they move in a deterministic path. We also test the setting in which there is exogenous noise in the reward. We compute a score based on just the exogenous noise and add it to the reward presented to the agent. However, the agent is still evaluated on the original reward.\looseness=-1

\paragraph{Model and Learning.} Our decoder class $\Phi$ is a convolutional neural network. We use a deconvolutional neural network to model $f$ and $h$. We experimented with both using a vector representation for $\phi$ and also using a VQ-bottleneck to discretize the embeddings. We use PPO to do downstream RL and keep $\phi$ frozen during the RL training. We also visualize the learned representations by training a decoder on them and fixing $\phi$ to reconstruct the input observations. We then look at the generated images to see what information from the observation is preserved by the representation.\looseness=-1

\paragraph{ACRO.} We also evaluate the learned representations against ACRO~\citep{islam2022acro} which uses trajectory data. This approach learns representation $\phi$ by predicting action given a pair of observations $\EE\sbr{\ln p(a_h \mid \phi(x_h), x_{h+k})}$. ACRO is designed to filter out exogenous noise as this information is not predictive of the action. Our goal is to show how much better do representations get if we had trajectory data instead of video data.\looseness=-1

\begin{figure}[!thb]
    \centering
    {\includegraphics[width=13cm]{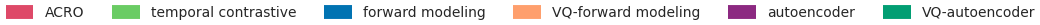}}
    \subfloat[\centering No Noise]{{\includegraphics[width=3.5cm]{./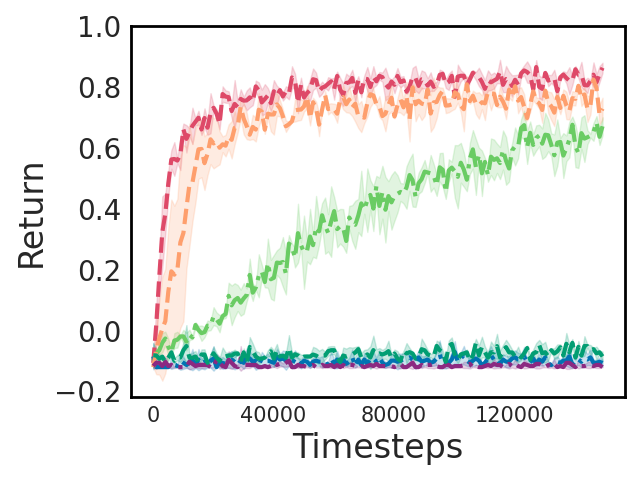} }}
    \subfloat[\centering Only Observation Noise]{{\includegraphics[width=3.5cm]{./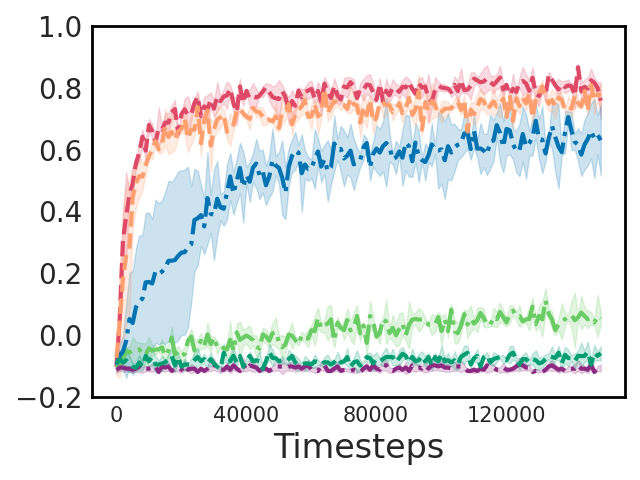} }}
    \subfloat[\centering Only Reward Noise]{{\includegraphics[width=3.5cm]{./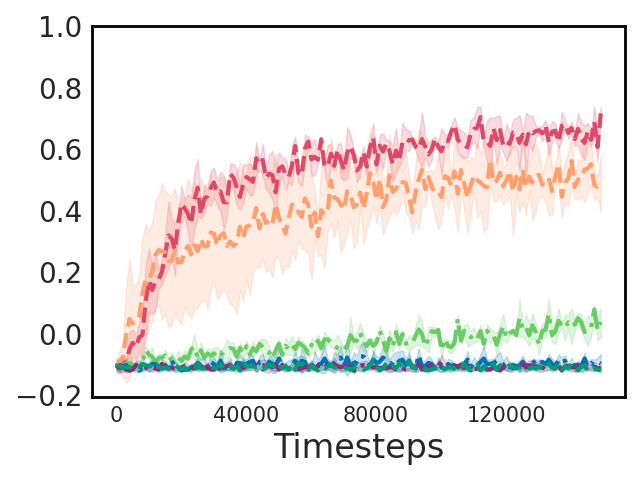} }}
    \subfloat[\centering Both]{{\includegraphics[width=3.5cm]{./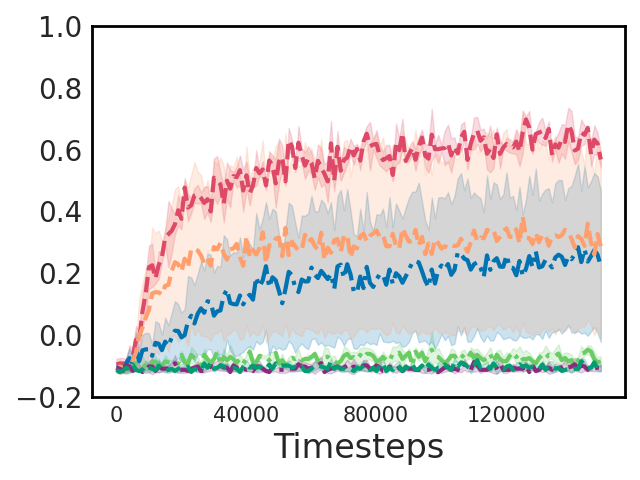} }}
    \caption{RL experiments in the GridWorld environment.} 
    \label{fig:gridworld_exps}
    \vspace{-3mm}
\end{figure}

\begin{figure}[!thb]
\centering
    {\includegraphics[width=2.7cm]{./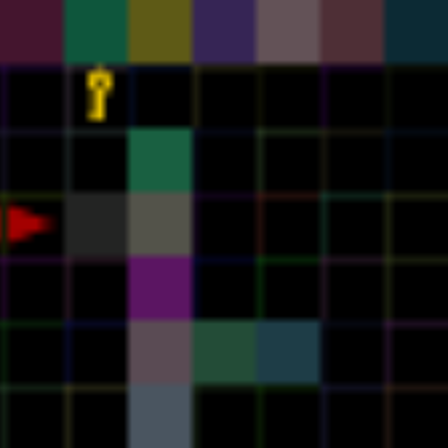} }
    {\includegraphics[width=2.7cm]{./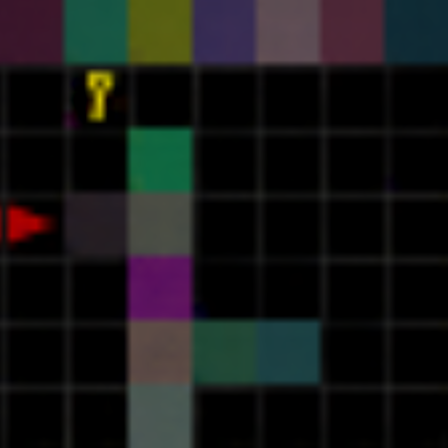} }
    {\includegraphics[width=2.7cm]{./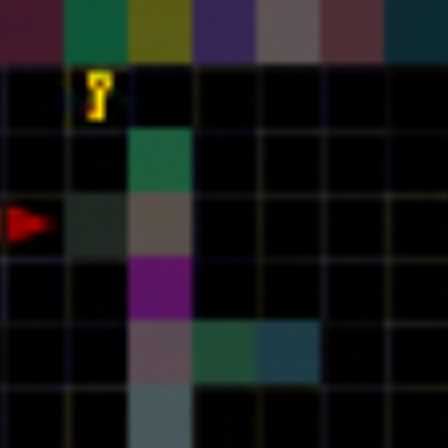} }
    {\includegraphics[width=2.7cm]{./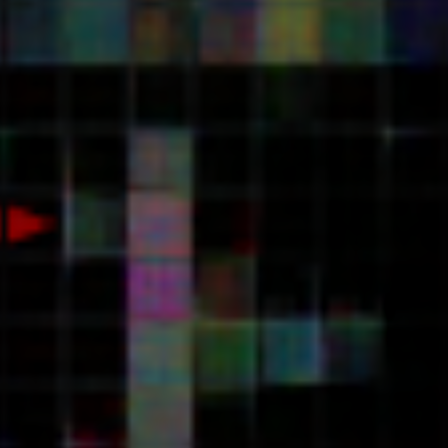} }\\[-0.1cm]
    \subfloat[\centering Original]{{\includegraphics[width=2.7cm]{./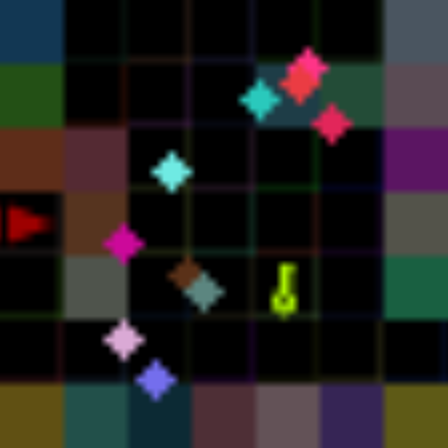} }}
    \hspace{0.02mm}
    \subfloat[\centering {Forward Model}]{{\includegraphics[width=2.7cm]{./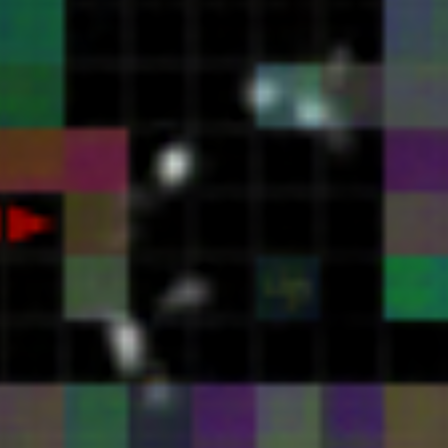} }}
    \hspace{0.02mm}
    \subfloat[\centering Autoencoder]{{\includegraphics[width=2.7cm]{./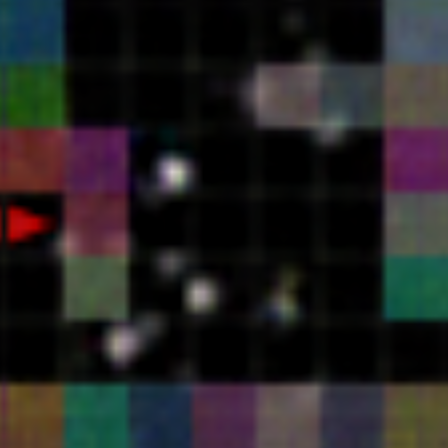} }}
    \hspace{0.02mm}
    \subfloat[\centering Contrastive]{{\includegraphics[width=2.7cm]{./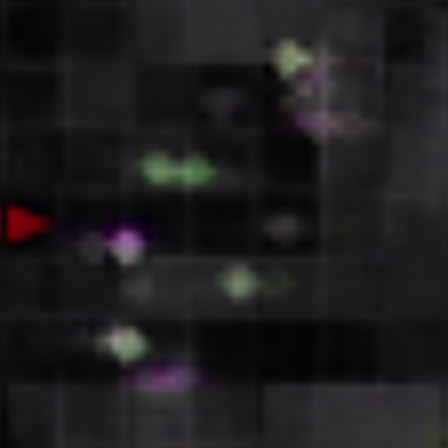} }}
    \caption{Decoded image reconstructions from different latent representation learning methods in the GridWorld environment. We train a decoder on top of frozen representations trained with the three video pre-training approaches. \textbf{Top row:} shows an example from the setting where there is no exogenous noise. \textbf{Bottom row:} shows an example with exogenous noise. 
    }
    \label{fig:gridworld-reconstruction}
\end{figure}

\subsection{Empirical Results and Discussion}

We present our main empirical results in~\cref{fig:gridworld_exps}, \cref{fig:vizdoom_exps}, and \cref{fig:vizdoom_dtc_exps} and discuss the results below.

\begin{figure}
    \centering
    {\includegraphics[width=10cm]{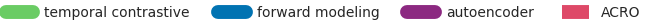}}
    \subfloat[\centering No Noise]{{\includegraphics[width=3.5cm]{./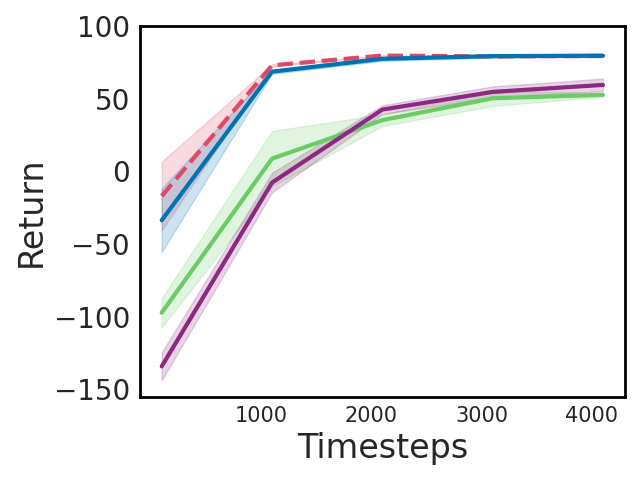}}}
    \subfloat[\centering Only Observation Noise]{{\includegraphics[width=3.5cm]{./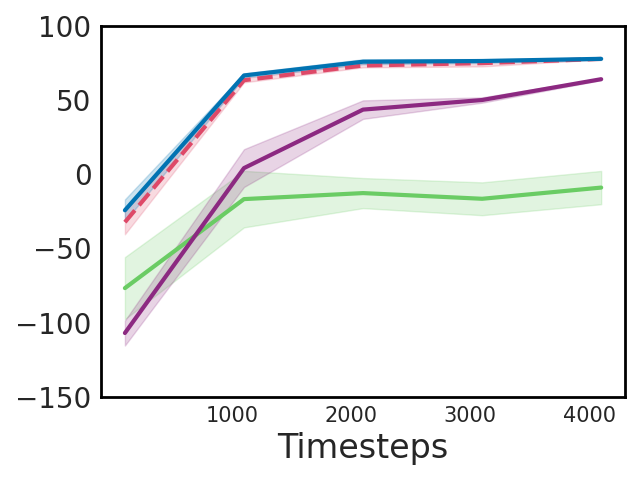} }}
    \subfloat[\centering Only Reward Noise]{{\includegraphics[width=3.5cm]{./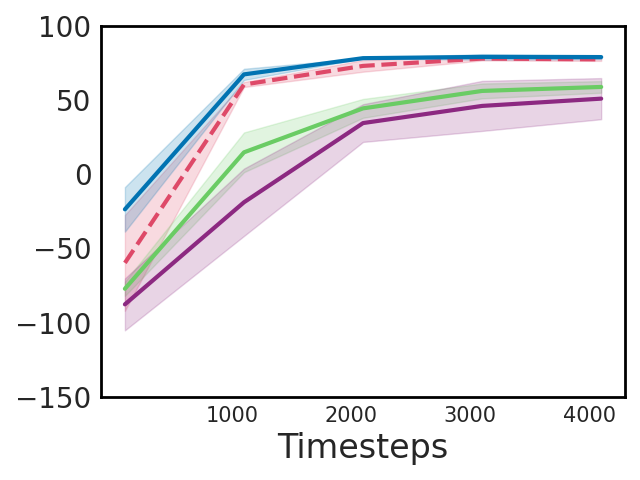} }}
    \subfloat[\centering Both]{{\includegraphics[width=3.5cm]{./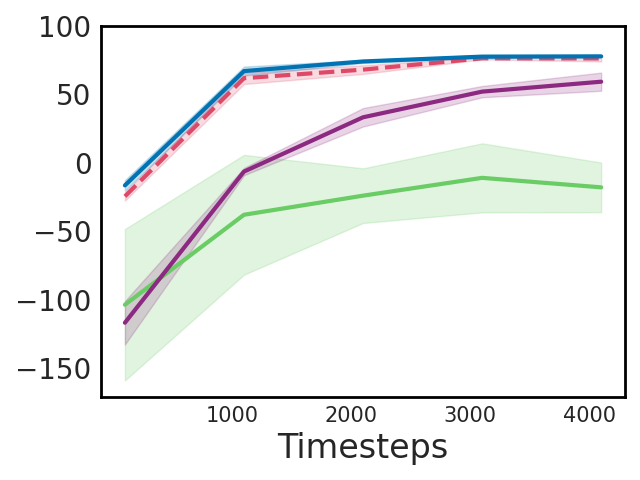} }}
    \caption{RL experiments using different latent representations for the ViZDoom environment.}
    \label{fig:vizdoom_exps}
\end{figure}

\begin{figure}[!thb]
\centering
    {\includegraphics[width=3.0cm]{./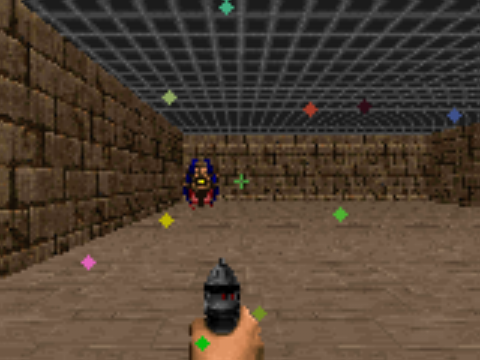} }
    {\includegraphics[width=3.0cm]{./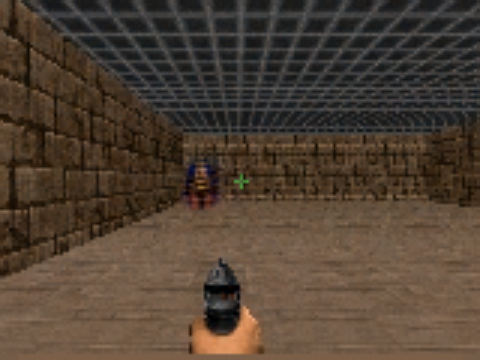} }
    {\includegraphics[width=3.0cm]{./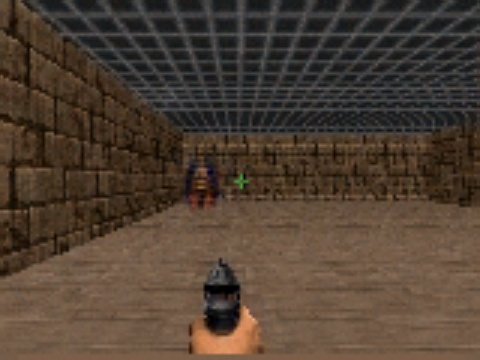} }
    {\includegraphics[width=3.0cm]{./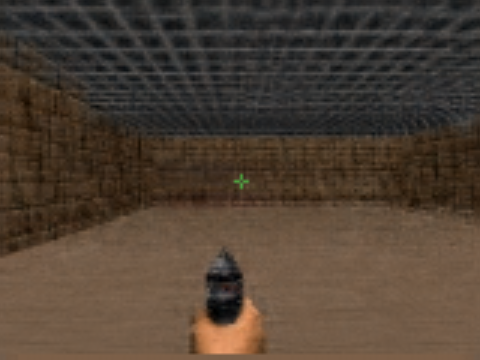} }
    \\
    \subfloat[\centering Original]{{\includegraphics[width=3.0cm]{./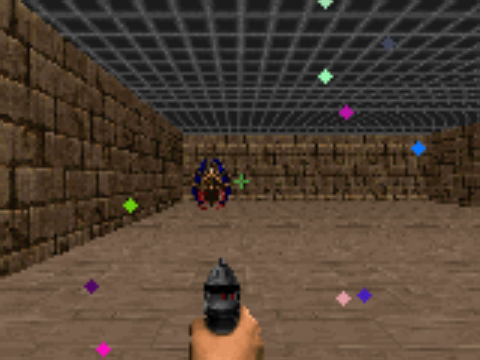} }}
    \hspace{0.05mm}
    \subfloat[\centering Forward Modeling]{{\includegraphics[width=3.0cm]{./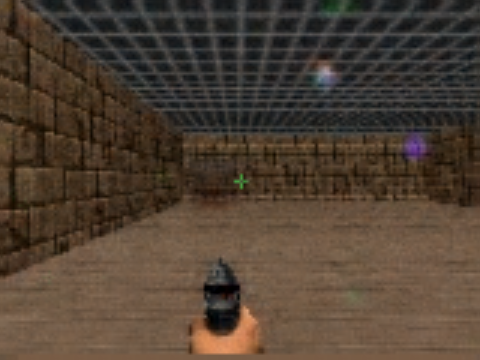} }}
    \hspace{0.05mm}
    \subfloat[\centering Autoencoder]{{\includegraphics[width=3.0cm]{./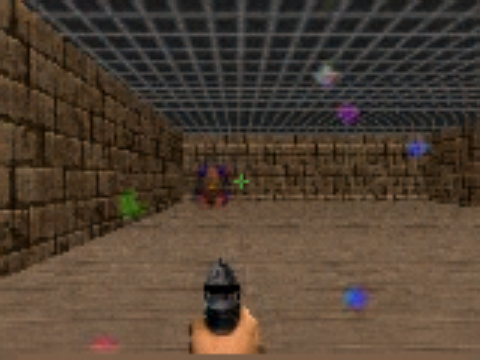} }}
    \hspace{0.05mm}
    \subfloat[\centering Contrastive]{{\includegraphics[width=3.0cm]{./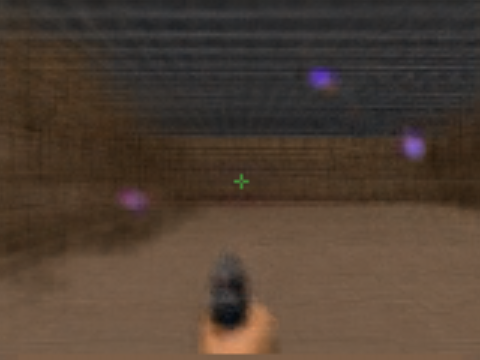} }}
    \caption{Decoded image reconstructions from different representation learning methods in the ViZDoom environment. We train a decoder on top of frozen representations trained with the three video pre-training approaches. Here, we show an example where we also have exogenous noise. 
    }
    \label{fig:vizdoom-reconstruction}
\end{figure}

\paragraph{Forward modeling and temporal contrastive both work when there is no exogenous noise.}  In accordance with~\pref{thm:main-upper-bound}, we observe that in the case of both GridWorld (\cref{fig:gridworld_exps}) and ViZDoom environments (\cref{fig:vizdoom_exps}, \cref{fig:vizdoom_dtc_exps}), these approaches learn a decoder $\phi$ that lead to success with reinforcement learning in the absence of any exogenous noise (both in the presence and absence of exogenous reward). For GridWorld, we find support for this result with VQ bottleneck during representation learning (\cref{fig:gridworld_exps}(a),(c) and \cref{fig:vizdoom_dtc_exps}(a), (c)), whereas for ViZDoom, we find support for this result even without the use of a VQ bottleneck (\cref{fig:vizdoom_exps}(a),(c)). These results are further supported via qualitative evaluation through image decoding from the learned latent representations (\cref{fig:gridworld-reconstruction}) which show that these representations can recover critical elements like walls. We find that autoencoder performs well in ViZDoom but not in gridworld, which aligns with a lack of any theoretical understanding of autoencoders.\looseness=-1

\begin{figure}
    \centering
    {\includegraphics[width=10cm]{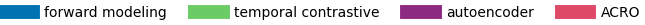}}
    \subfloat[\centering No Noise]{{\includegraphics[width=3.5cm]{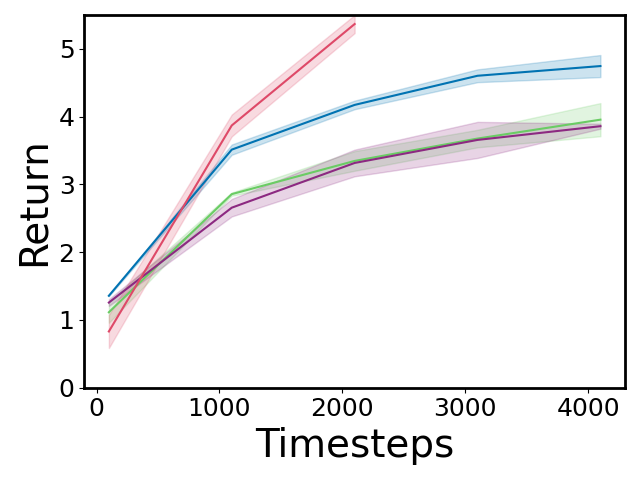}}}
    \subfloat[\centering Only Observation Noise]{{\includegraphics[width=3.5cm]{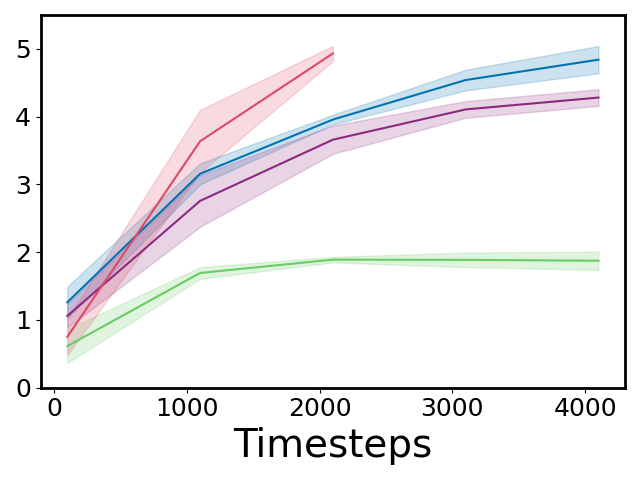} }}
    \subfloat[\centering Only Reward Noise]{{\includegraphics[width=3.5cm]{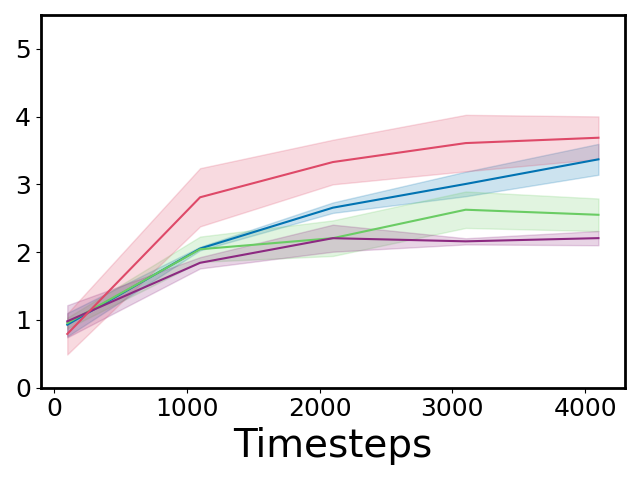} }}
    \subfloat[\centering Both]{{\includegraphics[width=3.5cm]{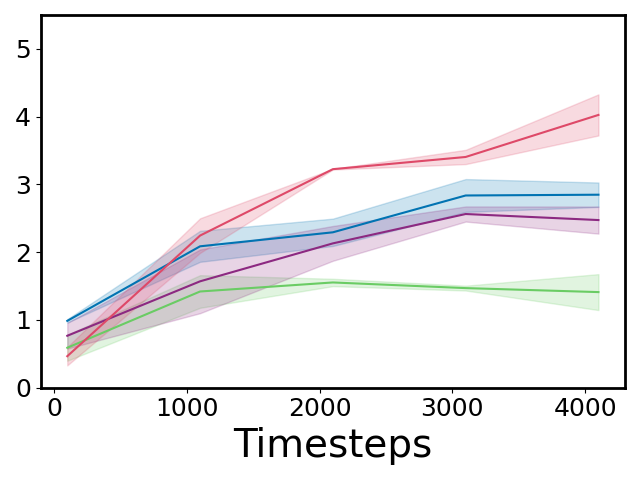} }}
    \vspace{2mm}
    \caption{RL experiments using different latent representations for the ViZDoom Defend the Center environment.}
    \label{fig:vizdoom_dtc_exps}
\end{figure}

\begin{figure}[!thb]
\centering
    \subfloat{{\includegraphics[width=3.5cm]{./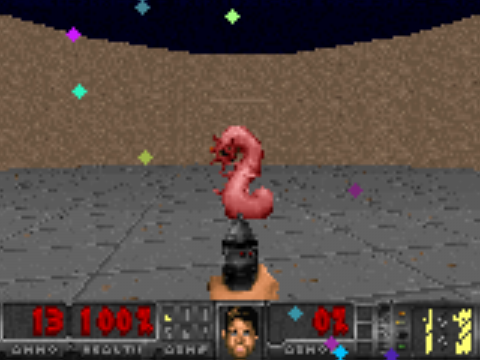} }}
    \subfloat{{\includegraphics[width=3.5cm]{./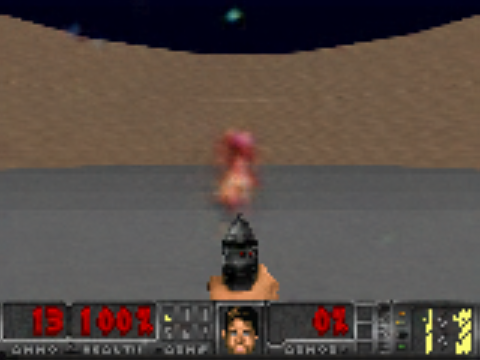} }}
    \subfloat{{\includegraphics[width=3.5cm]{./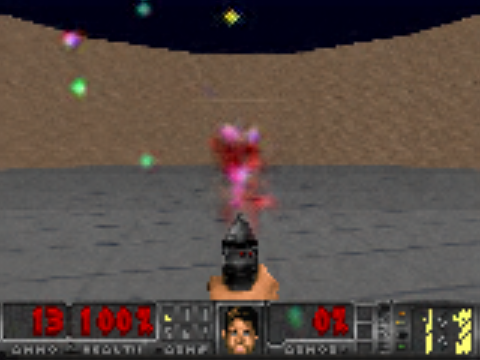} }}
    \subfloat{{\includegraphics[width=3.5cm]{./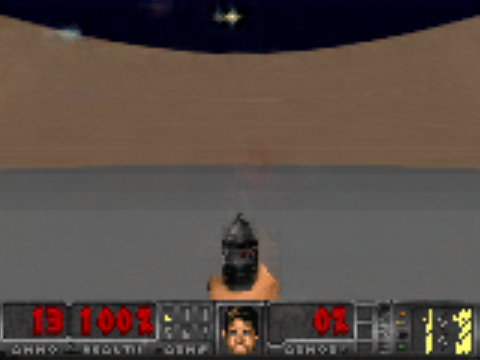} }}\\
    \subfloat{{\includegraphics[width=3.5cm]{./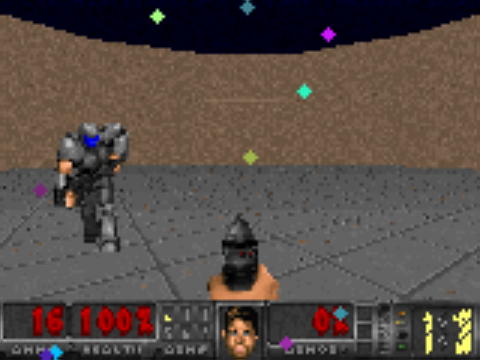} }}
    \subfloat{{\includegraphics[width=3.5cm]{./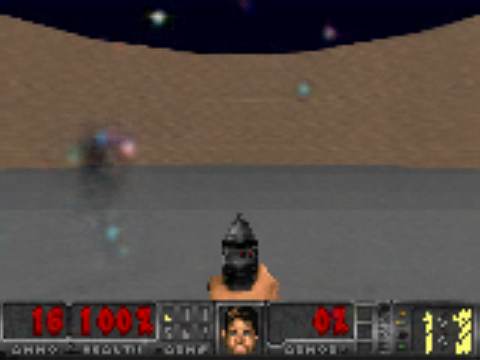} }}
    \subfloat{{\includegraphics[width=3.5cm]{./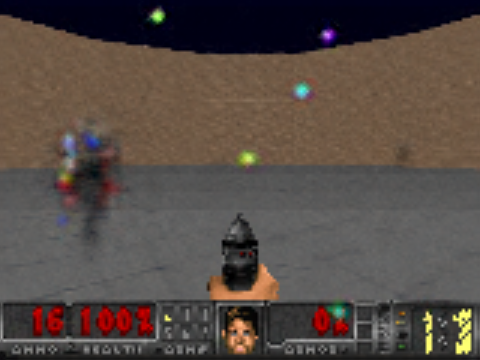} }}
    \subfloat{{\includegraphics[width=3.5cm]{./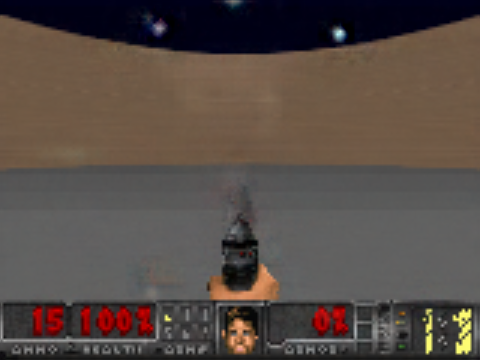} }}\\
    \subfloat[\centering Original]
    {{\includegraphics[width=3.5cm]{./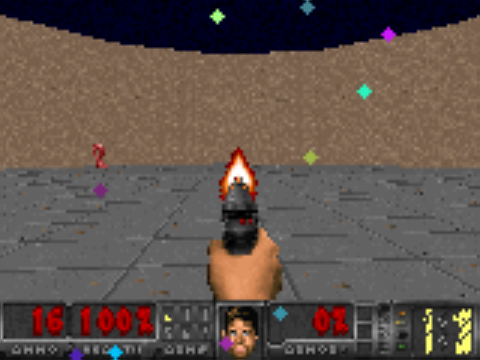} }}
    \subfloat[\centering Forward Modeling]{{\includegraphics[width=3.5cm]{./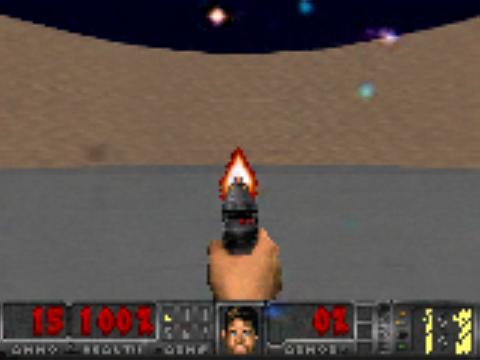} }}
    \subfloat[\centering Autoencoder]{{\includegraphics[width=3.5cm]{./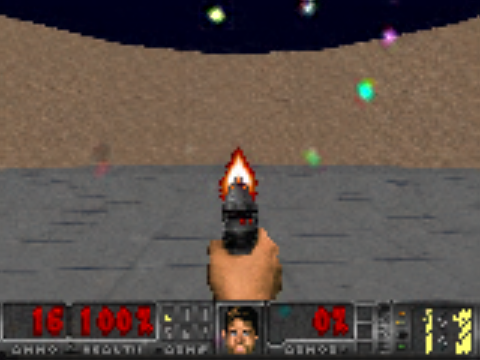} }}
    \subfloat[\centering Temporal Contrastive]{{\includegraphics[width=3.5cm]{./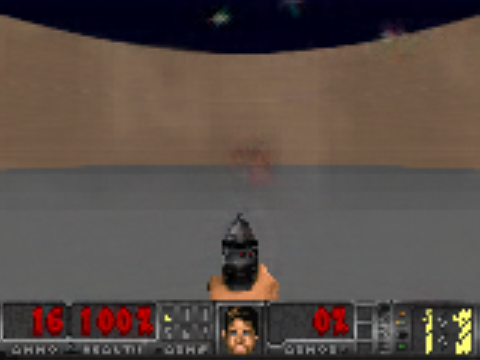} }}
    \caption{Decoded image reconstructions from different latent representation learning methods in the ViZDoom Defend the Center environment. We train a decoder on top of frozen representations trained with the three video pre-training approaches.
    }
    \label{fig:vizdoom-dtc-reconstruction}
    \vspace{-2mm}
\end{figure}

\paragraph{Performance with I.I.D. Noise.} 
We evaluate iid noise in the gridworld domain. We use the diamond-shaped exogenous noise that we used in~\cref{fig:gridworld_exps}, however, at each time step, we randomly sample the color and position of each diamond, independent of the agent's history.~\cref{fig:vizdoom_iid_noise}(c) shows the result for forward modeling and ~\cref{fig:vizdoom_iid_noise}(d) shows the same for ACRO. We also ablate the number of noisy diamonds. As expected, forward modeling and ACRO can learn a good policy while increase in the number of noisy diamonds (num noise var) only slightly decreases their performance. Additional results on performance with iid noise in the VizDoom basic environment are in ~\cref{appendix:exps}.

\begin{figure}
    \centering
    {\includegraphics[width=10cm]{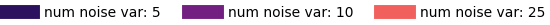}}\\
    \subfloat[\centering Forward Modeling]
    {\includegraphics[width=4.5cm]{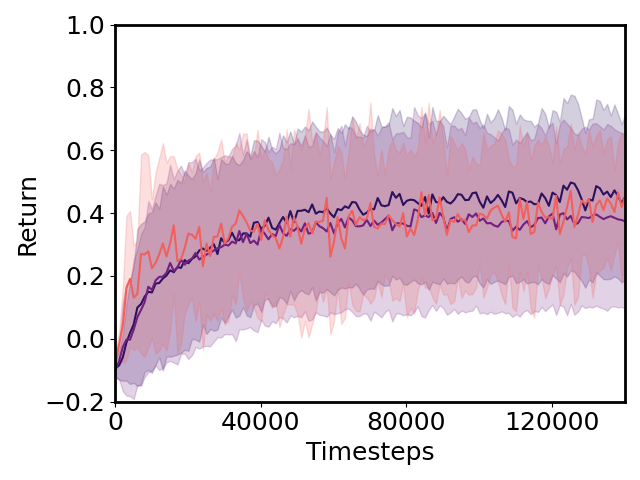} }
    \hspace{7mm}
    \subfloat[\centering ACRO]{{\includegraphics[width=4.5cm]{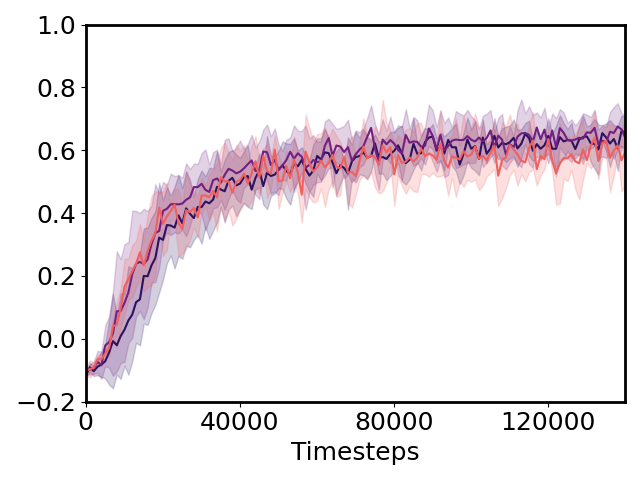} }}
    \caption{Experiments with iid noise for the Gridworld environment. `Num noise var' denotes the number of noisy diamonds constituting the exogenous noise.}
    \label{fig:vizdoom_iid_noise}
\end{figure}

\paragraph{Performance with exogenous noise.} 
We find that in the presence of exogenous noise for both GridWorld (\cref{fig:gridworld_exps}) and ViZDoom (\cref{fig:vizdoom_exps}, \cref{fig:vizdoom_dtc_exps}) environments, representations from forward modeling continue to succeed at RL albeit with a much lower performance in gridworld, whereas temporal contrastive representations completely fail. One hypothesis for the stark failure of temporal contrastive learning is that the agent can tell whether two observations are causal or not, by simply focusing on the noisy rhombus that moves in a predictive manner. Therefore, the contrastive learning loss can be reduced by focusing entirely on noise. Whereas, forward modeling is more robust as it needs to predict future observations, and the agent's state is more helpful for doing that than noise. This shows in the reconstructions (\cref{fig:gridworld-reconstruction}(d),~\cref{fig:vizdoom-reconstruction}(d), \cref{fig:vizdoom-dtc-reconstruction}(d)). As expected, the reconstructions for forward modeling continue to capture state-relevant information, whereas for temporal contrastive they focus on noise and miss relevant state information. We also formally prove that there exists an instance where forward modeling can recover the latent state for low-levels of exogenous noise, whereas temporal contrastive cannot do so for any level of exogenous noise. We defer readers to the instance construction and analysis in~\pref{app:temp-cont-fails-forward-wins}.

\looseness=-1

\begin{figure}   
    \centering
    {\includegraphics[width=0.7\textwidth]{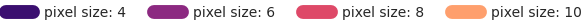}}\\
    \subfloat[\centering ACRO]{{\includegraphics[width=4.5cm]{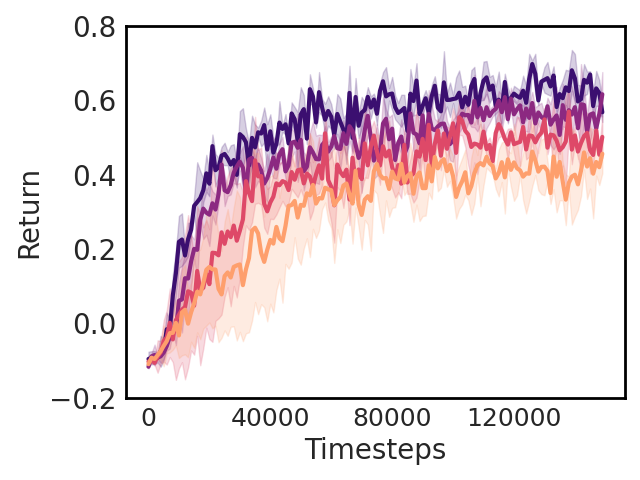} }}
    \subfloat[\centering Forward Modeling]{{\includegraphics[width=4.5cm]{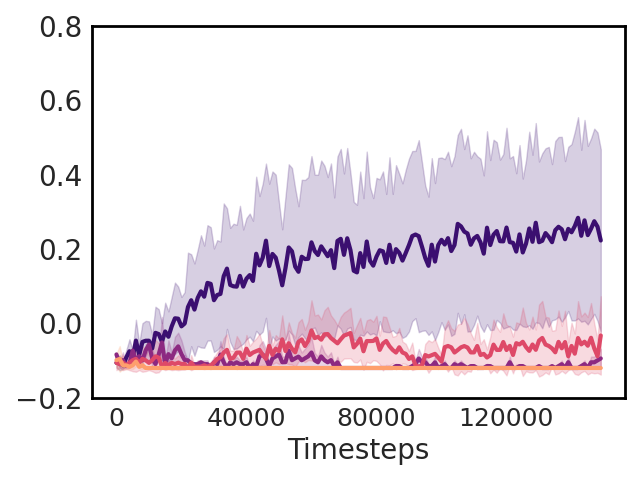} }}
    \caption{RL performance with varying size for exogenous noise in the GridWorld environment.}
    \label{fig:pixel_size}
\end{figure}

\paragraph{Harder Exogenous Noise.}  We increase the size of the exogenous noise variables (diamond shapes overlayed on the image) in the gridworld domain while keeping the number of exogenous variables fixed at 10 (\cref{fig:pixel_size}). We also increase the number of exogenous noise variables in the gridworld domain, while keeping their sizes fixed at 4 pixels (\cref{fig:gridworld_more_noise_main}). Both results show significant degradation in the performance of the forward modeling approach. 
This supports one of our main theoretical results that exogenous noise poses a challenge for video pre-training. Additional results for other video-based representation learning methods for this setting are shown in \cref{fig:gridworld_more_noise}.

\begin{figure}
    \centering
    {\includegraphics[width=13cm]{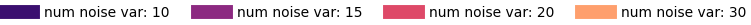}}\\
    \subfloat[\centering ACRO]{{\includegraphics[width=4.5cm]{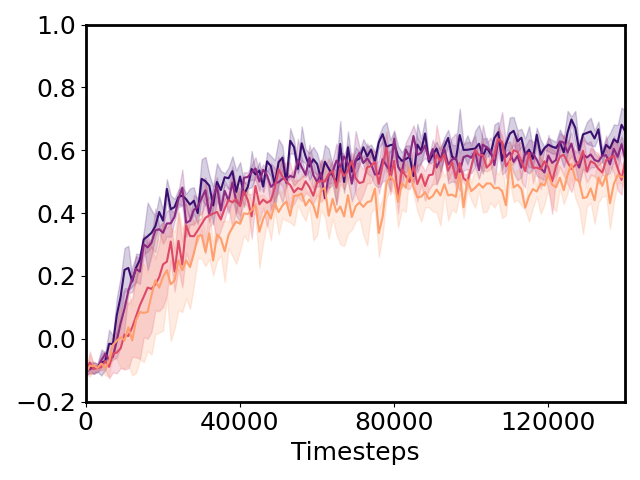} }}
    \subfloat[\centering Forward Modeling]{{\includegraphics[width=4.5cm]{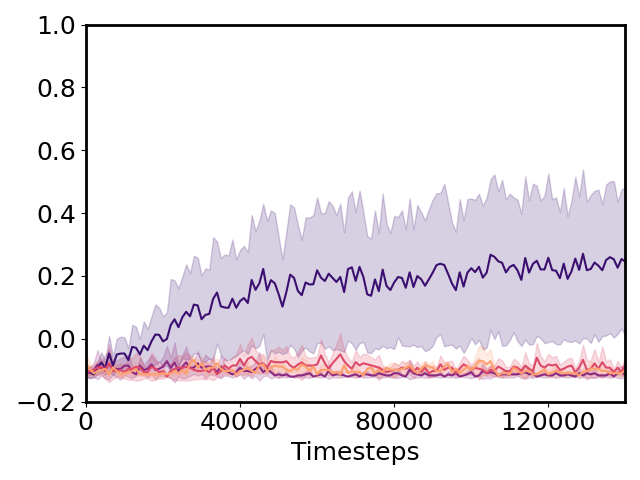} }}
    \caption{Gridworld experiments with exogenous noise of size 4 for a varying number of exogenous noise variables. Several representation learning algorithms using video data struggle to learn with a larger number of exogenous noise variables, whereas ACRO which uses trajectory data, still performs well.}
    \label{fig:gridworld_more_noise_main}
\end{figure}

\paragraph{Comparison with ACRO.} Finally, we draw a comparison between the performance of video-pretrained representation and ACRO which uses trajectory data. ACRO achieves the strongest performance across all tasks (\cref{fig:gridworld_exps}, \cref{fig:vizdoom_exps}, \cref{fig:vizdoom_dtc_exps}). This agrees with our theoretical finding (\pref{thm:exo_lower_bound}) that 
the sample complexity of video-based representations can be exponentially worse than trajectory-based representations in the presence of exogenous noise. Additionally, we also observe that as we increase the size of the exogenous noise elements in the observation space (\cref{fig:pixel_size}) or the number of exogenous noise variables (\cref{fig:gridworld_more_noise_main}), performance for forward-modeling based representation learning degrades more drastically compared to ACRO.

% Add Section 6: Conclusion
\section{Conclusion}
Learning models using unlabeled data such as audio, images, and video have had a transformative impact on deep learning research. An intriguing question is whether unlabeled data (videos) can have the same value for general reinforcement learning problems. Our work has sought to address this question both in theory and in practice. Theoretically, we have shown that certain representations trained on video can succeed when there is no exogenous noise. In the presence of exogenous noise, we present a lower bound establishing a separation between action-labeled pre-training for reinforcement learning and pre-training from unlabeled video data. We empirically validate our theoretical results on three visual domains. 

\paragraph{Acknowledgement.} We thank Sam Devlin, Ching-An Cheng, Andrey Kolobov, and Adith Swaminathan for useful discussions.

\newpage

\bibliography{arxiv2024}
\bibliographystyle{arxiv2024}

\newpage
\appendix

{
\hypersetup{hidelinks}
\addcontentsline{toc}{section}{Appendix} % Add the appendix text to the document TOC
\part{Appendix} % Start the appendix part
\parttoc
}
\allowdisplaybreaks

% Additional Related Work
\section{Additional Related Work}
\label{app:related-work}

\paragraph{Representation Learning for Reinforcement Learning} A line of research on recurrent state space models is essentially concerned with the next-frame approach, although typically with conditioning on actions.  Moreover, to model uncertainty in the observations, a latent variable with a posterior depending on the current observation (or even a sequence of future observations) is typically introduced.  \citep{ke2019learning} considered learning such a sequential prediction model which predicts observations and conditions on actions.  They used a latent variable with a posterior depending on future observations to model uncertainty.  These representations were used for model-predictive control and improved imitation learning.  Dreamer \citep{hafner2019dream, hafner2023dreamerv3} uses the next-frame objective but also conditions on actions.  The IRIS algorithm \citep{micheli2023iris} uses the next-frame objective but uses the transformer architecture, again conditioning on actions. The InfoPower approach \citep{bharadhwaj2022infopower} combines a one-step inverse model with a temporal contrastive objective.  \cite{sobal2022jepa} explored using semi-supervised objectives for learning representations in RL, yet used action-labeled data.  \cite{wang2022denoised} used a decoupled recurrent neural network approach to learn to extract endogenous states, but relied on action-labeled data to achieve the factorization. Deep Bisimulation for Control \citep{zhang2020bisim} introduced an objective to encourage observations with similar value functions to map to similar representations.

Self-prediction methods such as BYOL-explore \citep{guo2022byol} proposed learning reward-free representations for exploration, but depended on open-loop prediction of future states conditioned on actions .  An analysis paper studied a simplified action-free version of the self-prediction objective \citep{tang2023spr} and showed results in the absence of using actions, although this has not been instantiated empirically to our knowledge.  

A further line of work from theoretical reinforcemnt learning has examined provably efficient objectives for discovering representations.  \cite{efroni2022ppe} explored representation learning in the presence of exogenous noise, establishing a sample efficient algorithm.  However \cite{efroni2022ppe} and the closely related work on filtering exogenous noise required actions \citep{lamb2022guaranteed, islam2022acro}.  Other theoretical work on learning representations for RL has required access to action-labeled data \citep{misra2020kinematic}.  

\paragraph{Representation Learning from Videos} Self-supervised representation learning from videos has a long history. \cite{srivastava2015lstm} used recurrent neural networks with a pixel prediction objective on future frames.  \cite{parthasarathy2022self} explored temporal contrastive objectives for self-supervised learning from videos.  They also found that the features learned well aligned with human perceptual priors, despite the model not being explicitly trained to achieve such alignment.  \cite{aubret2023time} applied temporal contrastive learning to videos of objects being manipulated in a 3D space, showing that this outperformed standard augmentations used in computer vision. 

\paragraph{Using Video Data for Reinforcement Learning} The VIPER method \citep{escontrela2023video} uses a pre-trained autoregressive generative model over action-free expert videos as a reward signal for training an imitation learning agent. The Video Pre-training (VPT) algorithm \citep{baker2022vpt} trained an inverse kinematics model on a small dataset of Minecraft videos and used the model to label a large set of unlabeled Minecraft videos from the internet.  This larger dataset was then used for imitation learning and reinforcement learning for downstream tasks.  \cite{Zhao2022video} explicitly studied the challenges in using videos for representation learning in RL, identifying five key factors: task mismatch, camera configuration, visual feature shift, sub-optimal behaviors in the data, and robot morphology. \citet{goo2019one} learn reward functions for multi-step tasks from videos by leveraging a single video segmented with action labels (one-shot learning). \citet{sikchi2022ranking} propose a two-player ranking game between a policy and a reward function to satisfy pairwise performance rankings between behaviors. Their proposed method achieves state-of-the-art sample efficiency and can solve previously unsolvable tasks in the learning from observation (no actions) setting.

% Theory Appendix
\section{Proofs of Theoretical Statements}

We state our setting and general assumptions before presenting method specific results. We also include a table of notations in \cref{tab:notation}.
\begin{table}[!h]
    \centering
    \begin{tabular}{l|l}
    \hline
         \textbf{Notation} & \textbf{Description} \\
         \hline\\[.01cm]
         $[N]$ & Denotes the set $\{1, 2, \cdots, N\}$\\
         $\Delta(\Ucal)$ & Denotes the set of all distributions over a set $\Ucal$\\
         $\unf(\Ucal)$ & Uniform distribution over $\Ucal$\\
         $\supp(\PP)$ & Support of a distribution $\PP \in \Delta(\Ucal)$, i.e., $\supp(\PP) = \{x \in \Ucal \mid \PP(x) > 0\}$.\\
         $\Xcal$ & Observation space\\
         $\Scal$ & Latent endogenous state\\
         $\Acal$ & Action space\\
         $T: \Scal \rightarrow \Acal \rightarrow \Delta(\Scal)$ & Transition dynamics\\
         $R: \Scal \times \Acal \rightarrow [0, 1]$ & Reward function\\
         $\mu$ & Start state distribution \\
         $H$ & Horizon indicating the maximum number of actions per episode\\
         $\phi^\star: \Xcal \rightarrow \Scal$ & Endogenous state decoder\\ 
         \hline
    \end{tabular}
    \caption{Description for mathematical notations.}
    \label{tab:notation}
\end{table}

We are given a dataset $\Dcal = \cbr{(x^{(i)}_1, x^{(i)}_2, \cdots, x^{(i)}_H)}_{i=1}^n$ of $n$ independent and identically distributed (iid) unlabeled episodes. We will use the word video and unlabeled episodes interchangeably. We assume the underlying data distribution is $D$. We denote the probability of an unlabeled episode as $D(x_1, x_2, \cdots, x_H)$. We assume that $D$ is generated by a mixture of Markovian policies $\dpol$, i.e., the generative procedure for $D$ is to sample a policy $\pi \in \dpol$ with probability $\Theta_\pi$ and then generate an entire episode using it. For this reason, we will denote $D = \Theta \circ \dpol$ where $\Theta$ is the mixture distribution. We assume \emph{no direct knowledge} about either $\dpol$ or $\Theta$, other than that the set of policies in $\dpol$ are Markovian. We define the \emph{underlying} distribution over the action-labeled episode as $D(x_1, a_1, x_2, \cdots, x_H, a_H)$, of which the agent \emph{only} gets to observe the $(x_1, x_2, \cdots, x_H)$. We will use the notation $D$ to refer to any distribution that is derived from the above joint distribution.

We assume that observations encode time steps. This can be trivially accomplished by simply concatenating the time step information to the observation. This also implies that observations from different time steps are different. Because of this property, we can assume that the Markovian policies used to realize $D$ were time homogenous, i.e., they only depend on observation and not observation and timestep pair (this is because we include timesteps in the observation). Therefore, for all $h \in [H]$ and $k \in \NN$ we have:

\begin{equation}
    D(x_{h+k}=x' \mid x_h=x) = D(x_{k+1}=x' \mid x_1=x)
\end{equation}

We denote $D(x_h)$ to define the marginal distribution over an observation $x_h$, and $D(x_h, x_{h+k})$ to denote the marginal distribution over a pair of observations $(x_h, x_{h+k})$ in the episode. We similarly define $D(x_h, a
_h)$ as the distribution over observation action pairs $(x_h, a_h)$.

We assume that the video data has good coverage. This is stated formally below:
\begin{assumption}[State Coverage by $D$]\label{assum:concentrability} Given our policy class $\Pi$, there exists an $\etamin > 0$ such thatif $\sup_{\pi \in \Pi}\PP_\pi(s_h=s) > 0$ for some $s \in \Scal$, then  we assume $D\rbr{\edo(x_h)=s} \ge \etamin$.
\end{assumption}

In practice, \cref{assum:concentrability} can be satisfied since videos are more easily available than labeled episodes and we can hope that a large diverse collection of videos can provide reasonable coverage over the underlying state action space. E.g., for tasks like gaming, one can use hours of streaming data from many users. 

Further, we also assume that the data policy depends only on the endogenous state. Recall that for an observation $x \in \Xcal$, its endogenous state is given by $\phi^\star(x) \in \Scal$.
\begin{assumption}[Noise-Free Video Distribution] For any $h$, $\pi \in \dpol$, $x_h\in\supp~\PP_\pi$ and $a \in \Acal$, we have
    \begin{equation*}
        \pi(a \mid x_h) = \pi(a \mid \phi^\star(x_h)).
    \end{equation*}
\end{assumption}
\paragraph{Justification of Noise-Free Policy.} Typically, video data is created by humans. E.g., a human may be playing a game and the video data is collected by recording the user's screen. A user is unlikely to take actions relying on iid or exogenous noise in the observation process. Therefore, the collected data can be expected to obey the noise-free assumption.

\paragraph{Multi-step transition.} We choose to analyze a multi-step variant of standard temporal contrastive and forward modeling algorithms that train on a dataset of pairs of observations $(x, x')$ that can be variable time steps apart. As our proof will show, this gives the algorithms more expressibility and allows them to learn correct representations for some problems that their single-step variants (i.e., the observations are adjacent) or fixed time-step variants (i.e., the observations are fixed time steps apart) cannot solve. We will use the variable $k$ to denote the time steps by which these observations differ. Formally, we will call $(x, k, x')$ as a multi-step transition where $x$ was observed at some time step $h$, and $x'$ was observed at $h+k$. For the single-step variant of the algorithms, we have $k=1$. For the fixed multi-step variant, we have $k>1$ but $k$ is fixed. Finally, in the general multi-step variant, we will assume that $k$ is picked from $\unf([K])$ where $K$ is a fixed upper bound.  

\paragraph{Extending episode to $H+K$.} When using $k>1$, we may want to collect a multi-step transition $(x, k, x')$ where $x = x_H$ to allow learning state representation for time step $H$. However, at this point, we don't have time steps left to observe $x_{H+k}$. We alleviate this by assuming that we can allow an episode to run till $H+K$ if necessary. In practice, this is not a problem where the algorithm sets the horizon and not the environment. However, if we cannot go past $H$, then we can instead assume that all states are reachable by the time step $H-K$ and so their state representation can be learned when $x$ is selected at $x_{H-K}$. In our analysis ahead, we make the former setting that the episodes can be extended to $H+K$, but it can be easily rephrased to work with the other setting.

For both the forward model and the temporal contrastive approach, we assume access to a dataset $\pairD = \rbr{(x^{(i)}, k^{(i)}, x'^{(i)})}_{i=1}^n$ of pairs of observations. We define a few different distributions that can be used to generate this set. For a given $k \in [K]$, we define a distribution $\kpairdist$ over $k$-step separate observations as:
\begin{equation}
    \kpairdist\rbr{X=x, X'=x'} = \frac{1}{H}\sum_{h=1}^{H} D(x_h=x, x_{h+k}=x') 
\end{equation}

We can sample $(x, k, x') \sim \kpairdist(X, X')$ by sampling an episode $(x_1, x_2, \cdots, x_H) \sim D$, and then sampling a $h \sim \unf([H])$, and choosing $x = x_h$ and $x'=x_{h+k}$. 

We also define a distribution $\upairdist$ where we also sample $k$ uniformly over available choices:
\begin{equation}
    \upairdist\rbr{X=x, k, X'=x'} = \frac{1}{K} \kpairdist(x_h=x, x_{h+k}=x') 
\end{equation}

We can sample $(x, k, x') \sim \upairdist(X, X')$ by sampling an episode $(x_1, x_2, \cdots, x_H) \sim D$, and then sampling $h \in [H]$, and sampling $k \in [K]$, and choosing $(x_h, x_{h+k})$ as the selected pair.

We define a useful notation $\rho \in \Delta(\Xcal)$ as:
\begin{equation}
    \rho(X=x) = \frac{1}{H} \sum_{h=1}^H D(x_h=x).
\end{equation}

The distribution $\rho(X)$ is a good distribution to sample from as it covers states across all time steps. Finally, because of~\pref{assum:concentrability}, we have the following:
\begin{equation}\label{eqn:rho-bound}
    \forall s \in \Scal, \qquad \rho(s) \ge \frac{\etamin}{H}
\end{equation}
This is because we assume every state $s \in \Scal$, is visited at some time step $t$, and so we have $D(s_t=s) \ge \etamin$, and $\rho(s) = \frac{1}{H}\sum_{h=1}^H D(s_h=s) \ge \frac{1}{H}D(s_t=s) \ge \frac{\etamin}{H}$. 

It can be easily verified that for both $\kpairdist(X, X')$ and $\upairdist(X, X')$, their marginals over $X$ is given by $\rho(X)$. Both $\kpairdist$ and $\upairdist$ satisfy the noise-free property. We prove this using the next two Lemma.s

\begin{lemma}[Property of Noise-Free policy]\label{lem:exo-free-policy} Let $\pi$ be a policy such that for any $x \in \Xcal$, we have $\pi(a \mid x) = \pi(a \mid \edo(x))$. Then for any $h \in [H]$ and $k \in [K]$ we have $\PP_\pi(x_{h+k}=x' \mid x_h=x)$ only depend on $\edo(x)$ and this common value is defined by $\PP_\pi(x_{h+k} \mid s_h=\edo(x))$.
\end{lemma}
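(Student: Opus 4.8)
The plan is to fix an observation $x \in \Xcal$, set $s := \edo(x)$, and show that the conditional law of $x_{h+k}$ given $x_h = x$ can be written as an explicit expression that involves $x$ only through $s$. Since the decoder $\edo$ is deterministic by the disjoint emission property, conditioning on the event $x_h = x$ pins down the latent state $s_h = s$ exactly; the remaining content of $x$ (i.e.\ which particular observation within the block of $s$ was drawn) should be irrelevant to the future, and this is precisely what has to be verified. Equality of the conditionals for any two observations sharing the same decoded state then follows immediately, and the common value is named $\PP_\pi(x_{h+k} = x' \mid s_h = s)$.

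Concretely, I would marginalize over the intermediate actions $a_h, \dots, a_{h+k-1}$ and latent states $s_{h+1}, \dots, s_{h+k}$. Using the chain rule on the trajectory law under $\pi$, the quantity $\PP_\pi(x_{h+k} = x' \mid x_h = x)$ factors into a product of (i) action probabilities $\pi(a_{h+j} \mid s_{h+j})$, (ii) latent transition probabilities $T(s_{h+j+1} \mid s_{h+j}, a_{h+j})$, and (iii) a single emission term $q(x' \mid s_{h+k})$, with the product initialized at $s_h = s$. Three structural facts then make every factor depend on $x$ only through $s$: the noise-free assumption turns the first action probability $\pi(a_h \mid x)$ into $\pi(a_h \mid \edo(x)) = \pi(a_h \mid s)$, so $x$ enters only via $s$; the Block-MDP transition $T$ acts purely on latent states and actions; and the emission $q(x' \mid s_{h+k})$ depends on the past only through $s_{h+k}$. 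Summing over the intermediate variables therefore produces a function of $(s, x', k, h)$ alone.

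An equivalent and arguably cleaner route is induction on $k$, proved simultaneously for all $h$: the base case $k = 1$ reads $\PP_\pi(x_{h+1} = x' \mid x_h = x) = \sum_{a} \pi(a \mid s) \sum_{s'} T(s' \mid s, a)\, q(x' \mid s')$, which is manifestly a function of $s = \edo(x)$; the inductive step conditions on the intermediate observation $x_{h+1}$, applies the induction hypothesis to the tail $\PP_\pi(x_{h+k} = x' \mid x_{h+1})$, and collapses the sum over $x_{h+1}$ into a sum over $s_{h+1}$ using that $\PP_\pi(s_{h+1} \mid x_h = x)$ depends on $x$ only through $s$.

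I expect the main obstacle to be justifying that, after conditioning on $x_h = x$, the future evolution is independent of the past trajectory $(s_1, a_1, \dots, s_{h-1}, a_{h-1})$ and hence of anything in $x$ beyond $s_h = s$. This is exactly where the Markov property of the Block MDP combines with the noise-free assumption: the Markov property yields conditional independence of the future from the past given $(s_h, a_h)$, while the noise-free assumption guarantees $a_h \sim \pi(\cdot \mid s)$ rather than depending on the full observation $x$. Making this forgetting step precise — rather than silently assuming observation-level Markovity, which fails for arbitrary policies — is the crux; everything else is bookkeeping of the marginalization.
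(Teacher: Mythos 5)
Your proposal is correct and matches the paper's argument: the paper also proves the lemma by induction on $k$, with the base case $\PP_\pi(x_{h+1}=x'\mid x_h=x)=\sum_a T(x'\mid \edo(x),a)\,\pi(a\mid \edo(x))$ following from the noise-free assumption and the inductive step following from Markovianity of $\pi$ (the paper peels off the \emph{last} step at $x_{h+k-1}$ rather than the first step at $x_{h+1}$, but this is immaterial). Your remark that observation-level Markovity must be justified rather than assumed is well taken and is exactly where the paper invokes the Markovian-policy hypothesis.
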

\begin{proof} The proof is by induction on $k$. For $k=1$ we have:
\begin{align*}
    \PP_\pi(x_{h+1}=x' \mid x_h=x) &= \sum_{a \in \Acal}T(x' \mid x, a) \pi(a \mid x_h=x) = \sum_{a \in \Acal} T(x' \mid \edo(x), a) \pi(a \mid x_h=\edo(x)),
\end{align*}
and as the right hand side only depends on $\edo(x)$, the base case is proven. For the general case, we have:
\begin{align*}
\PP_\pi(x_{h+k}=x' \mid x_h=x) &= \sum_{\tilde{x} \in \Xcal}\PP_\pi(x_{h+k}=x', x_{h+k-1}=\tilde{x} \mid x_h=x)\\
&= \sum_{\tilde{x} \in \Xcal}\PP_\pi(x_{h+k}=x' \mid x_{h+k-1}=\tilde{x}) \PP_\pi(x_{h+k-1}=\tilde{x} \mid x_h=x)\\
&= \sum_{\tilde{x} \in \Xcal}\PP_\pi(x_{h+k}=x' \mid x_{h+k-1}=\tilde{x}) \PP_\pi(x_{h+k-1}=\tilde{x} \mid x_h=\edo(x)),
\end{align*}
where the second step uses the fact that $\pi$ is Markovian and the last step uses the inductive case for $k-1$.
\end{proof}

\begin{lemma}[Distribution over Pairs]\label{lem:exo-free-conditionals} Let $k \in [K]$, $x \in \supp~\rho(X)$, then the distribution $\kpairdist(X' \mid x)$ only depends on $\edo(x)$. This allows us to define $\kpairdist(X' \mid \edo(x))$ as this common value.
Similarly, the distribution $\upairdist(X' \mid x, k)$ depends only on $\edo(x)$ and $k$. We define this common value as $\upairdist(X' \mid \edo(x), k)$.
\end{lemma}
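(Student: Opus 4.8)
The plan is to prove the two claims of \cref{lem:exo-free-conditionals} by reducing the joint pair-distribution to the conditional transition kernel handled in \cref{lem:exo-free-policy}, and then marginalizing over the mixture of policies that generates $D$. First I would unpack the definition of $\kpairdist(X' \mid x)$ via Bayes' rule: since $\kpairdist(X=x, X'=x') = \tfrac{1}{H}\sum_{h=1}^H D(x_h=x, x_{h+k}=x')$, I can write the conditional as a weighted average of the per-timestep transition conditionals $D(x_{h+k}=x' \mid x_h=x)$, with weights proportional to $D(x_h=x)$. Concretely,
\begin{equation*}
\kpairdist(X'=x' \mid X=x) = \frac{\sum_{h=1}^H D(x_h=x)\, D(x_{h+k}=x' \mid x_h=x)}{\sum_{h=1}^H D(x_h=x)}.
\end{equation*}
So the task reduces to showing that each factor $D(x_{h+k}=x' \mid x_h=x)$ depends on $x$ only through $\edo(x)$.

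For that factor I would invoke the structure of $D = \Theta \circ \dpol$: conditioning on the full episode-generating process, $D(x_{h+k}=x' \mid x_h=x)$ is itself a mixture over the data-collection policies $\pi \in \dpol$, where the posterior mixing weights over $\pi$ given $x_h=x$ also depend on $x$ only through $\edo(x)$ (because the emission $q(x \mid s)$ and the visitation probabilities factor through the latent state, using the disjoint emission property and the Markovian, noise-free structure of each $\pi$). Each mixture component $\PP_\pi(x_{h+k}=x' \mid x_h=x)$ depends on $x$ only through $\edo(x)$ by \cref{lem:exo-free-policy}. Since both the components and the mixing weights depend on $x$ solely through $\edo(x)$, so does their combination, and hence so does $\kpairdist(X' \mid x)$. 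This justifies writing the common value as $\kpairdist(X' \mid \edo(x))$ for any $x \in \supp~\rho(X)$ (the support restriction guarantees the denominator above is nonzero, so the conditional is well-defined).

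For the second claim about $\upairdist(X' \mid x, k)$, I would observe that once we also condition on a fixed $k$, the definition $\upairdist(X=x, k, X'=x') = \tfrac{1}{K}\kpairdist(X=x, X'=x')$ means the conditional $\upairdist(X' \mid x, k)$ is exactly $\kpairdist(X' \mid x)$ — the extra uniform factor $\tfrac{1}{K}$ over $k$ cancels in the conditional. Thus the first part immediately gives that $\upairdist(X' \mid x, k)$ depends on $x$ only through $\edo(x)$ (and, of course, on $k$), justifying the notation $\upairdist(X' \mid \edo(x), k)$.

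The step I expect to be the main obstacle is the mixture argument for the posterior over policies: carefully verifying that the weights $\PP(\pi \mid x_h = x)$ in the policy mixture factor through $\edo(x)$ rather than depending on the raw observation $x$. This requires using that the visitation distribution $\PP_\pi(x_h=x)$ equals $q(x \mid \edo(x))$ times a state-visitation term $\PP_\pi(s_h=\edo(x))$, so that the emission factor $q(x \mid \edo(x))$ appears identically in every component and cancels from the posterior. The remaining pieces — the Bayes-rule decomposition and the cancellation of the $\tfrac{1}{K}$ factor — are routine, so the crux is ensuring the latent-state sufficiency propagates cleanly through the mixture before appealing to \cref{lem:exo-free-policy}.
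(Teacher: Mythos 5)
Your proposal is correct and follows essentially the same route as the paper: decompose $\kpairdist$ through the policy mixture, apply \cref{lem:exo-free-policy} to each component, and use the fact that the emission factor $q(x\mid\edo(x))$ appears identically in numerator and denominator and cancels, so the conditional depends on $x$ only through $\edo(x)$; the paper simply performs this as a single joint-over-marginal ratio rather than your nested conditioning over $h$ and $\pi$. The one point to make explicit is that the same cancellation is also needed for your outer weights $D(x_h=x)/\sum_{h'}D(x_{h'}=x)$ over time steps, not only for the policy posterior --- but the identical identity $D(x_h=x)=q(x\mid\edo(x))\sum_{\pi\in\dpol}\Theta_\pi\PP_\pi(s_h=\edo(x))$ handles it, so your reduction ``to showing each factor $D(x_{h+k}=x'\mid x_h=x)$ depends only on $\edo(x)$'' goes through once that is noted.
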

\begin{proof} For any $k$ we have:
\begin{align*}
    D_k(X=x, X'=x') &= \frac{1}{H} \sum_{h=1}^H D(x_h=x, x_{h+k}=x') \\
    &= \frac{1}{H} \sum_{h=1}^H \sum_{\pi \in \dpol}\Theta_\pi \PP_\pi(x_h=x, x_{h+k}=x') \\
    &= \frac{1}{H} \sum_{h=1}^H \sum_{\pi \in \dpol}\Theta_\pi \PP_\pi(x_h=x) \PP(x_{h+k}=x' \mid x_h=x) \\
    &= \frac{1}{H} \sum_{h=1}^H \sum_{\pi \in \dpol}\Theta_\pi \PP_\pi(x_h=x) \PP_\pi(x_{h+k}=x' \mid s_h=\edo(x)), \quad \mbox{(using~\pref{lem:exo-free-policy})}\\
    &= \frac{q(x \mid \edo(x))}{H} \sum_{h=1}^H \sum_{\pi \in \dpol}\Theta_\pi \PP_\pi(s_h=\edo(x)) \PP_\pi(x_{h+k}=x' \mid s_h=\edo(x))
\end{align*}
The marginal $D_k(X=x)$ is given by:
\begin{align*}
D_k(X=x) = \frac{1}{H} \sum_{h=1}^H \sum_{\pi \in \dpol}\Theta_\pi q(x \mid \edo(x))\PP_\pi(s_h=\edo(x)) = \frac{q(x \mid \edo(x))}{H}\sum_{h=1}^H D_k(s_h=\edo(x)).
\end{align*}
The conditional $D_k(X'=x' \mid X=x)$ is given by:
\begin{align*}
    D_k(X'=x' \mid X=x) &= \frac{D_k(X=x, X'=x')}{D_k(x)}\\
    &= \frac{\sum_{h=1}^H \sum_{\pi \in \dpol}\Theta_\pi \PP_\pi(s_h=\edo(x)) \PP_\pi(x_{h+k}=x' \mid s_h=\edo(x))}{\sum_{h=1}^H D_k(s_h=\edo(x))}
\end{align*}
Therefore, the conditional $D_k(X'=x' \mid X=x$ only depends on $\edo(x)$, and we define this common value as $D_k(X'=x' \mid s=\edo(x))$.

The proof for $\upairdist$ is similar. We can use the property of $\kpairdist$ that we have proven to get:
\begin{align*}
    \upairdist(X'=x' \mid X=x, k) &= \frac{\upairdist(X=x, k, X'=x')}{\sum_{\tilde{x} \in \Xcal}\upairdist(X=x, k, X'=\tilde{x})}\\ 
    &= \frac{\kpairdist(X=x, X'=x')}{\sum_{\tilde{x} \in \Xcal}\kpairdist(X=x, X'=\tilde{x})}\\ 
    &= \frac{\kpairdist(X'=x' \mid X=x)}{\sum_{\tilde{x} \in \Xcal}\kpairdist(X'=\tilde{x} \mid X=x)}\\ 
    &= \frac{\kpairdist(X'=x' \mid X=\edo(x))}{\sum_{\tilde{x} \in \Xcal}\kpairdist(X'=\tilde{x} \mid X=\edo(x))}. 
\end{align*}
Therefore, $\upairdist(X'=x' \mid X=x, k)$ only depends on $\edo(x)$. We will define the common values as $\upairdist(X'=x' \mid s=\edo(x), k)$.
\end{proof}

\pref{lem:exo-free-conditionals} allows us to define $\kpairdist(x' \mid \edo(x))$ and $\upairdist(x' \mid \edo(x), k)$, as the distribution only depends on the latent state.

\subsection{Upper Bound for the Forward Model Baseline}

Let $\pairD = \{(x^{(i)},k^{(i)}, x'^{(i)})\}_{i=1}^n$ be a pair of iid multi-step observations. We will collect this dataset in one of three ways:

\begin{enumerate}
    \item Single step $(k=1)$, in this case we will sample $(x^{(i)}, x'^{(i)}) \sim D_k(X, X')$. As explained before, we can get this sample using the episode data. We save $(x^{(i)}, k, x'^{(i)})$ as our sample.
    \item Fixed multi-step. We use a fixed $k > 1$, and sample $(x^{(i)}, x'^{(i)}) \sim D_k(X, X')$. We save $(x^{(i)}, k, x'^{(i)})$ as our sample.
    \item Variable multi-step. We sample $(x, k, x') \sim \upairdist(X, k, X')$ and use it as our sample.
\end{enumerate}

We will abstract these three choices using a general notion of $\pairdist \in \Delta(\Xcal \times [K] \times \Xcal)$. In the first two cases, we assume we have point-mass distribution over $k$ and given this $k$, we sample from $D_k(X, X')$. We will assume $(x^{(i)}, k^{(i)}, x'^{(i)}) \sim \pairdist$. We can create $\pairD$ from the dataset $\Dcal$ of $n$ episodes sampled from $D$ using the sampling procedures explained earlier. Note that as marginals over both $D_k(X)$ and $\upairdist(X)$ is $\rho(X)$, therefore, the marginals over $\pairdist(X)$ is also $\rho(X)$. Additionally, we will define $\pairdist(k)$ as the marginal over $k$ which is either point-mass in the first two sampling procedures and $\unf([K])$ in the third procedure.

We assume access to two function classes. The first is a decoder class $\Phi_N: \Xcal \rightarrow [N]$ where $N$ is a given number that satisfies $N \ge |\Scal|$. The second is a conditional probability class $\Fcal: [N] \times [K] \rightarrow \Delta(\Xcal)$. 

\begin{assumption}(Realizability of $\Phi$ and $\Fcal$)\label{assum:realizability-block-mdp-forward} We assume that there exists $\phi^\circ \in \Phi_N$ and $f^\circ \in \Fcal$ such that $f^\circ(x' \mid \phi^\circ(x), k) = \pairdist(x' \mid x, k) = \pairdist(x' \mid \edo(x), k)$ for all $(x, k) \sim \pairdist(\cdot, \cdot)$.
\end{assumption}

This assumption firstly is non-vacuous as $\pairdist(x' \mid x) = \pairdist(x' \mid \edo(x))$, and therefore, we can apply a bottleneck function $\phi$ and still assume realizability. For example, we can assume that $\tilde{\phi}$ is the same as $\edo$ up to the relabeling of its output, and $\tilde{f}(x' \mid i) = \pairdist(x' \mid s)$.

Let $\hat{f}, \hat{\phi}$ be the empirical solution to the following maximum likelihood problem.
\begin{equation}
    \hat{f}, \hat{\phi} = \arg\max_{f \in \Fcal, \phi \in \Phi_N} \frac{1}{n} \sum_{i=1}^n \ln f\rbr{x'^{(i)} \mid \phi(x^{(i)}), k^{(i)}}
\end{equation}

Note that when $k$ is fixed (we sample from $\kpairdist$), then information theoretically there is no advantage of condition on $k$ and it can be dropped from optimization.

As we are in a realizable setting (\pref{assum:realizability-block-mdp-forward}), we can use standard maximum likelihood guarantees to get the following result.
\begin{proposition}[Generalization Bound] Fix $\delta \in (0, 1)$, then with probability at least $1-\delta$, we have:
\begin{equation*}
    \EE_{(x, k) \sim \pairdist}\sbr{\TV{\pairdist(X' \mid x, k) - \hat{f}(X' \mid \hat{\phi}(x), k)}^2} \le \Delta^2(n; \delta),
\end{equation*}
where $\Delta^2(n; \delta) = \frac{2}{n} \ln\rbr{\frac{|\Phi|\cdot|\Fcal|}{\delta}}$.
\end{proposition}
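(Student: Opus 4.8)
The plan is to reduce this generalization bound to a standard maximum likelihood estimation (MLE) guarantee for conditional density estimation, since we are in a realizable setting by \pref{assum:realizability-block-mdp-forward}. The core observation is that the loss $\lossfor$ being minimized in \Eqref{eqn:loss-for} is exactly the negative log-likelihood of the observed pairs $(x'^{(i)} \mid \phi(x^{(i)}), k^{(i)})$ under the model $f$, where the data $\rbr{x^{(i)}, k^{(i)}, x'^{(i)}}$ are sampled iid from $\pairdist$. We can therefore treat the composite map $(x,k) \mapsto f(\cdot \mid \phi(x), k)$ as a single conditional density model drawn from the product class $\Fcal \times \Phi_N$, whose cardinality is $|\Fcal| \cdot |\Phi_N|$.

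First I would set up the MLE framework precisely: the covariate is the pair $(x,k) \sim \pairdist$, the response is $x' \sim \pairdist(\cdot \mid x, k)$, and the hypothesis class is $\cbr{(x,k) \mapsto f(\cdot \mid \phi(x), k) : f \in \Fcal, \phi \in \Phi_N}$. By \pref{assum:realizability-block-mdp-forward}, the true conditional $\pairdist(X' \mid x, k) = \pairdist(X' \mid \edo(x), k)$ is realized by $(f^\circ, \phi^\circ)$ inside this class, so realizability holds exactly. Next I would invoke the standard finite-class MLE generalization theorem (of the type in, e.g., \citet{agarwal2020flambe}), which states that with probability at least $1-\delta$ the empirical maximizer $(\hat f, \hat\phi)$ satisfies a bound on the expected squared Hellinger (or, after a standard conversion, total-variation) distance between the fitted conditional and the truth, of the form $\EE_{(x,k)\sim\pairdist}\sbr{\mathrm{dist}^2\rbr{\pairdist(X'\mid x,k), \hat f(X'\mid\hat\phi(x),k)}} \lesssim \frac{1}{n}\ln\frac{|\Fcal|\cdot|\Phi_N|}{\delta}$. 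The factor of $2$ in $\Delta^2(n;\delta)$ comes from the usual constants in such bounds (and from the relation $\TV{\cdot}^2 \le 2\, \mathrm{H}^2(\cdot)$ when passing from Hellinger to total variation).

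The only mildly technical step is making sure the log-likelihood is well-defined on the appropriate support and that the class $\Fcal$ consists of genuine conditional distributions in $\Delta(\Xcal)$ so that the MLE deviation inequality applies verbatim; this is immediate from the definition $\Fcal = \cbr{f : [N] \times [K] \to \Delta(\Xcal)}$. I expect no serious obstacle here, since all the work has been front-loaded into the preliminary lemmas: \pref{lem:exo-free-conditionals} is what guarantees that conditioning on $\phi(x)$ rather than the full observation $x$ does not break realizability, and \pref{assum:realizability-block-mdp-forward} is stated precisely so that the standard MLE template applies. The proof is essentially a direct citation of the finite-class realizable MLE bound applied to the product hypothesis class, with the stated $\Delta^2(n;\delta) = \frac{2}{n}\ln\rbr{\frac{|\Phi|\cdot|\Fcal|}{\delta}}$ following from the union bound over $|\Fcal|\cdot|\Phi_N|$ hypotheses. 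The main thing to verify carefully is simply that the data are iid under $\pairdist$ (which holds by construction of $\pairD$) so that the concentration argument is valid.
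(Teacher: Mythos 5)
Your proposal is correct and matches the paper's approach exactly: the paper's entire proof of this proposition is a citation to the standard realizable maximum-likelihood generalization bound (Chapter 7 of \citet{geer2000empirical}), applied to the product hypothesis class $\Fcal \times \Phi_N$ with the union bound giving the $\ln\rbr{|\Fcal|\cdot|\Phi|/\delta}$ term, which is precisely what you describe. Your additional remarks on realizability via \pref{assum:realizability-block-mdp-forward}, the iid sampling from $\pairdist$, and the Hellinger-to-TV conversion are the right points to verify and are consistent with how the paper sets up the surrounding lemmas.
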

For proof see Chapter 7 of~\citet{geer2000empirical}.

Finally, we assume that the forward modeling objective is expressive to allow the separation of states. While, this seems like assuming that the objective works, our goal is to establish a formal notion of the margin so we can verify it later in different settings to see when it holds.
\begin{assumption}(Forward Modeling Margin). We assume there exists a $\formargin \in (0, 1)$ such that:
    \begin{equation*}
       \inf_{s_1, s_2 \in \Scal, s_1 \ne s_2} \EE_{k \sim \pairdist}\sbr{\TV{\pairdist(X' \mid s_1, k) - \pairdist(X' \mid s_2, k)}} \ge \formargin
    \end{equation*}
\end{assumption}
Note that this defines two types of margin depending on $\pairdist$. When $k$ is a fixed value, the margin is given by:
\begin{equation*}
    \formargin^{(k)} = \inf_{s_1, s_2 \in \Scal, s_1 \ne s_2} \TV{\pairdist(X' \mid s_1, k) - \pairdist(X' \mid s_2, k)}.
\end{equation*}
When we sample $k \sim \unf([K])$ then the margin is given by:
\begin{equation*}
    \formargin^{(u)} = \inf_{s_1, s_2 \in \Scal, s_1 \ne s_2} \frac{1}{K}\sum_{k=1}^K\TV{\pairdist(X' \mid s_1, k) - \pairdist(X' \mid s_2, k)}.
\end{equation*}
We will use the abstract notion $\formargin$ for forward margin which will be equal to $\formargin^{(k)}$ or $\formargin^{(u)}$ depending on our sampling procedure. It is easy to see that $\formargin^{(u)} = \frac{1}{K}\sum_{k=1}^K \formargin^{(k)}$.

We are now ready to state our first main result.
\begin{proposition}[Recovering Endogenous State.]\label{prop:coupling-forward-modeling} Fix $\delta \in (0, 1)$, then with probability at least $1-\delta$ we learn $\phihat$ that satisfies:
\begin{equation*}
    \PP_{x_1, x_2 \sim \rho}\rbr{\phi^\star(x_1) \ne \phi^\star(x_2) \land \hat{\phi}(x_1) = \hat{\phi}(x_2)} \le \frac{2\Delta(n, \delta)}{\formargin}.
\end{equation*}
\end{proposition}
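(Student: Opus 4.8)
The plan is to show that whenever $\hat\phi$ collides two observations whose true latent states differ, the learned forward model $\hat f$ would be forced to approximate, from a single abstract state, two next-observation distributions that are separated by the margin $\formargin$ --- something the maximum-likelihood generalization bound rules out in aggregate. The argument is a triangle-inequality-plus-expectation calculation bridged by the Generalization Bound.

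First I would introduce the per-observation error
\[
  g(x) := \EE_{k \sim \pairdist}\TV{\pairdist(X' \mid x, k) - \hat f(X' \mid \hat\phi(x), k)},
\]
and use \cref{lem:exo-free-conditionals} to identify $\pairdist(X' \mid x, k)$ with $\pairdist(X' \mid \edo(x), k)$, so that $g$ compares $\hat f$ against the latent-state conditional. Since under $\pairdist$ the observation $x$ is drawn from $\rho$ independently of $k$ (the $x$-marginal of both $\kpairdist$ and $\upairdist$ is $\rho$, as noted above), the Generalization Bound controls the second moment $\EE_{x\sim\rho}\EE_{k}\TV{\cdots}^2 \le \Delta^2(n,\delta)$. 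Applying $\EE[Y]\le\sqrt{\EE[Y^2]}$ first over $k$ and then over $x$ (Jensen/Cauchy--Schwarz, twice) converts this into the first-moment bound $\EE_{x\sim\rho}\,g(x) \le \Delta(n,\delta)$.

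Next comes the pointwise step. Fix $x_1,x_2$ in the collision event $E=\{\edo(x_1)\neq\edo(x_2)\ \wedge\ \hat\phi(x_1)=\hat\phi(x_2)\}$ and write $s_i=\edo(x_i)$. Because $\hat\phi(x_1)=\hat\phi(x_2)$, the conditionals $\hat f(X'\mid\hat\phi(x_i),k)$ coincide, so applying the triangle inequality for $\TV{\cdot}$ at each $k$ and averaging over $k$ gives
\[
  \formargin \;\le\; \EE_{k}\TV{\pairdist(X'\mid s_1,k)-\pairdist(X'\mid s_2,k)} \;\le\; g(x_1)+g(x_2),
\]
where the first inequality is the Margin Assumption (using $s_1\neq s_2$). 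Hence $\formargin\,\1[E]\le g(x_1)+g(x_2)$ holds for every pair.

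Finally I would take expectation over independent draws $x_1,x_2\sim\rho$: since $g\ge 0$ and the two draws are identically distributed, $\formargin\,\PP_{x_1,x_2\sim\rho}(E)\le 2\,\EE_{x\sim\rho}\,g(x)\le 2\Delta(n,\delta)$, and dividing by $\formargin$ yields the claim. I expect the only delicate point to be the second-moment-to-first-moment conversion, together with confirming that the $x$-marginal of $\pairdist$ is exactly $\rho$ and independent of $k$ across all three sampling schemes; the remaining triangle-inequality and expectation steps are routine.
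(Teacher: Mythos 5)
Your proposal is correct and follows essentially the same route as the paper's proof: both arguments pass the triangle inequality through $\hat f(X'\mid\hat\phi(x),k)$, which coincides for $x_1$ and $x_2$ on the collision event, lower-bound the resulting latent total-variation distance by $\formargin$ via the margin assumption, and upper-bound the aggregate model error by $\Delta(n,\delta)$ using the MLE generalization bound together with Cauchy--Schwarz/Jensen. The only difference is bookkeeping --- you drop the collision indicator pointwise and convert second moments to first moments up front, whereas the paper retains the indicator and extracts a factor $b\le 1$ via Cauchy--Schwarz --- and both yield the identical bound $2\Delta(n,\delta)/\formargin$.
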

\begin{proof} We start with a coupling argument where we sample $x_1, x_2$ independently from $\pairdist(X)$ which is the same as $\rho(X)$.
\begin{align*}
    &\EE_{x_1, x_2 \sim \pairdist, k \sim \pairdist}\sbr{\one\cbr{\hat{\phi}(x_1) = \hat{\phi}(x_2)}\TV{\pairdist(X' \mid x_1, k) - \pairdist(X' \mid x_2, k)}}\\
    &\le \EE_{x_1, x_2 \sim \pairdist, k \sim \pairdist}\sbr{\one\cbr{\hat{\phi}(x_1) = \hat{\phi}(x_2)}\TV{\hat{f}(X' \mid \hat{\phi}(x_1), k) - \pairdist(X' \mid x_1, k)}}\\ 
    &+ \EE_{x_1, x_2 \sim \pairdist, k \sim \pairdist}\sbr{\one\cbr{\hat{\phi}(x_1) = \hat{\phi}(x_2)}\TV{\hat{f}(X' \mid \hat{\phi}(x_1), k) - \pairdist(X' \mid x_2, k)}}
\end{align*}
We bound these two terms separately
\begin{align*}
    &\EE_{x_1, x_2 \sim \pairdist, k \sim \pairdist}\sbr{\one\cbr{\hat{\phi}(x_1) = \hat{\phi}(x_2)}\TV{\hat{f}(X' \mid \hat{\phi}(x_1), k) - \pairdist(X' \mid x_1, k))}} \\
    &\le \sqrt{\EE_{x_1, x_2 \sim \pairdist, k \sim \pairdist}\sbr{\one\cbr{\hat{\phi}(x_1) = \hat{\phi}(x_2)}}} \cdot \sqrt{\EE_{x_1, x_2 \sim \pairdist, k \sim 
 \pairdist}\sbr{\TV{\hat{f}(X' \mid \hat{\phi}(x_1), k) - \pairdist(X' \mid x_1, k))}^2}}\\
    &= \sqrt{\EE_{x_1, x_2 \sim \pairdist}\sbr{\one\cbr{\hat{\phi}(x_1) = \hat{\phi}(x_2)}}} \cdot \sqrt{\EE_{(x, k) \sim \pairdist}\sbr{\TV{\hat{f}(X' \mid \hat{\phi}(x) - \pairdist(X' \mid x))}^2}}\\
    &\le b \cdot \Delta,
\end{align*}
where $b = \sqrt{\EE_{x_1, x_2 \sim \pairdist}\sbr{\one\cbr{\hat{\phi}(x_1) = \hat{\phi}(x_2)}}}$ and the second step uses Cauchy-Schwarz inequality. It is straightforward to verify that $b \in [0, 1]$. We bound the second term similarly
\begin{align*}
   &\EE_{x_1, x_2 \sim \pairdist, k \sim \pairdist}\sbr{\one\cbr{\hat{\phi}(x_1) = \hat{\phi}(x_2)}\TV{\hat{f}(X' \mid \hat{\phi}(x_1), k) - \pairdist(X' \mid x_2, k)}} \\
   &=\EE_{x_1, x_2 \sim \pairdist}\sbr{\one\cbr{\hat{\phi}(x_1) = \hat{\phi}(x_2)}\TV{\hat{f}(X' \mid \hat{\phi}(x_2), k) - \pairdist(X' \mid x_2, k)}}\\
   &\le b \cdot \Delta,
\end{align*}
where the second step uses the crucial coupling argument that we can replace $x_1$ with $x_2$ because of the indicator $\one\cbr{\hat{\phi}(x_1) = \hat{\phi}(x_2)}$, and the last step follows as we reduce it to the first term except we switch the names of $x_1$ and $x_2$. Combining the two upper bounds we get:
\begin{align*}
    \EE_{x_1, x_2 \sim \pairdist, k \sim \pairdist}\sbr{\one\cbr{\hat{\phi}(x_1) = \hat{\phi}(x_2)}\TV{\pairdist(X' \mid x_1, k) - \pairdist(X' \mid x_2, k)}} \le 2 b\cdot \Delta
\end{align*}
or, equivalently, 
\begin{align*}
    \EE_{x_1, x_2 \sim \pairdist}\sbr{\one\cbr{\hat{\phi}(x_1) = \hat{\phi}(x_2)}\underbrace{\EE_{k \sim \pairdist}\sbr{\TV{\pairdist(X' \mid x_1, k) - \pairdist(X' \mid x_2, k)}}}_{\defeq \Gamma(x_1, x_2)}} \le 2 b\cdot \Delta
\end{align*}

Let $\Gamma(x_1, x_2) = \EE_{k \sim \pairdist}\sbr{\TV{\pairdist(X' \mid x_1, k) - \pairdist(X' \mid x_2, k)}}$.
For any two observations, if $\edo(x_1) = \edo(x_2)$, then $\TV{\pairdist(X' \mid x_1) - \pairdist(X' \mid x_2)} = 0$, and therefore, $\Gamma(x_1, x_2)=0$ because of~\pref{lem:exo-free-conditionals}. Otherwise, $\Gamma(x_1, x_2)$ is at least $\formargin$, by~\pref{assum:realizability-block-mdp-forward}. Combining these two observations we get:
\begin{align*}
    \Gamma(x_1, x_2) \ge \formargin \one\{\phi^\star(x_1) \ne \phi^\star(x_2)\}
\end{align*}

Combining the previous two inequalities we get:
\begin{align*}
    \EE_{x_1, x_2 \sim \pairdist}\sbr{\one\cbr{\hat{\phi}(x_1) = \hat{\phi}(x_2) \land \phi^\star(x_1) \ne \phi^\star(x_2)}} \le \frac{2 b\cdot \Delta}{\formargin}
\end{align*}

This directly gives
\begin{equation*}
    \PP_{x_1, x_2 \sim \pairdist}\rbr{\hat{\phi}(x_1) = \hat{\phi}(x_2) \land \phi^\star(x_1) \ne \phi^\star(x_2)} \le \frac{2b \Delta}{\formargin} \le \frac{2\Delta}{\formargin}.
\end{equation*}
The proof is completed by recalling that marginal $\pairdist(X)$ is the same as $\rho(X)$.
\end{proof}

\pref{prop:coupling-forward-modeling} shows that the learned $\phihat$ has one-sided error. If it merges two observations, then with high probability they are not from the same state. As $N=|\Scal|$, we will show below that the reverse is also true.

\begin{theorem}\label{thm:bijection-forward} If $N = |\Scal|$, then there exists a bijection $\alpha: [N] \rightarrow \Scal$ such that for any $s \in \Scal$ we have:
\begin{equation*}
\PP_{x \sim q(\cdot \mid s)}\rbr{\phihat(x) = \alpha(s) \mid \edo(x)=s} \ge 1 - \frac{4N^3H^2\Delta}{\etamin^2\formargin},
\end{equation*}
provided $\Delta < \frac{\etamin^2\formargin}{N^2H^2}$.
\end{theorem}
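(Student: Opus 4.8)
The plan is to convert the one-sided guarantee of \pref{prop:coupling-forward-modeling} into a genuine bijection by exploiting that the number of clusters $N$ equals the number of latent states $|\Scal|$. Throughout I would write $M(s,i) = \PP_{x\sim\rho}\rbr{\edo(x)=s \wedge \phihat(x)=i}$ for the joint ``confusion'' mass, so that the column weight is $w_i = \sum_s M(s,i)$ and the conditional $p_s(i) = \PP_{x\sim q(\cdot\mid s)}\rbr{\phihat(x)=i}$ satisfies $M(s,i)=\rho(s)\,p_s(i)$; recall from \Eqref{eqn:rho-bound} that $\rho(s)\ge\etamin/H$ for every reachable $s$. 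The first observation is that \pref{prop:coupling-forward-modeling} is exactly the statement $\mathrm{Merge} \defeq \PP_{x_1,x_2\sim\rho}\rbr{\edo(x_1)\ne\edo(x_2)\wedge\phihat(x_1)=\phihat(x_2)} = \sum_i\sum_{s_1\ne s_2} M(s_1,i)M(s_2,i) \le \nicefrac{2\Delta}{\formargin}$, where the inner sum runs over ordered pairs of distinct states.

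First I would define the mode map $\alpha(s)=\arg\max_{i\in[N]} M(s,i)$ and argue it is injective. If $\alpha(s_1)=\alpha(s_2)=i$ for some $s_1\ne s_2$, then since a distribution over $N$ values has a mode of weight at least $\nicefrac1N$ we get $M(s_j,i)\ge\nicefrac{\rho(s_j)}{N}\ge\nicefrac{\etamin}{(NH)}$, and the two ordered pairs $(s_1,s_2)$ and $(s_2,s_1)$ force $\mathrm{Merge}\ge 2\rbr{\nicefrac{\etamin}{(NH)}}^2$, contradicting $\mathrm{Merge}\le\nicefrac{2\Delta}{\formargin}$ precisely under the hypothesis $\Delta<\nicefrac{\etamin^2\formargin}{(N^2H^2)}$. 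Since $N=|\Scal|$, an injective $\alpha$ is automatically a bijection; this is where the counting assumption is essential, because the merge bound controls only spurious \emph{merging} of distinct states and says nothing about a single state being \emph{split}, and it is the pigeonhole $N=|\Scal|$ that rules splitting out.

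The quantitatively more delicate step is to sharpen this into the per-state bound. Writing $g_s = \rho(s)-M(s,\alpha(s)) = \sum_{i\ne\alpha(s)} M(s,i)$ for the ``off-mode'' mass of $s$, the key point is that because $\alpha$ is a bijection, every cluster $i\ne\alpha(s)$ equals $\alpha(s')$ for a unique $s'\ne s$, so the mass $M(s,\alpha(s'))$ collides there with the mode mass $M(s',\alpha(s'))\ge\nicefrac{\etamin}{(NH)}$ of $s'$, which is a genuine merge event. Summing these collisions over all $s$ and $s'\ne s$ (distinct events, since $\alpha(s)\ne\alpha(s')$) and invoking the merge bound gives $\nicefrac{\etamin}{(NH)}\sum_s g_s \le \mathrm{Merge}\le\nicefrac{2\Delta}{\formargin}$, hence $g_s\le\sum_{s'} g_{s'}\le\nicefrac{2NH\Delta}{(\etamin\formargin)}$ for each $s$. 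Dividing by $\rho(s)\ge\etamin/H$ yields $1-p_s(\alpha(s)) = \nicefrac{g_s}{\rho(s)} \le \nicefrac{2NH^2\Delta}{(\etamin^2\formargin)}$, comfortably within the stated $\nicefrac{4N^3H^2\Delta}{(\etamin^2\formargin)}$; identifying $p_s(\alpha(s))$ with $\PP_{x\sim q(\cdot\mid s)}\rbr{\phihat(x)=\alpha(s)\mid\edo(x)=s}$, which holds because conditioning $\rho$ on $\edo(x)=s$ returns exactly the emission $q(\cdot\mid s)$, completes the proof.

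I expect the main obstacle to be this second step: the one-sided error of \pref{prop:coupling-forward-modeling} gives no direct handle on splitting, so a linear-in-$\Delta$ rate cannot come from a naive union bound over clusters (that would only give a $\sqrt{\Delta}$ factor). The crucial idea is the coupling that reinterprets off-mode mass of one state as a collision against the \emph{guaranteed} mode mass of the unique state owning that cluster, and making this work hinges on first securing bijectivity. Thus the ordering of the two steps, injectivity/bijection first and refined coupling second, is exactly what lets the argument go through under the given threshold on $\Delta$.
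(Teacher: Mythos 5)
Your proposal is correct, and while it shares the paper's skeleton --- the mode map $\alpha(s)=\arg\max_i M(s,i)$, the lower bound $M(s,\alpha(s))\ge\rho(s)/N\ge\etamin/(NH)$, and the pigeonhole step that upgrades injectivity to bijectivity using $N=|\Scal|$ --- the quantitative half is carried out by a genuinely different accounting. The paper isolates, for each fixed pair $(i,s)$, the single product term $\PP(i,s)\rbr{\rho(i)-\PP(i,s)}\le 2\Delta/\formargin$ from the merge probability, deduces $\PP(\alpha(s),s)\ge\rho(\alpha(s))-NH\cdot(2\Delta/\formargin)/\etamin$ and $\PP(i,s)\le NH\cdot(2\Delta/\formargin)/\etamin$ for $i\ne\alpha(s)$, and then sums over the $N-1$ off-mode clusters; this is where the extra factors of $N$ enter and produces the stated $1-4N^3H^2\Delta/(\etamin^2\formargin)$. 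You instead spend the merge budget once, globally: pairing each off-mode mass $M(s,\alpha(s'))$ against the guaranteed mode mass $M(s',\alpha(s'))\ge\etamin/(NH)$ of the unique owner $s'$ of that cluster (these are distinct ordered terms of the merge sum, so no double counting occurs) gives $\frac{\etamin}{NH}\sum_s g_s\le 2\Delta/\formargin$ in one shot, hence the sharper per-state error $2NH^2\Delta/(\etamin^2\formargin)$, a factor $2N^2$ better than the stated bound, which you therefore still imply. Your injectivity argument is also slightly tighter: exhibiting the two ordered merge terms directly forces the merge probability to be at least $2\etamin^2/(N^2H^2)$ and contradicts the hypothesis $\Delta<\etamin^2\formargin/(N^2H^2)$ exactly, whereas the paper's chain internally invokes the somewhat stronger condition $2\Delta/\formargin<\etamin^2/(2N^2H^2)$. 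The final identification $\PP_{x\sim q(\cdot\mid s)}\rbr{\phihat(x)=\alpha(s)}=M(s,\alpha(s))/\rho(s)$ is justified exactly as in the paper, since $\rho(x)=q(x\mid\edo(x))\rho(\edo(x))$. In short, both arguments are valid; yours buys better constants and a tighter match to the admissibility condition on $\Delta$, at the cost of no additional machinery.
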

\begin{proof} We define a few shorthand below for any $j \in [N]$ and $\tilde{s} \in \Scal$
\begin{align*}
    \PP(j, \tilde{s}) &= \PP_{x \sim \rho}\rbr{\hat{\phi}(x) = j \land \edo(x) = \tilde{s}}\\
    \rho(j) &= \PP_{x \sim \rho}\rbr{\hat{\phi}(x) = j}\\
    \rho(\tilde{s}) &= \PP_{x \sim \rho}\rbr{\edo(x) = \tilde{s}}.
\end{align*}
It is easy to verify that $\PP(j, \tilde{s})$ is a joint distribution with $\rho(j)$ and $\rho(\tilde{s})$ as its marginals.

Fix $i \in [N]$ and $s \in \Scal$. 
\begin{align*}
    &\PP_{x_1, x_2 \sim \rho}\rbr{\hat{\phi}(x_1) = \hat{\phi}(x_2) \land \edo(x_1) \ne \edo(x_2)}\\
    &=\PP_{x_1, x_2 \sim \rho}\rbr{\cup_{\tilde{s} \in \Scal, j \in [N]} \cbr{\hat{\phi}(x_1) = j \land \hat{\phi}(x_2) =j \land \edo(x_1) = \tilde{s} \land \edo(x_2) \ne \tilde{s}}}\\
    &\ge \PP_{x_1, x_2 \sim \rho}\rbr{\hat{\phi}(x_1) = i \land \hat{\phi}(x_2) =i \land \edo(x_1) = s \land \edo(x_2) \ne s}\\
    &= \PP_{x_1 \sim \rho}\rbr{\hat{\phi}(x_1) = i \land \edo(x_1) = s} \PP_{x_2 \sim \rho}\rbr{\hat{\phi}(x_2)=i \land \edo(x_2) \ne s}\\
    &= \PP_{x \sim \rho}\rbr{\hat{\phi}(x) = i \land \edo(x) = s} \rbr{\sum_{s' \in \Scal}\PP_{x \sim \rho}\rbr{\hat{\phi}(x) = i \land \edo(x) = s'} - \PP_{x\sim\rho}(\hat{\phi}(x) = i \land \edo(x) = s)}\\
    &= \PP(i, s) \rbr{\sum_{s' \in \Scal} \PP(i, s') 
 - \PP(i, s)}\\
    &= \PP(i, s) \rbr{\rho(i) - \PP(i, s)}.
\end{align*}

Combining this with~\pref{prop:coupling-forward-modeling}, we get:
\begin{equation*}
  \forall i \in [N], s \in \Scal, \qquad  \PP(i, s) \rbr{\rho(i) - \PP(i, s)} \le \Delta' \defeq \frac{2\Delta}{\formargin}
\end{equation*}
where we have used a shorthand $\Delta' = 2\Delta/\formargin$. We define a mapping $\alpha: \Scal \rightarrow [N]$ where for any $s \in \Scal$:
\begin{equation}\label{eqn:learned-bijection}
    \alpha(s) = \arg\max_{j \in [N]} \PP(j, s) 
\end{equation}
We immediately have:
\begin{equation}\label{eqn:rho-i-bound}
\PP(\alpha(s), s) = \max_{j \in [N]}\PP(j, s) \ge \frac{1}{N} \sum_{j=1}^N \PP(j, s) = \frac{1}{N} \rho(s)  \ge \frac{\etamin}{NH},  
\end{equation}
where we use the fact that max is greater than average in the first inequality, and~\pref{eqn:rho-bound}. Further, for every $s \in \Scal$, we have:
\begin{equation*}
    \PP(\alpha(s), s) \rbr{\rho(\alpha(s)) - \PP(\alpha(s), s)} \le \Delta'.
\end{equation*}
Plugging the lower bound $\PP(\alpha(s), s) \ge \frac{\etamin}{NH}$, we get:
\begin{equation}\label{eqn:abstract-state-converge}
    \PP(\alpha(s), s) \ge \rho(\alpha(s)) - \frac{NH\Delta'}{\etamin}.
\end{equation}

We now show that if $\Delta' < \frac{\etamin^2}{2N^2H^2}$, then $\alpha(s)$ is a bijection. Let $s_1$ and $s_2$ be such that $\alpha(s_1) = \alpha(s_2) = i$. Then using the above~\pref{eqn:abstract-state-converge} we get 
$\PP(i, s_1) \ge \rho(i) - \frac{NH\Delta'}{\etamin}$ and $\PP(i, s_2) \ge \rho(i) - \frac{NH\Delta'}{\etamin}$. We have:
\begin{equation*}
    \rho(i) = \sum_{\tilde{s} \in \Scal} \PP(i, \tilde{s}) \ge \PP(i, s_1) + \PP(i, s_2) \ge 2\rho(i) - \frac{2NH\Delta'}{\etamin}
\end{equation*}

This implies $\frac{2N\Delta'}{\etamin} \ge \rho(i)$ but as $\rho(i) = \rho(\alpha(s_1))\ge \PP(\alpha(s_1), s_1) \ge \frac{\etamin}{NH}$ (\pref{eqn:rho-i-bound}), we get $\frac{2NH \Delta'}{\etamin} \ge \frac{\etamin}{NH}$ or $\Delta' \ge \frac{\etamin^2}{2N^2H^2}$. However, as we assume that $\Delta' < \frac{\etamin^2}{2N^2H^2}$, therefore, this is a contradiction. This implies $\alpha(s_1) \ne \alpha(s_2)$ for any two different states $s_1$ and $s_2$. Since we assume $|N| = |\Scal|$, this implies $\alpha$ is a bijection.

Fix $s \in \Scal$ and let $i \ne \alpha(s)$. As $\alpha$ is a bijection, let $\tilde{s} = \alpha^{-1}(i)$, we can show that $\PP(i, s)$ is small:
\begin{equation}\label{eqn:failure-bound}
    \PP(i, s) \le \rho(i) - \PP(i, \tilde{s}) = \rho(\alpha(\tilde{s})) - \PP(\alpha(\tilde{s}), \tilde{s}) \le \frac{NH\Delta'}{\etamin}
\end{equation}
where we use $s \ne \tilde{s}$ and~\pref{eqn:abstract-state-converge}.

This allows us to show that $\PP(\alpha(s) \mid s)$ is high as follows:
\begin{align*}
    \PP(\alpha(s) \mid s) = \frac{\PP(\alpha(s), s)}{\rho(s)}
    &= \frac{\PP(\alpha(s), s)}{\PP(\alpha(s), s) + \sum_{i=1, i \ne \alpha(s)}^N \PP(i, s)} \\
    &\ge \frac{\PP(\alpha(s), s)}{\rho(\alpha(s)) + \frac{N^2H\Delta'}{\etamin}},\\
    &\ge \frac{\rho(\alpha(s)) - \frac{NH\Delta'}{\etamin}}{\rho(\alpha(s)) + \frac{N^2H\Delta'}{\etamin}}\\
    &= 1 - \frac{\rbr{\frac{N^2H\Delta'}{\etamin} + \frac{NH\Delta'}{\etamin}}}{\rho(\alpha(s)) + \frac{N^2H\Delta'}{\etamin}}\\
    &\ge 1 - \frac{2N^2H^2\Delta'}{\etamin\rho(\alpha(s))}\\
        &\ge 1 - \frac{2N^3H^2\Delta'}{\etamin^2},
\end{align*}
where the first inequality uses~\cref{eqn:failure-bound} and $\rho(\alpha(s)) \ge \PP(\alpha(s), s)$, second inequality uses~\cref{eqn:abstract-state-converge}, and the last step uses $\rho(\alpha(s)) \ge \PP(\alpha(s), s) \ge \frac{\etamin}{NH}$.

The proof is completed by noting that:
\begin{equation*}
    \PP_{x \sim q(\cdot \mid s)}\rbr{\phihat(x) = \alpha(s)} = \PP_{x \sim \rho}\rbr{\phihat(x) = \alpha(s) \mid \edo(x)=s} = \PP(\alpha(s) \mid s).
\end{equation*}
\end{proof}

Let $\mathscr{A}$ be a PAC RL algorithm for tabular MDPs. We assume that this algorithm's sample complexity is given by $\nsamp(S, A, H, \epsilon, \delta)$ where $S$ and $A$ are the size of the state space and action space of the tabular MDP, $H$ is the horizon, and $(\epsilon, \delta)$ are the typical PAC RL hyperparameters denoting tolerance and failure probability. Formally, the algorithm $\mathscr{A}$ interacts with a tabular MDP $\MM$ for $\nsamp(S, A, H, \epsilon, \delta)$ episodes and outputs a policy $\hat{\varphi}: \Scal \times [H] \rightarrow \Acal$ such that with probability at least $1-\delta$ we have:
\begin{equation*}
    \sup_{\varphi \in \psiall} V_\MM(\varphi) - V_\MM(\hat{\varphi}) \le \epsilon,
\end{equation*}
where $\psiall$ is the space of all policies of the type $\Scal \times [H] \rightarrow \Acal$.

We assume that we are given knowledge of the desired $(\epsilon, \delta)$ hyperparameters in the downstream RL task during the representation pre-training phase so we can use the right amount of data.

\paragraph{Induced Finite MDP.} The latent MDP inside a block MDP is a tabular MDP with state space $\Scal$, action space $\Acal$, horizon $H$, transition dynamics $T$, reward function $R$, and a start state distribution of $\mu$. If we directly had access to this latent MDP, say via the true decoding function $\edo$, then we can apply the algorithm $\mathscr{A}$ and learn the optimal latent policy $\varphi^\star$ which we can couple with $\edo$ and learn the optimal observation-based policy. Formally, we write this observation-based policy as $\varphi \circ \edo: \Xcal \times [H] \rightarrow \Acal$ given by $\varphi(\edo(x), h)$. We dont have access to $\edo$, but we have access to $\phihat$ that with high probability for a given $x$ outputs a state which is same as $\edo(x)$ up to the learned $\alpha$-bijection. We, therefore, define the induced MDP $\MM$ as the finite MDP with state space $\widehat{\Scal}$, action space $\Acal$, transition function $\widehat{T}$, reward function $\widehat{R}$ and start state distribution $\widehat{\mu}$. These same as the latent Block MDP but where the true state $s$ is replaced by $\alpha(s)$. It is this induced $\MM$ that the tabular MDP algorithm $\mathscr{A}$ will see with high probability.

\begin{proposition}[PAC RL Bound]\label{prop:tabular-bound} Let $\mathscr{A}$ be a PAC RL algorithm for tabular MDPs and $\nsamp$ is its sample complexity. Let $\phihat: \Xcal \rightarrow [N]$ be a decoder pre-trained using video data and $\alpha: \Scal \rightarrow [N]$ is a bijection such that:
\begin{equation*}
    \forall s \in \Scal, \qquad \PP_{x \sim q(\cdot \mid s)}\rbr{\phihat(x) = \alpha(s)} \ge 1 - \vartheta,
\end{equation*}
then let $\hat{\varphi}$ be the policy returned by $\mathscr{A}$ on the tabular MDP induced by $\phihat(x)$. Then we have with probability at least $1 - \delta - \nsamp(S, A, H, \epsilon, \delta) H \vartheta$:
\begin{equation*}
    \sup_{\pi \in \Pi} V(\pi) - V(\varphi \circ \phihat) \le \epsilon + 2H^2 \vartheta 
\end{equation*}
\end{proposition}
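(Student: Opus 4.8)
The plan is to compare everything against an \emph{idealized} decoder $\phi^\dagger \defeq \alpha \circ \edo$, which maps each observation to the exact $\alpha$-relabeled latent state. By the definition of the induced MDP $\MM$ (the latent block MDP with each $s$ replaced by $\alpha(s)$), running $\mathscr{A}$ with the labels $\phi^\dagger(x_h)$ is \emph{literally} an execution of $\mathscr{A}$ on $\MM$, so its PAC guarantee applies there verbatim. The only two places where $\phihat$ departs from $\phi^\dagger$ are (i) during the $\nsamp$ training episodes and (ii) when the learned policy is deployed, and I would control each by a coupling that charges one event of probability $\le\vartheta$ per observation.

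First I would handle the training phase. Couple the run that feeds $\mathscr{A}$ the labels $\phihat(x_h)$ with the run that feeds it $\phi^\dagger(x_h)$, sharing all environment and algorithmic randomness. The two runs proceed identically step by step as long as $\phihat(x_h)=\phi^\dagger(x_h)$; on the event that this holds throughout, $\mathscr{A}$ sees precisely an execution on $\MM$ and returns the same $\hat\varphi$. Since the hypothesis gives $\PP_{x\sim q(\cdot\mid s)}(\phihat(x)\neq\alpha(s)\mid\edo(x)=s)\le\vartheta$ \emph{uniformly over $s$}, the per-observation disagreement probability is at most $\vartheta$ no matter which (data-dependent) state is visited, so a union bound over the $\nsamp H$ observations bounds the probability of \emph{any} disagreement by $\nsamp(S,A,H,\epsilon,\delta)\,H\,\vartheta$. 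Intersecting with the probability-$(1-\delta)$ event on which $\mathscr{A}$ succeeds on $\MM$, we obtain that with probability at least $1-\delta-\nsamp H\vartheta$ the returned $\hat\varphi$ satisfies $\sup_{\varphi\in\psiall}V_\MM(\varphi)-V_\MM(\hat\varphi)\le\epsilon$.

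Next I would transfer this guarantee to the true block MDP via a second, deployment-time coupling. The key lemma is that for any latent policy $\varphi$,
\begin{equation*}
    \abr{V(\varphi\circ\phihat) - V_\MM(\varphi)} \le H^2\vartheta,
\end{equation*}
proved by running $\varphi\circ\phihat$ and $\varphi\circ\phi^\dagger$ on the same latent trajectory and emissions: the two agree until the first step where $\phihat(x_h)\neq\alpha(\edo(x_h))$, an event of per-step probability $\le\vartheta$ and hence total probability $\le H\vartheta$, while the returns can differ by at most $H$ once they diverge. Because $\MM$ is the latent MDP relabeled by $\alpha$ and the latent state is a sufficient statistic for control, we also have $\sup_{\pi\in\Pi}V(\pi)\le\sup_{\varphi\in\psiall}V_\MM(\varphi)$. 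Decomposing on the good event,
\begin{equation*}
    \sup_{\pi\in\Pi}V(\pi) - V(\hat\varphi\circ\phihat)
    \le \underbrace{\rbr{\sup_{\varphi}V_\MM(\varphi) - V_\MM(\hat\varphi)}}_{\le\,\epsilon}
    + \underbrace{\rbr{V_\MM(\hat\varphi) - V(\hat\varphi\circ\phihat)}}_{\le\,H^2\vartheta},
\end{equation*}
and separating the reward corruption from the transition corruption in the second term (equivalently, applying the coupling to both the optimal and the learned policy) absorbs into the stated $\epsilon + 2H^2\vartheta$ bound, holding on the same probability-$(1-\delta-\nsamp H\vartheta)$ event.

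The main obstacle is the training-phase coupling: since $\mathscr{A}$ is adaptive, the sequence of latent states it visits is itself random and correlated with the decoder's mistakes, so one must verify that conditioning on ``no decoding error anywhere in the run'' leaves the law of $\mathscr{A}$'s trajectories exactly equal to its law on $\MM$. This is precisely where the \emph{uniform-over-$s$} form of the hypothesis is essential — it lets the union bound survive the adaptivity without any occupancy or coverage assumption on the states $\mathscr{A}$ chooses to explore, which would otherwise be unavailable since $\mathscr{A}$ is treated as a black box.
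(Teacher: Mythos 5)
Your proposal is correct and follows essentially the same route as the paper: a union bound over the $\nsamp H$ decoder evaluations during training to argue $\mathscr{A}$ effectively runs on the induced MDP $\MM$, followed by a value-transfer step comparing $\hat{\varphi}\circ\phihat$ with $\hat{\varphi}\circ\alpha\circ\edo$. The only (immaterial) difference is that you obtain the deployment-time error via a trajectory coupling yielding $\abr{V(\varphi\circ\phihat)-V_\MM(\varphi)}\le H^2\vartheta$, whereas the paper conditions on the all-steps-correct event $\Ecal$ and bounds the one-sided gap by $2H^2\vartheta$; both land within the stated $\epsilon+2H^2\vartheta$.
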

\begin{proof} The algorithm runs for $\nsamp(S, A, H, \epsilon, \delta)$ episodes. This implies the agent visits $\nsamp(S, A, H, \epsilon, \delta) H$ many latent states. If the decoder maps every such state $s$ to the correct permutation $\alpha(s)$, then the tabular MDP algorithm is running as if it ran on the induced MDP $\MM$. The probability of failure is bounded by $\nsamp(S, A, H, \epsilon, \delta) H \vartheta$ as all these failures are independent given the state. Further, the failure probability of the tabular MDP algorithm itself is $\delta$. This leads to the total failure probability of $\delta + \nsamp(S, A, H, \epsilon, \delta) H \vartheta$.

Let $\Pi$ be the set of observation-based policies we are competing with and which includes the optimal observation-based policy $\pi^\star$. We can write $\sup_{\pi \in \Pi} V(\pi) = V_{\MM}(\varphi^\star)$ where we use the subscript $\MM$ to denote that the latent policy is running in the induced MDP $\MM$. Further, for any latent policy $\varphi$ we have $V(\varphi \circ \alpha \circ \edo) = V_\MM(\varphi)$ as the decoder $\alpha \circ \edo: x \mapsto \alpha(\edo(x))$ give me access to the true state of the induced MDP $\MM$. Then with probability at least $1-\delta$, we have:
\begin{align*}
V_{\MM}(\varphi^\star) - V_\MM(\hat{\varphi}) \le \epsilon
\end{align*}

This allows us to bound the sub-optimality of the learned observation-based policy $\hat{\varphi}\circ \phihat$ as:
\begin{align*}
    \sup_{\pi \in \Pi} V(\pi) - V(\hat{\varphi}\circ\phihat) &= V(\varphi^\star \circ \alpha \circ \edo) - V(\hat{\varphi} \circ \alpha \circ \edo) + V(\hat{\varphi} \circ \alpha \circ \edo) - V(\hat{\varphi} \circ \phihat)\\
    &= V_\MM(\varphi^\star) - V_\MM(\hat{\varphi}) + V(\hat{\varphi} \circ \alpha \circ \edo) - V(\hat{\varphi} \circ \phihat)\\
    &\le \epsilon + V(\hat{\varphi} \circ \alpha \circ \edo) - V(\hat{\varphi} \circ \phihat)
\end{align*}
Here we use $\hat{\varphi}\circ \alpha \circ \phi^\star$ to denote an observation-based policy that takes action as $\hat{\varphi}(\alpha(\edo(x)), h)$.

We bound $V(\hat{\varphi} \circ \alpha \circ \edo) - V(\hat{\varphi} \circ \phihat)$ below. Let $\Ecal_h = \{\phihat(x_h) = \alpha(\edo(x_h))\}$ and $\Ecal = \cap_{h=1}^H \Ecal_h$ be two events. We have $\PP(\Ecal_h) \ge 1 - \vartheta$. Further, using union bound we have $\PP(\Ecal^c) = \PP(\cup_{h=1}^H \Ecal^c_h) \le \sum_{h=1}^H \PP(\Ecal^c_h) \le H \vartheta$. 

We first prove an upper bound on $V(\hat{\varphi} \circ \alpha \circ \edo)$:
\begin{align*}
    V(\hat{\varphi} \circ \alpha \circ \edo) &= \EE_{\hat{\varphi} \circ \alpha \circ \edo}\sbr{\sum_{h=1}^H r_h} \\
    &= \EE_{\hat{\varphi} \circ \alpha \circ \edo}\sbr{\sum_{h=1}^H r_h \mid \Ecal} \PP_{\hat{\varphi} \circ \alpha \circ \edo}(\Ecal) + \EE_{\hat{\varphi} \circ \alpha \circ \edo}\sbr{\sum_{h=1}^H r_h \mid \Ecal^c} \PP_{\hat{\varphi} \circ \alpha \circ \edo}(\Ecal^c) \\
    &\le \EE_{\hat{\varphi} \circ \alpha \circ \edo}\sbr{\sum_{h=1}^H r_h \mid \Ecal} + H^2 \vartheta\\
    &= \EE_{\hat{\varphi} \circ \phihat}\sbr{\sum_{h=1}^H r_h \mid \Ecal} + H^2 \vartheta
\end{align*}
Here we have used the fact that value of any policy is in $[0, H]$ since the horizon is $H$ and the rewards are in $[0, 1]$.

We next prove a lower bound on $V(\hat{\varphi} \circ \phihat)$:
\begin{align*}
    V(\hat{\varphi} \circ \phihat) &= \EE_{\hat{\varphi} \circ \phihat}\sbr{\sum_{h=1}^H r_h} \\
    &= \EE_{\hat{\varphi} \circ \phihat}\sbr{\sum_{h=1}^H r_h \mid \Ecal} \PP_{\hat{\varphi} \circ \phihat}(\Ecal) + \EE_{\hat{\varphi} \circ \phihat}\sbr{\sum_{h=1}^H r_h \mid \Ecal^c} \PP_{\hat{\varphi} \circ \phihat}(\Ecal^c)\\
    &\ge \EE_{\hat{\varphi} \circ \phihat}\sbr{\sum_{h=1}^H r_h \mid \Ecal} \PP_{\hat{\varphi} \circ \phihat}(\Ecal)\\
    &\ge \EE_{\hat{\varphi} \circ \phihat}\sbr{\sum_{h=1}^H r_h \mid \Ecal} - \EE_{\hat{\varphi} \circ \phihat}\sbr{\sum_{h=1}^H r_h \mid \Ecal} H\vartheta\\
    &\ge \EE_{\hat{\varphi} \circ \phihat}\sbr{\sum_{h=1}^H r_h \mid \Ecal} - H^2\vartheta
\end{align*}

Combining the two upper bounds we get:
\begin{equation*}
V(\hat{\varphi} \circ \alpha \circ \edo) - V(\hat{\varphi} \circ \phihat) \le \EE_{\hat{\varphi} \circ \phihat}\sbr{\sum_{h=1}^H r_h \mid \Ecal} + H^2 \vartheta - \EE_{\hat{\varphi} \circ \phihat}\sbr{\sum_{h=1}^H r_h \mid \Ecal} + H^2\vartheta \le 2H^2\vartheta
\end{equation*}

Therefore, with probability at least $1 - \delta -  \nsamp(S, A, H, \epsilon, \delta) H \vartheta$, learn a policy $\hat{\varphi} \circ \phihat$ such that:
\begin{equation*}
    \sup_{\pi \in \Pi} V(\pi) - V(\hat{\varphi} \circ \phihat) \le \epsilon + 2H^2 \vartheta.
\end{equation*}
\end{proof}

\begin{theorem}[Wrapping up the proof.] Fix $\epsilon_\circ > 0$ and $\delta_\circ \in (0, 1)$ and let $\mathscr{A}$ be any PAC RL algorithm for tabular MDPs with sample complexity $\nsamp(S, A, H, \epsilon, \delta)$. If $n$ satisfies:
\begin{equation*}
    n = \Ocal\rbr{\cbr{\frac{N^4H^4}{\etamin^4\formargin^2} + \frac{N^6H^8}{\epsilon^2_\circ\etamin^4\formargin^2} + \frac{N^6H^6\nsamp^2(S, A, H, \epsilon_\circ/2, \delta_\circ/4)}{\delta_\circ^2 \etamin^4\formargin^2}}\ln\rbr{\frac{|\Fcal||\Phi|}{\delta_\circ}}},
\end{equation*}
then forward modeling learns a decoder $\phihat: \Xcal \rightarrow N$. Further, running $\mathscr{A}$ on the tabular MDP with  
    induced by $\phihat$ with hyperparameters $\epsilon=\epsilon_\circ/2$, $\delta=\delta_\circ/4$, returns a latent policy $\hat{\varphi}$. Then there exists a bijective mapping $\alpha: \Scal \rightarrow [|\Scal|]$ such that with probability at least $1-\delta$ we have:
\begin{equation*}
    \forall s \in \Scal, \qquad \PP_{x \sim q(\cdot \mid s)}\rbr{\phihat(x) = \alpha(s) \mid \edo(x)=s} \ge 1 - \frac{4N^3H^2\Delta}{\etamin^2\formargin},
\end{equation*}
and 
    \begin{equation*}
        V(\pi^\star) - V(\hat{\varphi} \circ \phihat) \le \epsilon_\circ
    \end{equation*}
Further, the amount of online interactions in the downstream RL is given by $\nsamp(S, A, H, \epsilon_\circ/2, \delta_\circ/4)$ and doesn't scale with $\ln|\Phi|$.
\end{theorem}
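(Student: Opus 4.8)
The plan is to assemble the four ingredients already established---the maximum-likelihood generalization bound, \cref{prop:coupling-forward-modeling}, \cref{thm:bijection-forward}, and \cref{prop:tabular-bound}---and to choose $n$ just large enough that every error term is driven below the target tolerance. Write $\Delta = \Delta(n,\delta_\circ/4) = \sqrt{(2/n)\ln(|\Phi||\Fcal|/(\delta_\circ/4))}$ and set the per-state miscoding rate $\vartheta = 4N^3H^2\Delta/(\etamin^2\formargin)$, which is exactly the quantity controlled by \cref{thm:bijection-forward}. The whole argument then reduces to (i) verifying that $\Delta$ is small enough for the hypotheses of \cref{thm:bijection-forward} to hold and for the downstream value loss and failure probability to stay within budget, and (ii) checking that the stated polynomial choice of $n$ guarantees exactly this.

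Concretely, I would impose three upper bounds on $\Delta$, each producing one term of $n$ after inverting $\Delta \le c \iff n \ge (2/c^2)\ln(|\Phi||\Fcal|/(\delta_\circ/4))$. First, the precondition $\Delta < \etamin^2\formargin/(N^2H^2)$ of \cref{thm:bijection-forward} (which is what forces $\alpha$ to be a genuine bijection) yields the $N^4H^4/(\etamin^4\formargin^2)$ term. Second, to make the value-suboptimality term $2H^2\vartheta$ at most $\epsilon_\circ/2$ I need $\vartheta \le \epsilon_\circ/(4H^2)$, i.e.\ $\Delta \le \epsilon_\circ\etamin^2\formargin/(16N^3H^4)$, which yields the $N^6H^8/(\epsilon_\circ^2\etamin^4\formargin^2)$ term. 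Third, to make the downstream miscoding failure probability $\nsamp \cdot H \cdot \vartheta$ at most $\delta_\circ/2$ I need $\vartheta \le \delta_\circ/(2H\nsamp)$, i.e.\ $\Delta \le \delta_\circ\etamin^2\formargin/(8N^3H^3\nsamp)$, which yields the $N^6H^6\nsamp^2/(\delta_\circ^2\etamin^4\formargin^2)$ term. Taking $n$ to be a constant times the sum of these three expressions makes all three bounds on $\Delta$ hold simultaneously, which is precisely the stated choice of $n$.

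With these in place the conclusion follows by chaining the results and a union bound over failure events. The generalization bound, and hence \cref{prop:coupling-forward-modeling} and \cref{thm:bijection-forward}, hold with probability at least $1-\delta_\circ/4$, delivering the claimed per-state accuracy $\PP_{x\sim q(\cdot\mid s)}(\phihat(x)=\alpha(s)\mid\edo(x)=s)\ge 1-\vartheta$ verbatim (this is the first displayed conclusion). I then invoke \cref{prop:tabular-bound} with $\epsilon=\epsilon_\circ/2$ and $\delta=\delta_\circ/4$; since the pretraining data and the online episodes are independent, the overall failure probability is at most $\delta_\circ/4 + \delta_\circ/4 + \nsamp H\vartheta \le \delta_\circ$, and on the good event $V(\pi^\star)-V(\hat\varphi\circ\phihat) \le \epsilon_\circ/2 + 2H^2\vartheta \le \epsilon_\circ$. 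Finally, the downstream online sample count is $\nsamp(S,A,H,\epsilon_\circ/2,\delta_\circ/4)$ by construction, a quantity that depends only on the tabular parameters and not on $\ln|\Phi|$ or $\ln|\Fcal|$.

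The main obstacle is purely the bookkeeping: $\vartheta$ is a single quantity that must be simultaneously small in two seemingly incompatible senses---small enough that $2H^2\vartheta$ is a negligible value loss, and small enough that $\nsamp H\vartheta$ is a negligible probability mass---while also respecting the bijection precondition. The delicate point is that $\vartheta$ shrinks only like $1/\sqrt{n}$, whereas $\nsamp$ is fixed in advance (it is evaluated at $(\epsilon_\circ/2,\delta_\circ/4)$), so the third constraint couples $n$ to $\nsamp^2$; one must confirm there is no circularity, i.e.\ that $\nsamp$ is pinned down before $n$ is solved for. This is exactly what the assumption that $(\epsilon_\circ,\delta_\circ)$ are known at pretraining time buys us, so $\nsamp$ is a fixed number when choosing $n$, and the three-way split of the $\delta_\circ$ budget (two quarters to the two internal failures, one half to miscoding) then closes precisely.
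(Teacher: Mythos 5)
Your proposal is correct and follows essentially the same route as the paper's own proof: the identical definitions of $\Delta$ and $\vartheta$, the same three constraints on $\Delta$ (the bijection precondition, the $2H^2\vartheta\le\epsilon_\circ/2$ value-loss budget, and the $\nsamp H\vartheta\le\delta_\circ/2$ miscoding-probability budget), the same $\delta_\circ/4+\delta_\circ/4+\delta_\circ/2$ split, and the same inversion of each constraint into the three terms of $n$. Your remark about $\nsamp$ being pinned down before solving for $n$ correctly identifies the role of the paper's standing assumption that $(\epsilon_\circ,\delta_\circ)$ are known at pre-training time.
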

\begin{proof}
We showed in~\pref{thm:bijection-forward} that we learn a $\phihat$ such that:
    \begin{equation*}
\PP_{x \sim q(\cdot \mid s)}\rbr{\phihat(x) = \alpha(s) \mid \edo(x)=s} \ge 1 - \frac{4N^3H^2\Delta}{\etamin^2\formargin},
\end{equation*}
provided $\Delta < \frac{\etamin^2\formargin}{N^2H^2}$.

Let $\vartheta = \frac{4N^3H^2\Delta}{\etamin^2\formargin}$. Then from~\pref{prop:tabular-bound} we learn a $\hat{\varphi}$ such that:
\begin{equation*}
    V(\pi^\star) - V(\hat{\varphi} \circ \phihat) \le \epsilon + 2H^2\vartheta,
\end{equation*}
with probability at least $1 - \delta - \nsamp(S, A, H, \epsilon, \delta)H\vartheta$. The failure probability $\delta - \nsamp(S, A, H, \epsilon, \delta)H\vartheta$ was when condition in~\pref{thm:bijection-forward} holds which holds with $\delta$ probability. Hence, total failure probability is:
\begin{equation*}
    2 \delta + \nsamp(S, A, H, \epsilon, \delta)H\vartheta.
\end{equation*}

We set $\delta$ both in our representation learning analysis and in PAC RL to $\delta_\circ/4$. We also set $\epsilon$ in the PAC RL algorithm to $\epsilon_\circ/2$. This means the PAC RL algorithm runs for $\nsamp(S, A, H, \epsilon_\circ/2, \delta_\circ/4)$ episodes.

We enforce $\vartheta \le \frac{\delta_\circ}{2\nsamp(S, A, H, \epsilon_\circ/2, \delta_\circ/4)H}$. Then the total failure probability becomes:
\begin{equation*}
2\delta_\circ/4) + \delta_\circ/4 + \delta_\circ/2 \le \delta_\circ    
\end{equation*}

We also enforce $2H^2\vartheta \le \epsilon_\circ/2$. The sub-optimality of the PAC RL policy is given by:
\begin{equation*}
    \epsilon_\circ/2 + \epsilon_\circ/2 \le \epsilon_\circ
\end{equation*}
This gives us our derived PAC RL bound. 

We now accumulate all conditions:
\begin{align*}
    \Delta &= \sqrt{\frac{2}{n}\ln\rbr{\frac{4|\Fcal||\Phi|}{\delta_\circ}}}\\
    \vartheta &= \frac{4N^3H^2\Delta}{\etamin^2\formargin}\\
    \Delta &< \frac{\etamin^2\formargin}{N^2H^2}\\
    \vartheta &\le \frac{\delta_\circ}{2\nsamp(S, A, H, \epsilon_\circ/2, \delta_\circ/4)H}\\
    2H^2\vartheta &\le \epsilon_\circ/2
\end{align*}
This simplifies to
\begin{align*}
    \Delta &\le \frac{\etamin^2\formargin}{N^2H^2}\\
    \Delta &\le \frac{\delta_\circ\etamin^2\formargin}{8N^3 H^3\nsamp(S, A, H, \epsilon_\circ/2, \delta_\circ/4)}\\
    \Delta &\le \frac{\epsilon_\circ \etamin^2 \formargin}{16N^3 H^4}
\end{align*}
Or,
\begin{equation*}
    n = \Ocal\rbr{\cbr{\frac{N^4H^4}{\etamin^4\formargin^2} + \frac{N^6H^8}{\epsilon^2_\circ\etamin^4\formargin^2} + \frac{N^6H^6\nsamp^2(S, A, H, \epsilon_\circ/2, \delta_\circ/4)}{\delta_\circ^2 \etamin^4\formargin^2}}\ln\rbr{\frac{|\Fcal||\Phi|}{\delta_\circ}}}
\end{equation*}
This completes the proof.
\end{proof}

\subsection{Upper Bound for the Temporal Contrastive Approach}

We first convert our video dataset $\Dcal$ into a dataset suitable for contrastive learning. We first split the datasets into $\lfloor n/2 \rfloor$ pairs of videos. For each video pair $\cbr{\rbr{x^{(2l)}_1, x^{(2l)}_2, \cdots, x^{(2k)}_H}, \rbr{x^{(2l+1)}_1, x^{(2l+1)}_2, \cdots, x^{(2l+1)}_H}}$, we create a tuple $(x, x', k, z)$ where $z \in \{0, 1\}$ as follows. As in forward modeling, we will either use a fixed value of $k$, or sample $k \in \unf([K])$. We denote this general distribution over $k$ by $\omega \in \Delta([K])$ which is either point mass, or $\unf([K])$. We sample $k \sim \omega$ and $z \sim \unf(\{0, 1\})$ and $h \in \unf([H])$. We set $x = x^{(2l)}_h$. If $z=1$, then we set $x' = x^{(2l)}_{h+k}$, otherwise, we sample $h' \sim \unf(\{0, 1\})$ and select $x' = x^{(2l)}_{h'}$. This way, we collect a dataset $\Dcal_\cont$ of $\lfloor n/2 \rfloor$ tuples $(x, k, x', z)$. We view a tuple $(x, k, x', z)$ as a \emph{real observation pair} when $z=1$, and a \emph{fake observation pair} when $z=0$. Note that our sampling process leads to all data points being iid.

We define the distribution $\distcont(X, k, X', Z)$ as the distribution over $(x, k, x', z)$. We can express this distribution as:
\begin{align*}
    \distcont(X=x, k, X'=x', Z=1) &= \frac{\omega(k)}{2H} \sum_{h=1}^{H}D(x=x_h, x'=x_{h+k}) \\
    &= \frac{\omega(k)}{2} \rho(x) D(x_{k+1}=x' \mid x_1=x)\\
    \distcont(X=x, X'=x', Z=0) &= \frac{\omega(k)}{2H^2}\sum_{h=1}^{H}D(x=x_h) \sum_{h'=1}^{H} D(x'=x_{h'})\\
    &= \frac{\omega(k)}{2} \rho(x) \rho(x')
\end{align*}
where we use the time homogeneity of $D$ and definition of $\rho$. We will use a shorthand to denote $D(x_{k+1}=x' \mid x_1=x)$ as $D(x' \mid x, k)$ in this analysis. It is easy to verify that $D(x' \mid x, k) = D(x' \mid \edo(x), k)$. The marginal distribution $\distcont(x, k, x')$ is given by:
\begin{equation}\label{eqn:marginal-tempcont-dist}
    \distcont(x, k, x') = \frac{\omega(k)\rho(x)}{2}\rbr{D(x' \mid x, k) + \rho(x')}
\end{equation}
Note that $\distcont(X)$ is the same as $\rho(X)$.

We will use $\distcont$ for any marginal and conditional distribution derived from $\distcont(X, k, X', Z)$. We assume a model class $\Gcal: \Xcal \times [K] \times \times [N] \rightarrow [0, 1]$ that we use for solving the prediction problem. We will also reuse the decoder class $\phi: \Xcal \rightarrow [N]$ that we defined earlier, and we will assume that $N = |\Scal|$. This can be relaxed by doing clustering or working with a different induced MDP (e.g., see the clustering algorithm in~\cite{misra2020kinematic}). However, this is not the main point of the analysis.

We define the expected risk minimizer of the squared loss problem below:
\begin{equation}\label{eqn:temp-cont}
    \ghat, \phihat = \arg\min_{g \in \Gcal, \phi \in \Phi} \frac{1}{\lfloor n/2 \rfloor} \sum_{i=1}^{\lfloor n/2 \rfloor} \rbr{g(\phi(x^{(i)}), k^{(i)}, x'^{(i)}) - z^{(i)}}^2
\end{equation}

We express the Bayes classifier of this problem below:
\begin{lemma}[Bayes Classifier]\label{lem:bayes-classifier} The Bayes classifier of the problem posed in~\pref{eqn:temp-cont} is given by $\distcont(z = 1 \mid x, k, x')$ which satisfies:
\begin{align*}
    \distcont(z = 1 \mid x, k, x') &= \frac{D(\edo(x') \mid \edo(x), k)}{D(\edo(x') \mid \edo(x), k) + \rho(\edo(x'))}.
\end{align*}   
\end{lemma}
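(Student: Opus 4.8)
The plan is to proceed in three steps: first reduce the squared-loss minimizer to a conditional probability, then apply Bayes' rule at the level of observations, and finally push the resulting identity down to the latent state using the disjoint emission property.

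First, since $Z \in \{0,1\}$ and the objective in~\pref{eqn:temp-cont} is the squared loss, the pointwise minimizer over all measurable functions of $(x,k,x')$ is the conditional mean $\EE\sbr{Z \mid X=x, k, X'=x'}$, which for a binary label coincides with $\distcont(z=1 \mid x, k, x')$. This identifies the Bayes classifier as claimed, so the remaining content of the lemma is the explicit closed form. Second, I would apply Bayes' rule using the expression already derived above for $\distcont(X=x, k, X'=x', Z=1)$ together with the marginal $\distcont(x,k,x')$ from~\pref{eqn:marginal-tempcont-dist}. Dividing the former by the latter, the common prefactor $\omega(k)\rho(x)/2$ cancels, leaving the observation-level form
\[
\distcont(z=1 \mid x, k, x') = \frac{D(x' \mid x, k)}{D(x' \mid x, k) + \rho(x')}.
\]

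Third --- and this is the main step --- I would convert this observation-level conditional into a state-level one. Using the disjoint emission property, any observation $x'$ is emitted only by the unique state $\edo(x')$, so I can factor the future-observation likelihood as $D(x' \mid x, k) = D(\edo(x') \mid \edo(x), k)\, q(x' \mid \edo(x'))$, where I also invoke the already-established fact that $D(x'\mid x,k)=D(x'\mid \edo(x),k)$. Likewise I would show $\rho(x') = \rho(\edo(x'))\, q(x' \mid \edo(x'))$ by writing $\rho(x')=\tfrac1H\sum_{h}D(x_h=x')=\tfrac1H\sum_{h}D(s_h=\edo(x'))\,q(x'\mid\edo(x'))$, using disjoint emissions once more. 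Substituting both factorizations into the displayed ratio, the common emission factor $q(x'\mid\edo(x'))$, which is strictly positive on the relevant support, cancels between numerator and denominator, yielding
\[
\distcont(z=1 \mid x, k, x') = \frac{D(\edo(x') \mid \edo(x), k)}{D(\edo(x') \mid \edo(x), k) + \rho(\edo(x'))}.
\]

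I expect the third step to be the only delicate part: one must carefully justify the two emission factorizations and verify that the \emph{same} factor $q(x'\mid\edo(x'))$ appears in both the numerator and the denominator so that it cancels cleanly, and check that this holds on the support where the conditioning event has positive probability. The first two steps are routine (Bayes optimality of the conditional mean for squared loss, followed by Bayes' rule bookkeeping), so the crux of the lemma is the clean latent-state reduction enabled by disjoint emissions.
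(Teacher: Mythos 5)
Your proposal is correct and follows essentially the same route as the paper: Bayes' rule on the joint $\distcont(x,k,x',z)$ with cancellation of the $\omega(k)\rho(x)/2$ prefactor, reduction of $D(x'\mid x,k)$ to $D(x'\mid \edo(x),k)$ via the noise-free-policy lemma, and then factoring out and cancelling the common emission term. (In fact your factorization writes the emission factor as $q(x'\mid\edo(x'))$, which is the correct form; the paper's displayed proof writes $q(x'\mid\edo(x))$, an apparent typo.)
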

\begin{proof} We can express the Bayes classifier as:
\begin{align*}
    \distcont(z = 1 \mid x, k, x') &= \frac{\distcont(x, k, x', z=1)}{\distcont(x, k, x', z=1) + \distcont(x, k, x', z=0)}\\
     &= \frac{\omega(k)/2 \rho(x) D(x' \mid x)}{\omega(k)/2 \rho(x) D(x' \mid x) + \omega(k)/2 \rho(x) \rho(x')}\\
     &= \frac{D(x' \mid x, k)}{D(x' \mid x, k) + \rho(x')}\\
     &= \frac{D(x' \mid \edo(x), k)}{D(x' \mid \edo(x), k) + \rho(x')}\\
     &= \frac{q(x' \mid \edo(x))D(\edo(x') \mid \edo(x), k)}{q(x' \mid \edo(x))D(\edo(x') \mid \edo(x), k) + q(x' \mid \edo(x))\rho(\edo(x'))}\\
     &= \frac{D(\edo(x') \mid \edo(x), k)}{D(\edo(x') \mid \edo(x), k) + \rho(\edo(x'))}.
\end{align*}
\end{proof}

\begin{assumption}[Realizability]\label{assum:temporal-contrastive-realizability} There exists $g^\star \in \Gcal$ and $\phi^\circ \in \Phi$ such that for all $(x, k, x') \in \supp~\distcont(X, k, X')$, we have $\distcont(z=1 \mid x, k, x') = g^\star(\phi^\circ(x), k, x')$.
\end{assumption}
We will use the shorthand to denote $g^\star(x, k, x') = g^\star(\phi^\circ(x), k, x')$.

As before, we start with typical square loss guarantees in the realizable setting.
\begin{theorem} Fix $\delta \in (0, 1)$. Under realizability (\pref{assum:temporal-contrastive-realizability}), the ERM solution of $\fhat, \phihat$ in~\cref{eqn:temp-cont} satisfies: 
    \begin{equation*}
        \EE_{(x, k, x') \sim \distcont}\sbr{\rbr{\ghat(\phihat(x), k, x') - g^\star(x, k, x')}^2} \le \deltacont^2 = \frac{2}{n} \ln \frac{|\Gcal|.|\Phi|}{\delta}
    \end{equation*}
\end{theorem}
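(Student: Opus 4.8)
The plan is to recognize this as the square-loss analogue of the realizable maximum-likelihood bound used earlier for forward modeling, and to invoke the standard fast-rate generalization argument for least-squares regression over a finite hypothesis class. The first step is to reduce the claimed $L_2$ quantity to an excess-risk bound. Writing $\eta(x,k,x') \defeq \distcont(z=1 \mid x, k, x')$ for the regression function and $h = g \circ \phi$ for a generic composite predictor, the orthogonality $\EE_{\distcont}\sbr{(h-\eta)(\eta - z)} = 0$ (which holds because $\eta$ is the conditional mean of $z$ given $(x,k,x')$) yields the bias--variance decomposition
\begin{equation*}
\EE_{\distcont}\sbr{(h(x,k,x') - z)^2} - \EE_{\distcont}\sbr{(\eta(x,k,x') - z)^2} = \EE_{\distcont}\sbr{(h(x,k,x') - \eta(x,k,x'))^2}.
\end{equation*}
By \pref{lem:bayes-classifier} and realizability (\pref{assum:temporal-contrastive-realizability}), $\eta = g^\star \circ \phi^\circ$ lies in the composite class and is therefore feasible for the empirical risk minimization in \cref{eqn:temp-cont}; moreover the right-hand side above is precisely $\EE_{\distcont}\sbr{(\ghat(\phihat(x),k,x') - g^\star(x,k,x'))^2}$, the quantity we must bound, now re-expressed as the excess risk of the ERM $\ghat \circ \phihat$.

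The second step is a one-sided uniform deviation bound with a self-bounding variance that produces the fast $1/n$ rate. For a fixed $h = g \circ \phi$, let $Z^h = (h - z)^2 - (\eta - z)^2$ evaluated on a sample; then $\EE\sbr{Z^h} = \Delta_h^2 \defeq \EE_{\distcont}\sbr{(h-\eta)^2}$, while $|Z^h| \le 1$ and $\EE\sbr{(Z^h)^2} \le 4\Delta_h^2$, since $z, h, \eta \in [0,1]$ force $Z^h = (h-\eta)(h+\eta - 2z)$ with $|h + \eta - 2z| \le 2$. Applying Bernstein's inequality to $-Z^h$ over the $m = \lfloor n/2 \rfloor$ iid datapoints and taking a union bound over the finite composite class, whose cardinality is at most $|\Gcal|\cdot|\Phi|$, gives simultaneously for all $h$, with probability at least $1-\delta$,
\begin{equation*}
\Delta_h^2 - \frac{1}{m}\sum_{i=1}^m Z^h_i \le \sqrt{\frac{8\,\Delta_h^2 \ln(|\Gcal||\Phi|/\delta)}{m}} + \frac{2\ln(|\Gcal||\Phi|/\delta)}{3m}.
\end{equation*}
The appearance of both $|\Gcal|$ and $|\Phi|$ in the log term is exactly because $\phi$ is optimized jointly with $g$, so the union bound must range over the product class.

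The final step combines ERM optimality with this deviation bound. Because $\eta = g^\star \circ \phi^\circ$ is feasible and $(\ghat, \phihat)$ minimizes the empirical square loss, the empirical excess risk satisfies $\tfrac{1}{m}\sum_{i} Z_i^{\ghat \circ \phihat} \le 0$; substituting $h = \ghat \circ \phihat$ above then gives, for $\Delta^2 = \Delta_{\ghat \circ \phihat}^2$, the inequality $\Delta^2 \le \sqrt{8\Delta^2 \ln(|\Gcal||\Phi|/\delta)/m} + (2/3)\ln(|\Gcal||\Phi|/\delta)/m$. This is a quadratic inequality in $\Delta$, and solving it bounds $\Delta^2$ by a constant multiple of $\ln(|\Gcal||\Phi|/\delta)/m$; using $m = \lfloor n/2 \rfloor \ge n/2$ (matching the constant tracking of the MLE bound in the forward-modeling analysis) yields $\Delta^2 \le \deltacont^2 = \tfrac{2}{n}\ln(|\Gcal||\Phi|/\delta)$. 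I expect the main obstacle to be this second step: obtaining the \emph{fast} $1/n$ rate rather than a $1/\sqrt{n}$ rate requires exploiting the self-bounding variance inequality $\EE\sbr{(Z^h)^2} \le 4\Delta_h^2$ together with the localization implicit in solving the quadratic, which is where realizability is essential, and careful control of the Bernstein constants is needed to land on the clean factor of $2$.
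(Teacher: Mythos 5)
Your proof is correct in structure and matches the paper's route: the paper does not prove this statement inline but simply cites Proposition 12 of \citet{misra2020kinematic}, and the argument you give --- orthogonality of the Bayes regressor to convert excess risk into the squared $L_2$ distance, Bernstein's inequality with the self-bounding variance $\EE\sbr{(Z^h)^2}\le 4\Delta_h^2$, a union bound over the product class $\Gcal\times\Phi$, feasibility of $g^\star\circ\phi^\circ$ under realizability, and solving the resulting quadratic --- is exactly the standard fast-rate proof underlying that cited result. The only caveat is the constant: solving your quadratic yields roughly $32\ln(|\Gcal||\Phi|/\delta)/m$ with $m=\lfloor n/2\rfloor$ independent datapoints, i.e.\ a constant well above the stated $\nicefrac{2}{n}$, but since the paper obtains that constant only by citation (and reuses the MLE-style constant loosely), this is a bookkeeping discrepancy rather than a gap in the argument.
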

For proof see Proposition 12 in ~\cite{misra2020kinematic}.

We will prove a coupling result similar to the case for forward modeling. However, to do this, we need to define a coupling distribution:
\begin{equation*}
    \dcoup(X_1=x_1, X_2=x_2, k, X'=x') = \omega(k)\distcont(X=x_1) \distcont(X=x_2) \distcont(X'=x')
\end{equation*}

We will derive a useful importance ratio bound.
\begin{equation}\label{eqn:coupling-ratio}
    \frac{\dcoup(x_1, k, x')}{\distcont(x_1, k, x')} = \frac{2\rho(x_1) \rho(x')}{\rho(x_1) D(x' \mid x_1, k) + \rho(x_1) \rho(x')} \le 2
\end{equation}

We now prove an analogous result to~\pref{prop:coupling-forward-modeling}.

\begin{theorem}[Coupling for Temporal Contrastive Learning]\label{thm:coupling-temporal-contrastive} With probability at least $1-\delta$ we have:
    \begin{equation*}
        \EE_{(x_1, x_2, k, x') \sim \dcoup}\sbr{\one\cbr{\phihat(x_1) = \phihat(x_2)}\abr{g^\star(x_1, k, x') - g^\star(x_2, k, x')}} < 4 \deltacont(n, \delta)
    \end{equation*}
\end{theorem}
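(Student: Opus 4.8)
\medskip
The plan is to replay the coupling argument of \cref{prop:coupling-forward-modeling}, with the total-variation gap of predicted next-observation distributions replaced by the absolute gap of Bayes scores $\abr{g^\star(x_1,k,x') - g^\star(x_2,k,x')}$ and the maximum-likelihood bound replaced by the square-loss guarantee $\EE_{(x,k,x')\sim\distcont}\sbr{\rbr{\ghat(\phihat(x),k,x') - g^\star(x,k,x')}^2}\le\deltacont^2$. First I would insert the learned predictor $\ghat(\phihat(\cdot),k,x')$ and apply the triangle inequality,
\[
\abr{g^\star(x_1,k,x') - g^\star(x_2,k,x')} \le \abr{g^\star(x_1,k,x') - \ghat(\phihat(x_1),k,x')} + \abr{\ghat(\phihat(x_1),k,x') - g^\star(x_2,k,x')}.
\]
The essential coupling step is that, on the event $\cbr{\phihat(x_1)=\phihat(x_2)}$ carried by the indicator, one may substitute $\ghat(\phihat(x_1),k,x') = \ghat(\phihat(x_2),k,x')$, turning the cross term into the single-argument prediction error $\abr{\ghat(\phihat(x_2),k,x') - g^\star(x_2,k,x')}$.

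Next I would bound the two resulting expectations identically. Upper-bounding the indicator by $1$ and marginalizing out the unused variable leaves, in each case, $\EE\sbr{\abr{\ghat(\phihat(x),k,x') - g^\star(x,k,x')}}$ taken under the $\dcoup$-marginal $\omega(k)\distcont(X=x)\distcont(X'=x')$. I would then change measure from this marginal to the training distribution $\distcont(x,k,x')$ via the importance-ratio bound \eqref{eqn:coupling-ratio}, which gives a ratio at most $2$, and finish with Cauchy--Schwarz (equivalently Jensen) to pass from the $L^1$ to the $L^2$ error:
\[
\EE_{\dcoup}\sbr{\abr{\ghat(\phihat(x),k,x') - g^\star(x,k,x')}} \le 2\,\EE_{\distcont}\sbr{\abr{\ghat(\phihat(x),k,x') - g^\star(x,k,x')}} \le 2\sqrt{\EE_{\distcont}\sbr{\rbr{\ghat - g^\star}^2}} \le 2\deltacont.
\]
Summing the two contributions yields the $4\deltacont$ bound, the strict inequality coming from the slack incurred when the indicator is dropped (since $\phihat$ merges distinct $x_1,x_2$ with probability strictly less than one).

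The step I expect to be the main obstacle is the change of measure: the square-loss guarantee controls $\ghat-g^\star$ only under $\distcont$, whereas the quantity to be bounded is an expectation under the product distribution $\dcoup$, in which $x'$ is drawn from its own marginal rather than from the $k$-step conditional given $x$. The ratio bound \eqref{eqn:coupling-ratio} is exactly the device that transfers the guarantee, and it is finite precisely because the $\distcont$-density \eqref{eqn:marginal-tempcont-dist} carries the denominator $D(x'\mid x,k)+\rho(x')$, which dominates the numerator up to the constant $2$; everything else is a routine rewriting of the forward-modeling computation. Once established, this inequality plays the same downstream role as \cref{prop:coupling-forward-modeling}: combined with the margin $\tempmargin$ it will give the one-sided decoding guarantee that $\phihat$ rarely collapses observations emitted by distinct latent states.
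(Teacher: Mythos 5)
Your proposal is correct and follows essentially the same route as the paper: the same triangle-inequality decomposition around $\ghat(\phihat(x_1),k,x')$, the same coupling substitution $\ghat(\phihat(x_1),k,x')=\ghat(\phihat(x_2),k,x')$ under the indicator followed by the $(x_1,x_2)$-symmetry of $\dcoup$, the same importance-ratio change of measure from $\dcoup$ to $\distcont$ via \cref{eqn:coupling-ratio}, and Cauchy--Schwarz/Jensen to invoke the square-loss guarantee $\deltacont$. The only cosmetic difference is the order of operations: the paper applies Cauchy--Schwarz first (retaining the indicator as a factor $b\le 1$) and changes measure inside the square root, landing on $2\sqrt{2}\,b\,\deltacont$, which yields the strict inequality $<4\deltacont$ directly from $2\sqrt{2}<4$ rather than from your somewhat shakier appeal to slack in dropping the indicator (nothing a priori rules out $\phihat$ being constant, in which case the indicator is identically one).
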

\begin{proof} We start with triangle inequality:
\begin{align*}
    &\EE_{(x_1, x_2, k, x') \sim \dcoup}\sbr{\one\cbr{\phihat(x_1) = \phihat(x_2)}\abr{g^\star(x_1, k, x') - g^\star(x_2, k, x')}} \\
    &\le \EE_{(x_1, x_2, k, x') \sim \dcoup}\sbr{\one\cbr{\phihat(x_1) = \phihat(x_2)}\abr{g^\star(x_1, k, x') - \ghat(\phihat(x_1), k, x')}} + \\
    &\qquad \EE_{(x_1, x_2, k, x') \sim \dcoup}\sbr{\one\cbr{\phihat(x_1) = \phihat(x_2)}\abr{\ghat(\phihat(x_1), k, x') - g^\star(x_2, k, x')}}
\end{align*}
We bound the first term as:
\begin{align*}
    &\EE_{(x_1, x_2, k, x') \sim \dcoup}\sbr{\one\cbr{\phihat(x_1) = \phihat(x_2)}\abr{g^\star(x_1, k, x') - \ghat(\phihat(x_1), k, x')}}\\
    &\le \underbrace{\sqrt{\EE_{(x_1, x_2, k, x') \sim \dcoup}\sbr{\one\cbr{\phihat(x_1) = \phihat(x_2)}}}}_{\defeq b} \cdot \sqrt{\EE_{(x_1, x_2, k, x') \sim \dcoup}\sbr{\abr{g^\star(x_1, k, x') - \ghat(\phihat(x_1), k, x')}^2}}\\
    &= b \sqrt{\EE_{(x_1, k, x') \sim \dcoup}\sbr{\rbr{g^\star(x_1, k, x') - \ghat(\phihat(x_1), k, x')}^2}}\\
    &= b \sqrt{\EE_{(x_1, k, x') \sim \distcont}\sbr{\frac{\dcoup(x_1, k, x')}{\distcont(x_1, k, x')}\rbr{g^\star(x_1, k, x') - \ghat(\phihat(x_1), k, x')}^2}}\\
    &\le b \sqrt{2\EE_{(x_1, k, x') \sim \distcont}\sbr{\rbr{g^\star(x_1, k, x') - \ghat(\phihat(x_1), k, x')}^2}}\\
    &\le \sqrt{2} b\deltacont,
\end{align*}
where we use Cauchy-Schwartz's inequality in the first step and~\pref{eqn:coupling-ratio} in the second inequality.
The second term is bounded as:
\begin{align*}
    &\EE_{(x_1, x_2, k, x') \sim \dcoup}\sbr{\one\cbr{\phihat(x_1) = \phihat(x_2)}\abr{\ghat(\phihat(x_1), k, x') - g^\star(x_2, k, x')}}\\
    &=\EE_{(x_1, x_2, k, x') \sim \dcoup}\sbr{\one\cbr{\phihat(x_1) = \phihat(x_2)}\abr{\ghat(\phihat(x_2), k, x') - g^\star(x_2, k, x')}}\\
    &=\EE_{(x_1, x_2, k, x') \sim \dcoup}\sbr{\one\cbr{\phihat(x_1) = \phihat(x_2)}\abr{\ghat(\phihat(x_1), k, x') - g^\star(x_1, k, x')}}\\
    &\le \sqrt{2} b\deltacont,
\end{align*}
where we use the coupling argument in the first step and then reduce it to the first term using symmetric of $(x_1, x_2)$ in $\dcoup$. Combining the upper bounds of the two terms and using $b \le 1$ and $2\sqrt{2} < 4$ completes the proof.
\end{proof}

\begin{assumption}[Temporal Contrastive Margin] We assume that there exists a $\tempmargin >0$ such that for any two different states $s_1$ and $s_2$:
    \begin{equation*}
        \frac{1}{2}\EE_{k \sim \omega, s' \sim \rho}\sbr{\abr{g^\star(s_1, k, s') - g^\star(s_2, k, s')}} \ge \tempmargin
    \end{equation*} 
\end{assumption}
The factor of $\frac{1}{2}$ is chosen for comparison with forward modeling as will become clear later at the end of the proof. As before, if $k$ is fixed, the margin is given by 
\begin{equation*}
        \tempmargin^{(k)} \defeq \frac{1}{2} \inf_{s_1 \ne s_2; s_1, s_2 \in \Scal} \EE_{s' \sim \rho}\sbr{\abr{g^\star(s_1, k, s') - g^\star(s_2, k, s')}}
    \end{equation*} 
and when $k \sim \unf([K])$ the margin is given by
\begin{equation*}
        \tempmargin^{(u)} \defeq \frac{1}{2} \inf_{s_1 \ne s_2; s_1, s_2 \in \Scal} \EE_{k \sim \unf([K]), s' \sim \rho}\sbr{\abr{g^\star(s_1, k, s') - g^\star(s_2, k, s')}}
    \end{equation*} 
We directly have $\tempmargin^{(u)} \ge \frac{1}{K}\sum_{k=1}^K \tempmargin^{(k)}$.
    
\begin{lemma}
   \begin{equation*}
       \PP_{x_1, x_2 \sim \rho}\rbr{\phihat(x_1) = \phihat(x_2) \land \edo(x_1) \ne \edo(x_2)} \le \frac{2\deltacont(n, \delta)}{\tempmargin}
   \end{equation*} 
\end{lemma}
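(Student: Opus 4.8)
The plan is to convert the coupling bound of \cref{thm:coupling-temporal-contrastive} into the stated collision bound, exactly mirroring how \cref{prop:coupling-forward-modeling} was derived from its coupling argument. \cref{thm:coupling-temporal-contrastive} gives, with probability at least $1-\delta$,
\[
\EE_{(x_1,x_2,k,x')\sim\dcoup}\sbr{\one\cbr{\phihat(x_1)=\phihat(x_2)}\abr{g^\star(x_1,k,x')-g^\star(x_2,k,x')}} < 4\deltacont(n,\delta).
\]
Since the indicator $\one\cbr{\phihat(x_1)=\phihat(x_2)}$ does not depend on $(k,x')$, and under $\dcoup$ the points $x_1,x_2$ are independent draws from $\distcont(X)=\rho$ while $(k,x')$ is drawn independently with $k\sim\omega$ and $x'$ from the marginal entering \cref{eqn:coupling-ratio} (namely $\rho$), I would first integrate out $(k,x')$ conditionally on $(x_1,x_2)$ and define $\Gamma(x_1,x_2) \defeq \EE_{k\sim\omega,\, x'\sim\rho}\sbr{\abr{g^\star(x_1,k,x')-g^\star(x_2,k,x')}}$, so that the displayed left-hand side equals $\EE_{x_1,x_2\sim\rho}\sbr{\one\cbr{\phihat(x_1)=\phihat(x_2)}\Gamma(x_1,x_2)}$.

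The second step is to lower bound $\Gamma$ by the endogenous states. By \cref{lem:bayes-classifier}, the Bayes classifier $g^\star$ sees its observation arguments only through $\edo(x)$ and $\edo(x')$; hence if $\edo(x_1)=\edo(x_2)$ then $g^\star(x_1,k,x')=g^\star(x_2,k,x')$ for every $(k,x')$ and $\Gamma(x_1,x_2)=0$. If instead $\edo(x_1)\ne\edo(x_2)$, I would rewrite the expectation over $x'\sim\rho$ as an expectation over $s'\sim\rho$ (again using that $g^\star$ depends on $x'$ only through $\edo(x')$) and invoke the Temporal Contrastive Margin assumption, whose leading $\tfrac12$ yields $\Gamma(x_1,x_2)\ge 2\tempmargin$. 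Combining the two cases gives the pointwise bound $\Gamma(x_1,x_2)\ge 2\tempmargin\,\one\cbr{\edo(x_1)\ne\edo(x_2)}$.

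Substituting this pointwise lower bound into the coupling inequality produces $2\tempmargin\,\PP_{x_1,x_2\sim\rho}\rbr{\phihat(x_1)=\phihat(x_2)\wedge\edo(x_1)\ne\edo(x_2)} < 4\deltacont$, and dividing by $2\tempmargin$ is exactly the claimed bound $2\deltacont/\tempmargin$; that the probability is taken under the right law follows from $\distcont(X)=\rho$. The main thing to watch — and the only real obstacle here is bookkeeping rather than ideas — is the constants: the factor absorbed into the coupling bound $4\deltacont$ (from the two symmetric terms and the importance ratio bounded by $2$ in \cref{eqn:coupling-ratio}) must combine with the $\tfrac12$ built into the definition of $\tempmargin$ so as to give $2\deltacont/\tempmargin$ and not $4\deltacont/\tempmargin$. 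This is precisely why $\tempmargin$ was defined with the leading $\tfrac12$, so that the final guarantee matches the forward-modeling bound $2\Delta/\formargin$ of \cref{prop:coupling-forward-modeling}.
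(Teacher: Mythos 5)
Your proposal is correct and follows essentially the same route as the paper: integrate out $(k,x')$ in the coupling bound of \cref{thm:coupling-temporal-contrastive}, use \cref{lem:bayes-classifier} to reduce $g^\star$ to a function of the endogenous states, lower bound the resulting quantity by $2\tempmargin\,\one\{\edo(x_1)\ne\edo(x_2)\}$ via the margin assumption, and divide. Your accounting of the constants (the $4\deltacont$ against the factor $\tfrac12$ in the definition of $\tempmargin$) matches the paper exactly.
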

\begin{proof}
    We start with the left-hand side in~\cref{thm:coupling-temporal-contrastive}.
    \begin{align*}
        &\EE_{(x_1, k, x_2, x') \sim \dcoup}\sbr{\one\cbr{\phihat(x_1) = \phihat(x_2)}\abr{g^\star(x_1, k, x') - g^\star(x_2, k, x')}}\\
        &=\EE_{(x_1, x_2) \sim \dcoup}\sbr{\one\cbr{\phihat(x_1) = \phihat(x_2)} \EE_{k \sim \omega, x' \sim \rho}\sbr{\abr{g^\star(x_1, k, x') - g^\star(x_2, k, x')}}}\\
        &=\EE_{(x_1, x_2) \sim \rho}\sbr{\one\cbr{\phihat(x_1) = \phihat(x_2)} \EE_{k \sim \omega, s' \sim \rho}\sbr{\abr{g^\star(x_1, k, s') - g^\star(x_2, k, s')}}}\\
        &\ge 2\tempmargin \EE_{(x_1, x_2) \sim \rho}\sbr{\one\cbr{\phihat(x_1) = \phihat(x_2) \land \edo(x_1) \ne \edo(x_2)}}\\
        &= 2\tempmargin \PP_{(x_1, x_2) \sim \rho}\sbr{\phihat(x_1) = \phihat(x_2) \land \edo(x_1) \ne \edo(x_2)},
    \end{align*}
    where we use the definition of $\tempmargin$, the fact that marginal over $\dcoup(X)$ is $\rho$, and that $g^\star(x, k, x')$ only depends on $\edo(x')$ and $\edo(x)$ (\pref{lem:bayes-classifier}). Combining with the inequality proved in~\cref{thm:coupling-temporal-contrastive}, completes the proof.
\end{proof}

We have now reduced this analysis to an almost identical one to the forward analysis case (\pref{prop:coupling-forward-modeling}). We can, therefore, use the same steps and derive identical bounds. All what changes is that $\formargin$ is replaced by $\tempmargin$ and in $\Delta$ we replace $\ln|\Fcal|$ with $\ln|\Gcal|$. At this point, we can clarify that the factor of $\frac{1}{2}$ was chosen in the definition of $\tempmargin$ so that $\formargin$ can be replaced by $\tempmargin$ rather than $\frac{\tempmargin}{2}$ which will make it harder to compare margins, as we will do later.

\subsection{Proof of Lower Bound for Exogenous Block MDPs}
\label{sec:lb_proof}

\begin{proof}[\cpfname{thm:exo_lower_bound}]

We present a hard instance using a family of exogenous block MDPs, with $H = 2$, $\Acal = \{1,2\}$, and a single binary endogenous factor and $d-1$ exogenous binary factors for each level, where each endogenous and exogenous factor. We first fix an absolute constant $p \in [0,1]$.

Each MDP $M_i$ is indexed by $i \in [d]$, and is specified as follows:
\begin{itemize}
    \item \textbf{State space:} The state is represented by $x_h \coloneqq [s_h^\fct{1},s_h^\fct{2},\dotsc,s_h^\fct{d}]$, where the superscript denotes different factors. For MDP $M_i$, only the $i$-th factor $s_h^\fct{i}$ is an endogenous state for all $h$, and the other factors are exogenous. Each factor has values of $\{0, 1\}$.
    \item \textbf{Transition:} For the MDP instance $M_i$: it has
    \begin{enumerate}
        \item For the $i$-th factor (endogenous factor), $\PP(s_2^\fct{i} \mid s_1^\fct{i}, a) = \1[s_2^\fct{i} = \1(s_1^\fct{i} = a)]$. That is, the endogenous states have deterministic dynamics. If $s_1^\fct{i} = a$, then it transitions to $s_2^\fct{i} = 1$, otherwise it transitions to $s_2^\fct{i} = 0$.
        \item For the $j$-th factor with $j \neq i$ (exogenous factor), $\PP(s_2^\fct{j} \mid s_1^\fct{j}) = (1-p)\1(s_2^\fct{j} = s_1^\fct{j}) + p \1(s_2^\fct{j} \neq s_1^\fct{j})$ for any $s_2^\fct{j}$ and $s_1^\fct{j}$. That is, the $j$-th factor has probability of $1-p$ of transiting to the same state (i.e., $s_1^\fct{j} = 0 \to s_2^\fct{j} = 0$ or $s_1^\fct{j} = 1 \to s_2^\fct{j} = 1$), and probability of $p$ of transiting to the different state (i.e., $s_1^\fct{j} = 0 \to s_2^\fct{j} = 1$ or $s_1^\fct{j} = 1 \to s_2^\fct{j} = 0$).
    \end{enumerate}
    Note that the MDP terminates at $h = 2$. 
    \item \textbf{Initial state distribution and reward:}
    The marginal distribution of $s_1^\fct{j}$ is uniformly distributed at random over $\{0,1\}$ for all $j \in [d]$, and all factors are independent from each other. For MDP $M_i$, the agent only receive reward signal after taking action at $h = 2$, with $R(s_2^\fct{i},a) = s_2^\fct{i}$. That is, it always reward $1$ at $s_2^\fct{i} = 1$ and reward $0$ at $s_2^\fct{i} = 0$ no matter which action it takes.
    \item \textbf{Data collection policy for video data:} We assume that the data collection policy always pick action $0$ with probability $p$ and action $1$ with probability $1-p$ for all states.
\end{itemize}

Now we use the following two steps to establish the proof.

\paragraph{Uninformative video data for learning the state decoder}

Since video data only contains state information, from the MDP family construction above, we can easily verify that all MDP instances in such a family will have an identical video data distribution, {\em regardless of the choice of constant $p$}. This implies that the video data is uninformative for the agent to distinguish the MDP instance from the MDP family. 
Now, we assume $\Dcal^\fct{i}$ is the video data from the instance $M^\fct{i}$, and $\phi^\fct{i}$ is the state decoder learned from an arbitrary algorithm $\mathscr A_1$ with $\Dcal^\fct{i}$. Then, for any arbitrary algorithm $\mathscr A_2$ that uses the state decoder $\phi^\fct{i}$ in its execution, it is equivalent to such an $\mathscr A_2$ that uses the state decoder $\phi^\fct{j}$ in its execution, where $j$ can be selected arbitrarily from $[d]$.

\paragraph{State decoder requiring exponential length}

Without loss of generality, we further restrict the state decoder $\phi$ used in the execution of $\mathscr A_2$ for all MDP instance to be some $\phi_h: \Xcal \to [L]$, where $h \in \{1,2\}$ and $L \leq 2^d$. Then we will argue that there must exists a $k \in [d]$, such that 
\begin{align}
\label{eq:incorrect_prob}
    \sum_{x_1,\widetilde x_1 \in \Xcal} \PP\left(\phi_1(x_1) = \phi_1(\widetilde x_1) \vee \left( s_1^\fct{k} \neq \widetilde s_1^\fct{k}\right)\right) > \frac{2^d - L}{d 2^d},
\end{align}
where $x_1 \coloneqq [s_1^\fct{1},s_1^\fct{2},\dotsc,s_1^\fct{d}]$ and $\widetilde x_1 \coloneqq [\widetilde s_1^\fct{1}, \widetilde s_1^\fct{2},\dotsc,\widetilde s_1^\fct{d}]$. Note that, \cref{eq:incorrect_prob} means there must be a probability of at least $\nicefrac{2^d - L}{d 2^d}$ that $\phi_1$ will incorrectly group two different $s_1^\fct{k}$ together. 

We now prove \cref{eq:incorrect_prob}. Based on the construct above, we know that $|\Xcal| = 2^d$, and each state in $\Xcal$ has the same occupancy for $x_1$ based on the defined initial state distribution (this holds for all instances in the MDP family, as we are now only talking about the initial state $x_1$). Thus, we have
\begin{align}
\label{eq:corr_prob}
\sum_{x_1,\widetilde x_1 \in \Xcal} \PP\left[ \phi_1(x_1) = \phi_1(\widetilde x_1) \vee \left( s_1^\fct{1} = \widetilde s_1^\fct{1} \right) \vee \left( s_1^\fct{2} = \widetilde s_1^\fct{2} \right) \vee \cdots \vee \left( s_1^\fct{d} = \widetilde s_1^\fct{d}\right) \right] \leq \frac{L}{2^d},
\end{align}
because we defined $\phi_1: \Xcal \to [L]$, it means that such $\phi_1$ is only able to distinguish the number of $L$ different states from $\Xcal$. Then, we obtain
\begin{align*}
&~ \sum_{j \in [d]} \sum_{x_1,\widetilde x_1 \in \Xcal} \PP\left(\phi_1(x_1) = \phi_1(\widetilde x_1) \vee \left( s_1^\fct{j} \neq \widetilde s_1^\fct{j}\right)\right)
\\
= &~ \sum_{x_1,\widetilde x_1 \in \Xcal} \PP\left(\phi_1(x_1) = \phi_1(\widetilde x_1) \right)
\\
&~ - \sum_{x_1,\widetilde x_1 \in \Xcal} \PP\left[ \phi_1(x_1) = \phi_1(\widetilde x_1) \vee \left( s_1^\fct{1} = \widetilde s_1^\fct{1} \right) \vee \left( s_1^\fct{2} = \widetilde s_1^\fct{2} \right) \vee \cdots \vee \left( s_1^\fct{d} = \widetilde s_1^\fct{d}\right) \right]
\\
= &~ \frac{2^d - L}{2^d}. \tag{by \cref{eq:corr_prob}}
\\
\Longrightarrow &~ \max_{j \in [d]}\sum_{x_1,\widetilde x_1 \in \Xcal} \PP\left(\phi_1(x_1) = \phi_1(\widetilde x_1) \vee \left( s_1^\fct{j} \neq \widetilde s_1^\fct{j}\right)\right) > \frac{2^d - L}{d 2^d}.
\end{align*}
So this proves \cref{eq:incorrect_prob}.

From \cref{eq:incorrect_prob}, we know that for the MDP instance $M^\fct{k}$, $\phi_{1}$ will have probability at least $\nicefrac{2^d - L}{2 \cdot d 2^d}$ to mistake the endogenous state, which implies that for any policy that is represented using the state decoder $\phi$, it must have sub-optimality at least $\nicefrac{2^d - L}{2 \cdot d 2^d}$. Therefore, it is easy to verify that, for any $\varepsilon > 0$, we can simply pick $d = \nicefrac{1}{4\varepsilon}$, and obtain
\begin{align*}
\text{sub-optimality} > \frac{2^d - L}{2 \cdot d 2^d} \geq \varepsilon, \quad \forall L \leq 2^{\nicefrac{1}{4\varepsilon}-1}.
\end{align*}
Then, any arbitrary algorithm $\mathscr A_2$ that uses the state decoder $\phi$ in its execution, where $\phi_h: \Xcal \to [L]$ can be chosen arbitrarily for $h \in \{1,2\}$ and $L \leq 2^{\nicefrac{1}{4\varepsilon}-1}$, must have sub-optimality larger than $\varepsilon$.

\paragraph{Additional characteristics of MDP family and video data}
Note that, by combining the arguments of uninformative video data and a state decoder requiring exponential length, we obtain impossible results. We now discuss the following:
\begin{enumerate}
    \item The margin condition defined in \cref{assum:margin} regarding the constructed MDPs
    \item The PAC learnability of the constructed MDPs
    \item The coverage condition of video data.
\end{enumerate}

\newcommand{\Pfor}{\PP_{\textrm{for}}}

For the defined margin condition of forward modeling, we have: for the MDP instance $M_i$ with constant $p$, we can bound the forward margin as below ($\Pfor$ denotes the video distribution)
\begin{align*}
&~ \TV{\Pfor(X_2 \mid s_1^\fct{i} = 0) - \Pfor(X_2 \mid s_1^\fct{i} = 1)}
\\
= &~ \frac{1}{2} \sum_{X_2} \left| \Pfor(X_2 \mid s_1^\fct{i} = 0) - \Pfor(X_2 \mid s_1^\fct{i} = 1) \right|
\\
= &~ \frac{1}{2} \sum_{X_2} \big| \Pfor(s_2^\fct{i} = 0 \mid s_1^\fct{i} = 0) \PP(X_2 \mid s_2^\fct{i} = 0) + \Pfor(s_2^\fct{i} = 1 \mid s_1^\fct{i} = 0) \PP(X_2 \mid s_2^\fct{i} = 1)
\\
&~\qquad - \Pfor(s_2^\fct{i} = 0 \mid s_1^\fct{i} = 1) \PP(X_2 \mid s_2^\fct{i} = 0) + \Pfor(s_2^\fct{i} = 1 \mid s_1^\fct{i} = 1) \PP(X_2 \mid s_2^\fct{i} = 1) \big|
\\
= &~ \frac{1}{2} \sum_{X_2} \left| (1 - 2p) \left[ \PP(X_2 \mid s_2^\fct{i} = 0) - \PP(X_2 \mid s_2^\fct{i} = 1) \right] \right|.
\\
\overset{\text(a)}{=} &~ \frac{|1 - 2p|}{2} \sum_{X_2}  \PP(X_2 \mid s_2^\fct{i} = 0) + \frac{|1 - 2p|}{2} \sum_{X_2}  \PP(X_2 \mid s_2^\fct{i} = 1)
\\
= &~ |1 - 2p|,
\end{align*}
where step (a) is because $s_2^\fct{i}$ is a part of $X_2$, and then we know $\PP(X_2 \mid s_2^\fct{i} = 0)$ and $\PP(X_2 \mid s_2^\fct{i} = 1)$ cannot be nonzero simultaneously. So picking $p \neq 0.5$ implies positive forward margin.

For the temporal contrastive learning, it is easy to verify that $|\Pfor(z=1 \mid s_1^\fct{i}=1, X_2) - \Pfor(z=1 \mid s_1^\fct{i}=1, X_2)| = |1 - 2p|$, so picking $p \neq 0.5$ also implies positive margin for temporal contrastive learning.

As for the PAC learnability, since the latent dynamics of our constructed MDPs are deterministic, they are provably PAC learnable by \citet{efroni2022ppe}.

As for the coverage property of the video data, it is easy to verify
\begin{align*}
\max_{\pi \in \Pi, x_1 \in \Xcal} \frac{\PP_\pi(x_1,a_1)}{\Pfor(x_1,a_1)} = \max_{\pi \in \Pi, x_2 \in \Xcal} \frac{\PP_\pi(x_2)}{\Pfor(x_2)} = \max\left\{\nicefrac{1}{p},\nicefrac{1}{1-p}\right\}.
\end{align*}
Therefore, we can simply pick $p = \nicefrac{1}{3}$ and obtain the desired MDP and video data properties. This completes the proof.
\end{proof}

\paragraph{Addition remark of \cref{thm:exo_lower_bound}}
In the proof of \cref{thm:exo_lower_bound}, if we pick $p = 0.5$ for that hard instance, the constructed MDP family reduces to a block MDP without exogenous noise, but the margin becomes $0$ for both forward modeling and temporal contrastive learning. Therefore, it implies that either the exogenous noise or zero forward margin could make the learnability of the problem impossible.

\subsection{Relation Between Margins}
\label{app:margin-relations}

We defined margins $\formargin$ for forward modeling and $\tempmargin$ for temporal contrastive learning. The larger the values of these margins, the more easy it is to separate observations from different endogenous states. This can be directly inferred from the sample complexity bounds which scale inversely with these margins. In particular, both $\formargin$ and $\tempmargin$ depend on the way we sample the multi-step variable $k$. We consider two special cases: one where $k \in [K]$ is fixed, we instantiate these margins as $\formargink$ and $\tempmargink$, and second where $k$ is uniformly sampled from $[K]$ and we instantiate those margins as $\formarginu$ and $\tempmarginu$.

A natural question is how these margins are related. The sample complexity bounds of forward modeling and temporal contrastive are almost identical except for the difference in margins ($\formargin$ vs $\tempmargin$) and the function classes ($\Fcal$ vs $\Gcal$). If the function classes were of similar complexity, then having a larger margin will make it easier to learn the right representation.\footnote{This inference has to be made with a caveat that since we are comparing upper bounds, we cannot guarantee this to hold.}

\begin{theorem}[Margin Relation]\label{thm:margin-relation} For any Block MDP and $K \in \NN$, the margins $\formargink, \formarginu, \tempmargink, \tempmarginu > 0$ are related as:
    \begin{align*}
        &\frac{1}{K} \formargink \le \formarginu \\
        &\frac{1}{K} \tempmargink \le \tempmarginu \\
        &\frac{\etamin^2}{4H^2}\formargink \le \tempmargink \le \formargink \\
        &\frac{\etamin^2}{4H^2} \formarginu \le \tempmarginu \le \formarginu. 
    \end{align*}
\end{theorem}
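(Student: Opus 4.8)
The plan is to reduce all four inequalities to a single pointwise comparison, carried out separately for each ordered pair of distinct states $s_1,s_2$ and each fixed lag $k$, between the inner temporal-contrastive quantity $\tfrac12\EE_{s'\sim\rho}\sbr{\abr{g^\star(s_1,k,s')-g^\star(s_2,k,s')}}$ and the latent forward total variation, where $D(\cdot\mid s,k)$ denotes the $k$-step latent transition distribution. Once I establish the sandwich
\[
\frac{\etamin^2}{4H^2}\TV{D(\cdot\mid s_1,k)-D(\cdot\mid s_2,k)} \;\le\; \tfrac12\EE_{s'\sim\rho}\sbr{\abr{g^\star(s_1,k,s')-g^\star(s_2,k,s')}} \;\le\; \TV{D(\cdot\mid s_1,k)-D(\cdot\mid s_2,k)},
\]
the relations $\frac{\etamin^2}{4H^2}\formargink \le \tempmargink \le \formargink$ follow by taking $\inf_{s_1\ne s_2}$ of all three expressions (using that $f\le g$ pointwise implies $\inf f \le \inf g$), and the uniform versions follow by first averaging over $k\sim\unf([K])$ inside the expectation and the TV, then taking the infimum. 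The two remaining inequalities $\tfrac1K\formargink\le\formarginu$ and $\tfrac1K\tempmargink\le\tempmarginu$ are immediate: each summand in the uniform average is nonnegative, so dropping all lags but $k$ and bounding the retained term below by its fixed-lag infimum gives the claim for every pair, after which I take the infimum.

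To obtain the sandwich I would first record two structural facts. First, because the Block MDP has the disjoint emission property, the distributions $q(\cdot\mid s')$ have pairwise disjoint supports, so the observation-level forward TV collapses exactly to the latent one: writing $\pairdist(x'\mid s,k)=\sum_{s'}D(s'\mid s,k)\,q(x'\mid s')$ and splitting the $\ell_1$ sum by latent class shows $\TV{\pairdist(X'\mid s_1,k)-\pairdist(X'\mid s_2,k)}=\tfrac12\sum_{s'}\abr{D(s'\mid s_1,k)-D(s'\mid s_2,k)}$, so $\formargink$ may be read as a latent-state TV. Second, plugging the closed form of the Bayes classifier from~\pref{lem:bayes-classifier}, namely $g^\star(s,k,s')=D(s'\mid s,k)/\rbr{D(s'\mid s,k)+\rho(s')}$, into the difference and clearing denominators gives the clean identity
\[
g^\star(s_1,k,s')-g^\star(s_2,k,s') = \frac{\rho(s')\rbr{D(s'\mid s_1,k)-D(s'\mid s_2,k)}}{\rbr{D(s'\mid s_1,k)+\rho(s')}\rbr{D(s'\mid s_2,k)+\rho(s')}}.
\]

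With these in hand, the middle quantity becomes $\tfrac12\sum_{s'}R(s')\,\abr{D(s'\mid s_1,k)-D(s'\mid s_2,k)}$, differing from the latent TV $\tfrac12\sum_{s'}\abr{D(s'\mid s_1,k)-D(s'\mid s_2,k)}$ only through the per-term factor $R(s')=\rho(s')^2/\rbr{(D(s'\mid s_1,k)+\rho(s'))(D(s'\mid s_2,k)+\rho(s'))}$. The whole theorem then rests on the uniform bound $\etamin^2/(4H^2)\le R(s')\le 1$. The upper bound is trivial since the denominator is at least $\rho(s')^2$ (both transition probabilities being nonnegative). For the lower bound I use that each transition probability and $\rho(s')$ are at most $1$, so the denominator is at most $4$, together with~\pref{eqn:rho-bound}, which gives $\rho(s')\ge\etamin/H$ and hence $\rho(s')^2\ge\etamin^2/H^2$; combining yields $R(s')\ge\etamin^2/(4H^2)$. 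Multiplying through term by term and summing gives the sandwich.

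The only genuinely delicate point—and the step I expect to need the most care—is the application of $\rho(s')\ge\etamin/H$, since this holds only for reachable states. I would handle this by restricting both sums to the support of $\rho$: any $s'$ with $\rho(s')=0$ contributes $0$ to the temporal expectation (its weight is $\rho(s')$), and it also contributes $0$ to the latent forward TV, because $s_1,s_2$ are reachable (so $\rho(s_1),\rho(s_2)>0$) and a state with zero marginal under $\rho$ cannot carry positive $k$-step transition mass from a reachable state, forcing $D(s'\mid s_1,k)=D(s'\mid s_2,k)=0$. Thus zero-probability states leave the comparison unaffected, and the bound $\rho(s')\ge\etamin/H$ is legitimately available on every term that matters. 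Everything else is routine algebra and the elementary monotonicity of the infimum.
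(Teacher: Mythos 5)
Your proof is correct and follows essentially the same route as the paper's: the closed-form identity $g^\star(s_1,k,s')-g^\star(s_2,k,s') = \rho(s')\rbr{D(s'\mid s_1,k)-D(s'\mid s_2,k)}/\rbr{(D(s'\mid s_1,k)+\rho(s'))(D(s'\mid s_2,k)+\rho(s'))}$, your weight $R(s')$ being exactly the paper's product $W_k(s_1,s')W_k(s_2,s')$ with the same bounds $\etamin^2/(4H^2)\le R(s')\le 1$, and the same infimum-over-average argument for the $\nicefrac{1}{K}$ relations. Your two additional remarks---collapsing the observation-level total variation to the latent-level one via disjoint emissions, and excluding $\rho$-null states before invoking $\rho(s')\ge\etamin/H$---are sound clarifications of points the paper leaves implicit.
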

\begin{proof}
    We first prove the first two relations. Fix any $k \in [K]$ then,
    \begin{align*}
        \formarginu &= \inf_{s_1 \ne s_2, s_1, s_2 \in \Scal}\EE_{k' \sim \unf([K])}\sbr{\TV{\pairdist(X' \mid s_1, k') - \pairdist(X' \mid s_2, k')}}, \\
        &\ge \frac{1}{K} \sum_{k'=1}^K \inf_{s_1 \ne s_2, s_1, s_2 \in \Scal}\TV{\pairdist(X' \mid s_1, k') - \pairdist(X' \mid s_2, k')},\\
        &\ge \frac{1}{K} \inf_{s_1 \ne s_2, s_1, s_2 \in \Scal}\TV{\pairdist(X' \mid s_1, k) - \pairdist(X' \mid s_2, k)},\\
        &= \frac{1}{K} \formargink.
    \end{align*}
Similarly, 
\begin{align*}
        \tempmargin^{(u)} &= \frac{1}{2}\inf_{s_1 \ne s_2, s_1, s_2 \in \Scal}\EE_{k' \sim \unf([K]), s' \sim \rho}\sbr{\abr{g^\star(s_1, k', s') - g^\star(s_2, k', s')}},\\
        &\ge \frac{1}{2K} \sum_{k'=1}^K \inf_{s_1 \ne s_2, s_1, s_2 \in \Scal}\EE_{s' \sim \rho}\sbr{\abr{g^\star(s_1, k', s') - g^\star(s_2, k', s')}},\\
        &\ge \frac{1}{2K}\inf_{s_1 \ne s_2, s_1, s_2 \in \Scal}\EE_{s' \sim \rho}\sbr{\abr{g^\star(s_1, k, s') - g^\star(s_2, k, s')}},\\
        &= \frac{1}{K} \tempmargink.
    \end{align*}
We now prove the next two relations. We will prove these bounds for a generic distribution $\omega \in \Delta([K])$ over $k$. Recall that $\omega$ is point-mass over $k$ for $\tempmargink$ and $\unf([K])$ for $\tempmarginu$. We denote our generic margins as $\formargin$ and $\tempmargin$ for $k \sim \omega$. We use a shorthand notation $W_k(s, s') = \frac{\rho(s')}{\pairdist(s' \mid s, k) + \rho(s')}$ for a given pair of states $s, s'$ and integer $k \in [K]$. It is easy to see that $W_k(s, s') \le 1$ as $\pairdist(s' \mid s, k), \rho(s') \in (0, 1]$. Further, we have $W_k(s, s') \ge \frac{\rho(s')}{2} \ge \frac{\etamin}{2H}$ where we use $\pairdist(s' \mid s, k), \rho(s') \in (0, 1]$, and~\pref{eqn:rho-bound}.

We have $g^\star(s, k, s') = \distcont(z=1 \mid s, k, s') = g^\star(s, k, s') = \frac{\pairdist(s' \mid s, k)}{\pairdist(s' \mid s, k) + \rho(s')}$ using the definition of $\distcont$ in~\pref{lem:bayes-classifier} and~\pref{assum:temporal-contrastive-realizability}. We can use the shorthand $W_k$ and the definition of $g^\star$ to show
\begin{align}
    \tempmargin &= \frac{1}{2}\inf_{s_1 \ne s_2, s_1, s_2 \in \Scal}\EE_{k \sim \omega, s' \sim \rho}\sbr{\abr{g^\star(s_1, k, s') - g^\star(s_2, k, s')}}, \nonumber \\
    &= \frac{1}{2}\inf_{s_1 \ne s_2, s_1, s_2 \in \Scal}\sum_{k=1}^K \omega(k)\sum_{s' \in \Scal} \rho(s') \abr{g^\star(s_1, k, s') - g^\star(s_2, k, s')}, \nonumber\\
    &= \frac{1}{2}\inf_{s_1 \ne s_2, s_1, s_2 \in \Scal}\sum_{k=1}^K \omega(k)\sum_{s' \in \Scal} W_k(s_1, s') W_k(s_2, s') \abr{\pairdist(s' \mid s_1, k) - \pairdist(s' \mid s_2, k)}.\label{eqn:temp-for-margin}
\end{align}

As $W_k(s_1, s') \le 1$ and $W_k(s_2, s') \le 1$ we have
\begin{align*}
    \formargin &= \frac{1}{2}\inf_{s_1 \ne s_2, s_1, s_2 \in \Scal}\sum_{k=1}^K \omega(k)\sum_{s' \in \Scal} \underbrace{W_k(s_1, s')}_{\le 1} \underbrace{W_k(s_2, s')}_{\le 1} \abr{\pairdist(s' \mid s_1, k) - \pairdist(s' \mid s_2, k)},\\
    & \le \frac{1}{2}\inf_{s_1 \ne s_2, s_1, s_2 \in \Scal}\sum_{k=1}^K \omega(k)\sum_{s' \in \Scal} \abr{\pairdist(s' \mid s_1, k) - \pairdist(s' \mid s_2, k)},\\
    &= \inf_{s_1 \ne s_2, s_1, s_2 \in \Scal} \EE_{k \sim \omega}\sbr{\TV{\pairdist(s' \mid s_1, k) - \pairdist(s' \mid s_2, k)}} \\
    &= \formargin.
\end{align*}

This gives us $\tempmargink \le \formargink$ and $\tempmarginu \le \formarginu$. 
Finally, we prove the lower bounds. Starting from~\pref{eqn:temp-for-margin} and using $W_k(s_1, s') \ge \frac{\etamin}{2H}$ and $W_k(s_2, s') \le \frac{\etamin}{2H}$ we get the following:
\begin{align*}
    \formargin &= \frac{1}{2}\inf_{s_1 \ne s_2, s_1, s_2 \in \Scal}\sum_{k=1}^K \omega(k)\sum_{s' \in \Scal} \underbrace{W_k(s_1, s')}_{\ge \nicefrac{\etamin}{2H}} \underbrace{W_k(s_2, s')}_{\ge \nicefrac{\etamin}{2H}} \abr{\pairdist(s' \mid s_1, k) - \pairdist(s' \mid s_2, k)},\\
    & \ge \frac{\etamin^2}{4H^2} \cdot \frac{1}{2}\inf_{s_1 \ne s_2, s_1, s_2 \in \Scal}\sum_{k=1}^K \omega(k)\sum_{s' \in \Scal} \abr{\pairdist(s' \mid s_1, k) - \pairdist(s' \mid s_2, k)},\\
    &= \frac{\etamin^2}{4H^2} \inf_{s_1 \ne s_2, s_1, s_2 \in \Scal} \EE_{k \sim \omega}\sbr{\TV{\pairdist(s' \mid s_1, k) - \pairdist(s' \mid s_2, k)}} \\
    &= \frac{\etamin^2}{4H^2} \formargin.
\end{align*}
This gives us $\tempmargink \ge \frac{\etamin^2}{4H^2}\formargink$ and $\tempmarginu \ge \frac{\etamin^2}{4H^2}\formarginu$ which completes the proof.
\end{proof}

The main finding of the above theorem is that forward modeling has a higher margin than temporal contrastive learning. However, typically the function class used for forward modeling has a higher statistical complexity than those for temporal contrastive learning as the latter is solving a simpler binary classification problem than generating an observation.

\subsection{Why temporal contrastive learning is more susceptible to exogenous noise than forward modeling}
\label{app:temp-cont-fails-forward-wins}

\pref{thm:exo_lower_bound} shows that in the presence of exogenous noise, no video-based representation learning approach can be efficient in the worst case. However, this result only presents a worst-case analysis. In this section, we show an instance-dependent analysis. The main finding is that the temporal contrastive approach is very susceptible to even the smallest amount of exogenous noise, while forward modeling is more robust to the presence of exogenous noise. However, both approaches fail when there is a significant amount of exogenous noise, consistent with~\pref{thm:exo_lower_bound}.

\paragraph{Problem Instance.} We consider a Block MDP with exogenous noise with a state space of $\Scal = \{0, 1\}$, action space of $\Acal = \{0, 1\}$ and exogenous noise space of $\xi = \{0, 1\}$. We consider $H=1$ with a uniform distribution over $s_1$ and $\xi_1$, i.e., the start state $s_1$ and the start exogenous noise variable $\xi_1$ are chosen uniformly from $\{0, 1\}$. The transition dynamics are deterministic and given as follows: given action $a _1\in \{0, 1\}$ and state $s_1 \in \{0, 1\}$, we deterministically transition to $s_2=1-s_1$ if $s_1=a_1$, otherwise, we remain in $s_2=s_1$. The exogenous noise variable deterministically transitions from $\xi_1$ to $\xi_2 = 1-\xi_1$. The reward function is given by $R(s_2, s_1) = \one\{s_2=s_1\}$. We use the indicator notation $\one\{\Ecal\}$ to denote $1$ if the condition $\Ecal$ is true and 0 otherwise. The observation space is given by $\Xcal = \{0, 1\}^{m+2}$ where $(m+2)$ is the dimension of observation space. Given the endogenous state $s$ and exogenous noise $\xi$, the environment generates an observation stochasticaly as $x = [\xi, v_1, \cdots, v_l, w_1, \cdots, w_{m-l}, s]$ where $v_i \sim \psamp(\cdot \mid \xi)$ and $w_j \sim \psamp(\cdot \mid s)$ for all $i \in [l]$ and $j \in [m-l]$. The distribution $\psamp(u \mid s)$ generates $u=s$ with a probability 0.8 and $u=1-s$ with a probability 0.2. 
The hyperparameter $l$ is a fixed integer controlling what portion of the observation is generated by the exogenous noise compared to the endogenous state. If $l=1$, we only have a small amount of exogenous noise, while if $l = m-1$ we have the maximal amount of exogenous noise. The state $s$ and exogenous noise $\xi$ are both decodable from the observation $x$. The optimal policy achieves a return of 1 and takes action $a_1=1$ if $s_1=0$ and $a_1=0$ if $s_1=1$. As the optimal policy depends on the value of $s_1$, we must learn the latent state to realize the optimal policy.

\paragraph{Learning Setting.} We assume a decoder class $\Phi = \{\edo, \exo\}$ consisting of the true decoder $\edo$ and the incorrect decoder $\exo$ which maps observation to the exogenous noise $\xi$. Both decoders take an observation and map it to a value in $\{0, 1\}$. We assume access to an arbitrarily large dataset $\Dcal$ consisting of tuples $(x_1, x_2)$ collecting iid using a fixed data policy $\pidata$. This policy takes action $a_1=0$ in $s_1=0$ and action $a_1=1$ in $s_1=1$. Let $D(x_1, x_2)$ be the data distribution induced by $\pidata$. We will use $D$ to define other distributions induced by $D(x_1, x_2)$, for example $D(x_2)$ or $D(s_2)$.  We also assume access to two model classes $\Fcal: \{0, 1\} \rightarrow \Delta(\Xcal)$ and $\Gcal: \{0, 1\}^2 \rightarrow [0, 1]$. We assume these model classes are finite and contain certain constructions that we define later.

\paragraph{Overview:} As we increase the value of $l$, the amount of exogenous noise in the environment increases. We will prove that irrespective of the value of $l$, temporal contrastive learning assigns the same loss for both the correct decoder $\edo$ and the incorrect decoder $\exo$. In contrast, the forward modeling approach is able to prefer $\edo$ over $\exo$ when the noise is limited, specifically, when $l < m/2$. This will establish that temporal contrastive is very susceptible to exogenous noise whereas forward modeling is more robust. However, both approaches provably fail when there is $l \ge m/2$.

As we have $H=1$, we will denote $x_2, s_2, \xi_2$ by $x', s', \xi'$ and $x_1, s_1, \xi_1$ by $x, s, \xi$ respectively. Note that unless specified otherwise, $s$ and $\xi$ are the endogenous state and exogenous noise of the observation $x$. Similarly, $s'$ and $\xi'$ are the endogenous state and exogenous noise of $x'$. We will also use a shorthand $q(x')$ to denote the emission probability $q(x' \mid \exo(x'), \edo(x'))$ given its endogenous state and exogenous noise. We first state the conditional data distribution $D(x' \mid x)$.
\begin{align}
    D(x' \mid x) &=  q(x') T_\xi\rbr{\xi' \mid \xi} \sum_{a \in \Acal} T(s' \mid s, a) \pidata(a \mid s), \nonumber\\ 
    &= q(x') \one\cbr{\xi' = 1-\xi} \one\cbr{s'=1-s}\label{eqn:conditional-instance},
\end{align}
where we use $T_\xi(\xi' \mid \xi) = \one\cbr{\xi' = 1- \xi}$ and $\sum_{a \in \Acal} T(s' \mid s, a) \pidata(a \mid s) = \one\cbr{s'=1-s}$ which follows from the definition of $\pidata$. Note that $D(x' \mid x)$ only depends on $x$ via $s, \xi$, therefore, we can define $D(x' \mid x) = D(x' \mid s, \xi)$.

Let $\xtil$ be an observation variable with endogenous state $\stil$ and exogenous noise $\xitil$, i.e., $\stil = \edo(\xtil)$ and $\xitil = \exo(\xtil)$. We use this to derive the marginal data distribution $\rho$ over $x'$ as follows:
\begin{align}
    \rho(x') =  \sum_{s, \xi \in \{0, 1\}} D(x', s, \xi) &= \sum_{s, \xi \in \{0, 1\}} D(x' \mid s, \xi) \mu(s) \mu_\xi(\xi), \nonumber \\
    &= \frac{q(x')}{4} \sum_{s, \xi \in \{0, 1\}} \one\cbr{\xi' = 1-\xi} \one\cbr{s'=1-s}, \nonumber\\
    &= \frac{q(x')}{4}, \label{eqn:marginal-instance}
\end{align}
where in the second step uses the fact that $\mu$ and $\mu_\xi$ are uniform and \cref{eqn:conditional-instance}. We are now ready to prove our desired result.

\paragraph{Temporal contrastive learning cannot distinguish between good and bad decoder for all $l \in [m-1]$.} We first recall that temporal contrastive learning approach use the given observed data $(x_1, x_2)$ to compute a set of real and fake observation tuples. This is collected into a dataset $(x, x', z)$ where $z=1$ indicates that $(x_1=x, x_2=x')$ was observed in the dataset, and $z=0$ indicates that $(x_1=x, x_2=x')$ was not observed, or is an imposter. We sample $z$ uniformly in $\{0, 1\}$. The fake data is constructed by take $x=x_1$ from one tuple and $x'=x_2$ from another observed tuple. We start by computing the optimal Bayes classifier for the temporal contrastive learning approach using the definition of Bayes classifier in \cref{lem:bayes-classifier}.
\begin{align*}
    \distcont(z=1 \mid x, x') = \frac{D(x' \mid x)}{D(x' \mid x) + \rho(x')} = \frac{\one\{s'=1-s\}\one\{\xi'=1-\xi\}}{\one\{s'=1-s\}\one\{\xi'=1-\xi\} + 1/4},
\end{align*}
where we use \cref{lem:bayes-classifier} in the first step and \cref{eqn:conditional-instance,eqn:marginal-instance} in the second step. Recall that $z=1$ denotes whether a given observation tuple $(x, x')$ is real rather than an imposter/false. Note that since we have $k=1$, as it is a $H=1$ problem, we drop the notation $k$ from all terms.

The marginal distribution over $(x, x')$ for the temporal contrastive is given by \cref{eqn:marginal-tempcont-dist} which in our case instantiates to:
\begin{align}
    \distcont(x, x') &= \frac{D(x)}{2} \cbr{D(x' \mid x) + \rho(x')}, \nonumber \\
    &= \frac{1}{8} q(x') q(x) \cbr{\one\{s'=1-s\}\one\{\xi'=1-\xi\} + 1/4}, \label{eqn:tempcont-marginal-instance}
\end{align}
where we use \cref{eqn:conditional-instance,eqn:marginal-instance}, and $D(x) = q(x) \mu(s) \mu_\xi(\xi) = \nicefrac{q(x)}{4}$.

Let $g \in \Gcal$ be any classifier head. Given a decoder $\phi$, we define $g \circ \phi: (x, x') \mapsto g(\phi(x), \phi(x'))$ as a model for temporal contrastive learning, with an expected contrastive loss of:
\begin{align*}
    &\losscont(g, \edo) \\
    &= \EE_{(x, x') \sim \distcont, z \sim \distcont(\cdot \mid x, x')}\sbr{\rbr{z - g\rbr{\edo(x), \edo(x')}}^2} \\
    &= \EE_{(x, x') \sim \distcont}\sbr{\distcont(z=1 \mid x, x')\rbr{1-2g\rbr{\edo(x), \edo(x')}} + g\rbr{\edo(x), \edo(x')}^2} \\
    &= \frac{1}{8}\sum_{s, \xi, s', \xi'} \cbr{\one\{s'=1-s\}\one\{\xi'=1-\xi\} + \frac{1}{4}} \rbr{\frac{\one\{s'=1-s\}\one\{\xi'=1-\xi\}}{\one\{s'=1-s\}\one\{\xi'=1-\xi\} + \frac{1}{4}}(1-g(s, s')) + g(s, s')^2}
\end{align*}

Similarly, the expected temporal contrastive loss of the model $g \circ \edo$ with the bad decoder $\exo$ is given by:
\begin{align*}
    &\losscont(g, \exo) \\
    &= \EE_{(x, x') \sim \distcont, z \sim \distcont(\cdot \mid x, x')}\sbr{\rbr{z - g\rbr{\exo(x), \exo(x')}}^2} \\
    &= \frac{1}{8}\sum_{s, \xi, s', \xi'} \cbr{\one\{s'=1-s\}\one\{\xi'=1-\xi\} + \frac{1}{4}} \rbr{\frac{\one\{s'=1-s\}\one\{\xi'=1-\xi\}}{\one\{s'=1-s\}\one\{\xi'=1-\xi\} + \frac{1}{4}}(1-g(\xi, \xi')) + g(\xi, \xi')^2}
\end{align*}

Note that by interchanging $s$ with $\xi$ and $s'$ with $\xi'$, we can show $\losscont(g, \exo) = \losscont(g, \edo)$. Therefore, $\inf_{g \in \Gcal}\losscont(g, \exo) = \inf_{g \in \Gcal} \losscont(g, \edo)$. This implies that for any value of $l$, the temporal contrastive loss assigns the same loss to the good decoder $\edo$ and the bad decoder $\exo$. Hence, in practice, temporal contrastive cannot distinguish between the good and bad decoder and may converge to the latter leading to poor downstream performance. This convergence to the bad decoder may happen if it is easier to overfit to noise. For example, in our gridworld example, it is possibly easier for the model to overfit to the predictable motion of noise than understand the underlying dynamics of the agent. This is observed in \cref{fig:gridworld-reconstruction} where the representation learned via temporal contrastive tends to overfit to the noisy exogenous pixels and perform poorly on downstream RL tasks (\cref{fig:gridworld_exps}).

\paragraph{Forward modeling learns the good decoder if $l < \lfloor m/2 \rfloor$.} We likewise analyze the expected forward modeling loss of the good and bad decoder. For any $f \in \Fcal$, we have $f(x' \mid u)$ as the generator head that acts on a given decoder's output $u \in \{0, 1\}$ and generates the next observation $x'$. 

If we use the good decoder $\edo$, then we cannot predict the exogenous noise $\xi$ or $\xi'$ which can be either 0 or 1 with equal probability. This implies that for the $l$ noisy bits $v_1, \cdots, v_l$ in $x'$, the best prediction is that each one has an equal probability of taking 0 or 1. To see this, fix $i \in [l]$ and recall that $\PP(v_i=\xi' \mid \xi') = 0.8$ and $\PP(v_i=1-\xi' \mid \xi') = 0.2$. As $\xi'$ has equal probability of taking value 0 or 1, therefore, $\PP(v_i=u) = \sum_{\xi' \in \{0, 1\}} \PP(v_i = u \mid \xi') \nicefrac{1}{2} = \frac{0.8 + 0.2}{2} = 0.5$. However, since we can deterministically predict $s'$, therefore, we can predict the true distribution over $w_j$ for all $j \in [m-l]$. Let $f_{\textrm{good}}$ be this generator head. Formally, we have:
\begin{equation*}
    f_{\textrm{good}}(x' \mid \edo(x)) = \underbrace{(1/2)}_\text{due to $x'_1=\xi'$} \cdot \underbrace{(1/2)^l}_\text{due to $v_{1:l}$} \cdot \underbrace{\prod_{j=l+2}^{m+1} \psamp(x'_j \mid 1 - \edo(x))}_\text{due to $w_{1:m-l}$} \cdot \underbrace{\one\{x'_{m+2} = 1-\edo(x)\}}_\text{due to $x'_{m+2}=s'$}
\end{equation*}

The Bayes distribution is given by:
\begin{align*}
    &D(x' \mid x) \\
    &= q\rbr{x'}\cdot\one\{\edo(x') = 1 - \edo(x)\}\cdot \one\{\exo(x') = 1 - \exo(x)\}\\
    &= \one\cbr{x'_1 = 1 - \exo(x)} \cdot \prod_{i=1}^l \psamp(x'_{i+1} \mid 1 - \exo(x)) \cdot \prod_{j=l+2}^{m+1} \psamp(x'_j \mid 1 - \edo(x)) \one\cbr{x'_{m+2}=1-\edo(x)}.
\end{align*}

As we are optimizing the log-loss, we look at the expected KL divergence $\ell_{kl}$ between the $D(x' \mid x)$ and $f_{\textrm{good}}(x' \mid \edo(x))$ which gives:
\begin{align*}
    &\ell_{kl}(f_{\textrm{good}}, \edo) \\
    &=\EE_{x}\sbr{\sum_{x'} D(x' \mid x) \ln \frac{D(x' \mid x)}{f_{\textrm{good}}(x' \mid \edo(x))}} \\
    &= \EE_{x}\sbr{\sum_{x'} D(x' \mid x) \ln \frac{\one\cbr{x'_1 = 1 - \exo(x)} \cdot \prod_{i=1}^l \psamp(x'_{i+1} \mid 1 - \exo(x))}{(1/2)^{l+1}}}\\
    &= (l+1)\ln(2) + \EE_{x}\sbr{\sum_{x'} D(x' \mid x) \ln \rbr{\one\cbr{x'_1 = 1 - \exo(x)} \cdot \prod_{i=1}^l \psamp(x'_{i+1} \mid 1 - \exo(x))}}\\
    &= (l+1)\ln(2) + \EE_{x}\sbr{\sum_{i=1}^l \sum_{x'_{i+1} \in \{0, 1\}} \psamp(x'_{i+1} \mid 1 - \edo(x)) \ln  \psamp(x'_{i+1} \mid 1 - \edo(x))}\\
    &= (l+1) \ln(2) - l H(\psamp),
\end{align*}
where $H(\psamp)$ denotes the conditional entropy given by $-\nicefrac{1}{2}\sum_{s \in \{0, 1\}} \sum_{v \in \{0, 1\}} \psamp(v \mid s)\ln \psamp(v \mid s).$ As $\psamp(u \mid u) = 0.8$ and $\psamp(1-u \mid u) = 0.2$, we have $H(\psamp) = -0.8\ln(0.8) - 0.2\ln(0.2) \approx 0.500$. Plugging this in, we get $\ell_{kl}(f_{\textrm{good}}, \edo) = l \ln(2) - 0.5l + \ln(2) = \ln(2) + 0.193 l$.

Finally, the analysis when we use the $\exo$ decoder is identical to above. In this case, we can predict $\exo(x')$ and correctly predict the $\psamp$ distribution over all the $l$-noisy bits $v_{1:l}$. However, for the $w_{1:m-l}$ bits and the $x'[m+2]$, our best bet is to predict a uniform distribution. We capture this by the generator $f_{\textrm{bad}}$ which gives:
\begin{equation*}
    f_{\textrm{bad}}(x' \mid \edo(x)) = \underbrace{(1/2)}_\text{due to $x'_{m+2}=s'$} \cdot \underbrace{(1/2)^{m-l}}_\text{due to $w_{1:m-l}$} \cdot \underbrace{\prod_{i=2}^{l+1} \psamp(x'_i \mid 1 - \exo(x))}_\text{due to $v_{1:l}$} \cdot \underbrace{\one\{x'_{1} = 1-\exo(x)\}}_\text{due to $x'_1=\xi'$}
\end{equation*}

The expected KL loss $\ell_{kl}(f_{\textrm{bad}}, \exo)$ can be computed almost exactly as before and is equal to $\ln(2) + 0.193 (m-l)$. We can see that for $\ell_{kl}(f_{\textrm{good}}, \edo) < \ell_{kl}(f_{\textrm{bad}}, \exo)$ we must have $\ln(2) + 0.193 l < \ln(2) + 0.193 (m-l)$, or equivalently, $l < m/2$. This completes the analysis.

% Experiments Appendix
\section{Additional Experimental Details}
\label{appendix:exps}

All results are reported with mean and standard error computed over 3 seeds. All the code for this work was run on A100, V100, P40 GPUs, with a compute time of approx. 12 hours for grid world experiments and 6 hours for ViZDoom experiments. Data collection was done via pretrained PPO policies along with random walks for diversity in the observation space.

\subsection{Hyperparameters}
In~\cref{tab:hyp}, we report the hyperaparameter values used for experiments in this work with the GridWorld and ViZDoom environments.

\begin{table}
    \centering
    \begin{tabular}{|c|c|} \hline 
    Hyperparameter & Value
\\ \hline \hline
         batch size& 128
\\ \hline 
         learning rate & 0.001
\\ \hline 
         epochs& 400
\\ \hline 
         \# of exogenous variables& 10
\\ \hline 
         exogenous pixel size& 4
\\ \hline 
         \# of VQ heads& 2
\\ \hline 
         VQ codebook size& 100
\\ \hline 
         VQ codebook temperature& 0
\\ \hline 
         VQ codebook dimension& 32
\\ \hline 
         VQ bottleneck dimension& 1024\\ \hline
    \end{tabular}
    \caption{Hyperparameters used for experiments with the GridWorld and ViZDoom domains.}
    \label{tab:hyp}
\end{table}

\subsection{Additional Results}

\paragraph{I.I.D. Noise in the Basic ViZDoom environment.} We evaluate the  representation learning methods on the basic ViZDoom domain but with independent and identically distributed (iid) noise. We add iid Gaussian noise to each pixel sampled from a 0 mean Gaussian distribution with a standard deviation of 0.001. Based on theory, we expect temporal contrastive objectives to be substantially better at filtering out Gaussian iid noise, which is validated experimentally for the basic ViZDoom Environment (\cref{fig:vizdoom_iid_noise})(a).~\cref{fig:vizdoom_iid_noise}(b) refers the basic ViZDoom result from for convenient comparison.

\begin{figure}%
    \centering
    {\includegraphics[width=10cm]{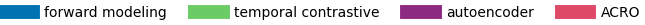}}\\
    \subfloat[\centering I.I.D. Gaussian Noise]
    {\includegraphics[width=4.5cm]{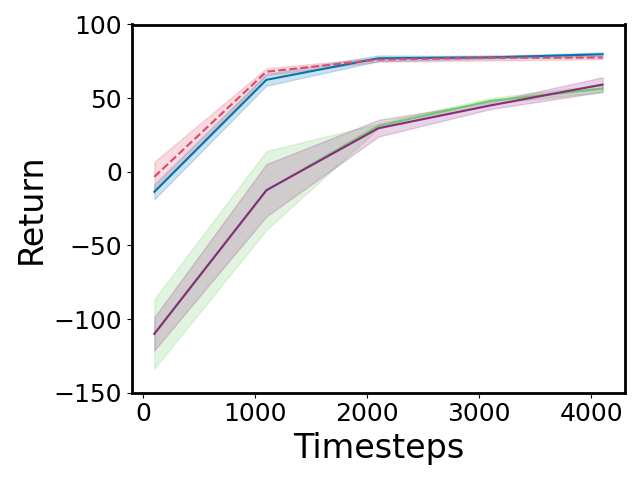} }
    \hspace{7mm}
    \subfloat[\centering Exogenous Noise]{{\includegraphics[width=4.5cm]{./figures/vizdoom/final_plot_enabl-exo_True_exo-reward_False.png} }}\\
    \caption{Experiments with (a) Guassian iid noise for the ViZDoom environment and (b) exogenous noise. %`Num noise var' denotes the number of noisy diamonds constituting the exogenous noise.
    }
\end{figure}

\paragraph{Harder Exogenous Noise.} We provide additional experiments in gridworld where we increase the number of exogenous noise variables (diamond shapes overlayed on image) while keeping their sizes fixed at 4 pixels. We present result in ~\cref{fig:gridworld_more_noise} which shows significant degradation in the performance of video-based representation methods, whereas ACRO which uses trajectory data continues to perform well. This supports one of our main theoretical result that video-based representations can be exponentially worse than trajectory-based representations in the presence of exogenous noise.

\begin{figure}%
    \centering
    {\includegraphics[width=13cm]{figures/gridworld/more_noise/num_noise_leg.png}}\\
    \subfloat[\centering Autoencoder]{{\includegraphics[width=4.5cm]{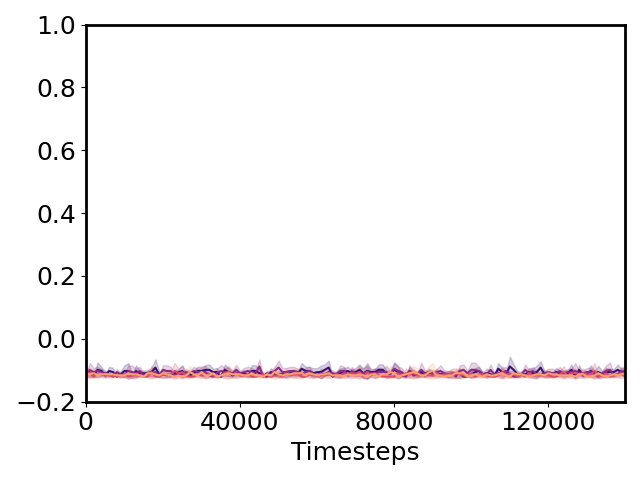} }}
    \subfloat[\centering VQ-Autoencoder]{{\includegraphics[width=4.5cm]{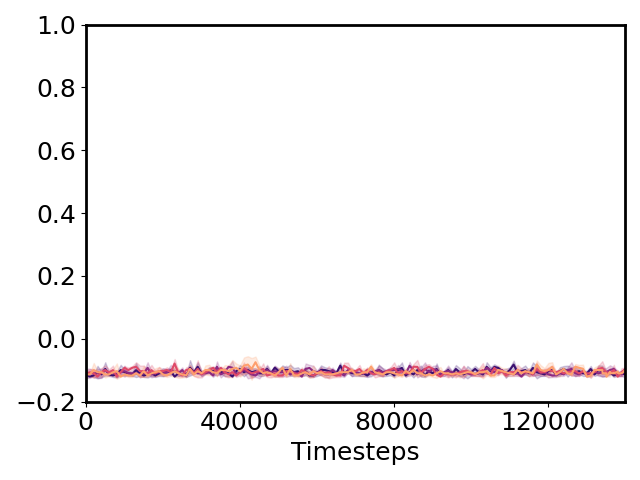} }}
    \subfloat[\centering Forward Modeling]{{\includegraphics[width=4.5cm]{figures/gridworld/more_noise/alg_next-frame_vq_-1.png} }}\\
    \subfloat[\centering VQ-Forward Modeling]{{\includegraphics[width=4.5cm]{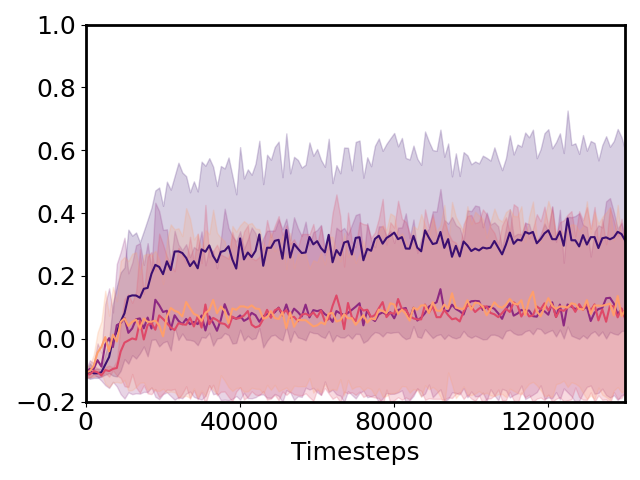} }}
    \subfloat[\centering Temporal Contrastive]{{\includegraphics[width=4.5cm]{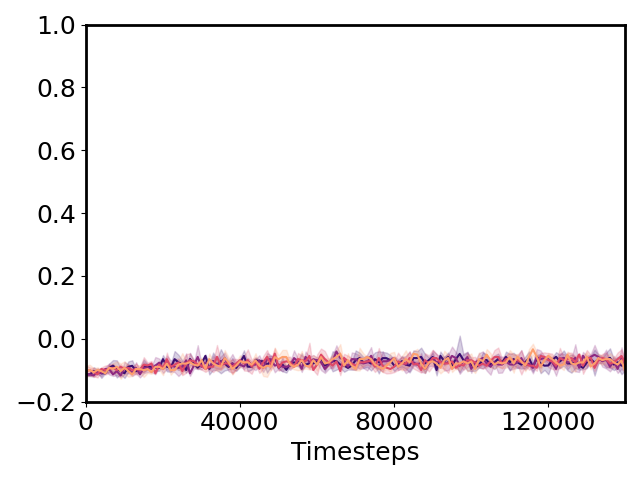} }}
    \subfloat[\centering ACRO]{{\includegraphics[width=4.5cm]{figures/gridworld/more_noise/alg_invdynamics_vq_-1.png} }}
    
    \caption{Gridworld experiments with exogenous noise of size 4 for varying number of exogenous noise variables. Several representation learning algorithms using video data struggle to learn with larger number of exogenous noise variables, whereas ACRO which uses trajectory data, still performs well.}
    \label{fig:gridworld_more_noise}%
\end{figure}

\paragraph{Additional decoded image reconstructions} are shown in ~\cref{fig:reconstruction_gridworld2} for the GridWorld environment and in ~\cref{fig:vizdoom-reconstruction2} for the basic ViZDoom environment. We highlight that important parts of the observation space are recovered successfully by the forward modeling approach under varying levels of exogenous noise, whereas temporal contrastive learning often fails.

\begin{figure}
\centering
    \subfloat{{\includegraphics[width=3.5cm]{./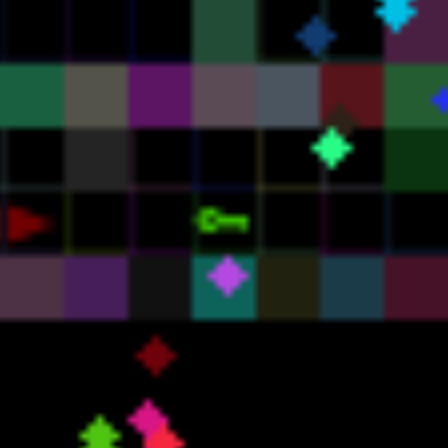} }}
    \subfloat{{\includegraphics[width=3.5cm]{./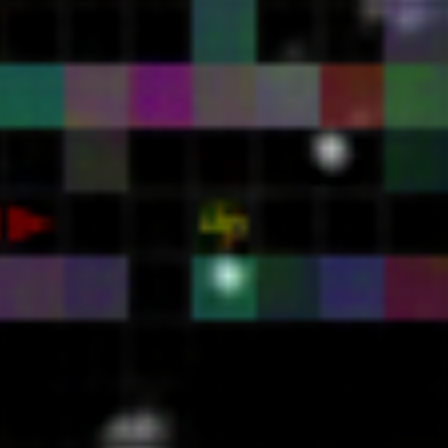} }}
    \subfloat{{\includegraphics[width=3.5cm]{./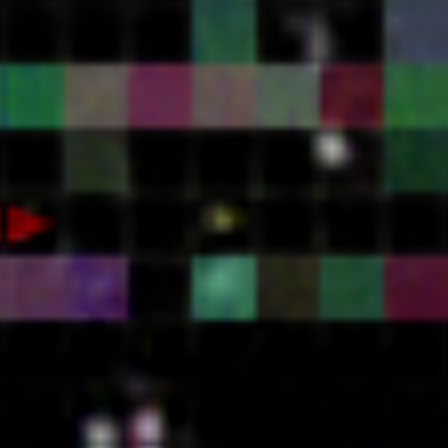} }}
    \subfloat{{\includegraphics[width=3.5cm]{./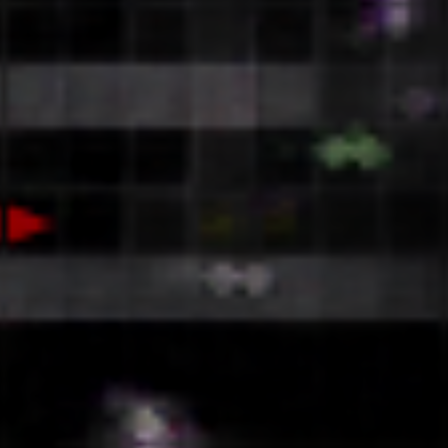} }}\\
        \subfloat{{\includegraphics[width=3.5cm]{./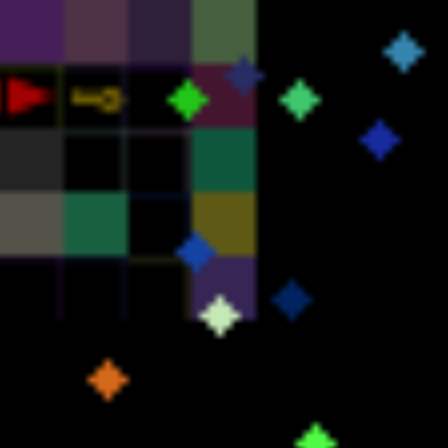} }}
    \subfloat{{\includegraphics[width=3.5cm]{./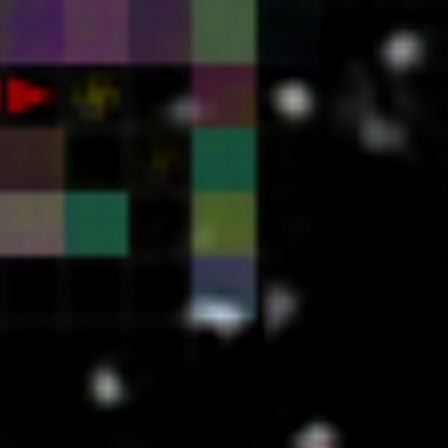} }}
    \subfloat{{\includegraphics[width=3.5cm]{./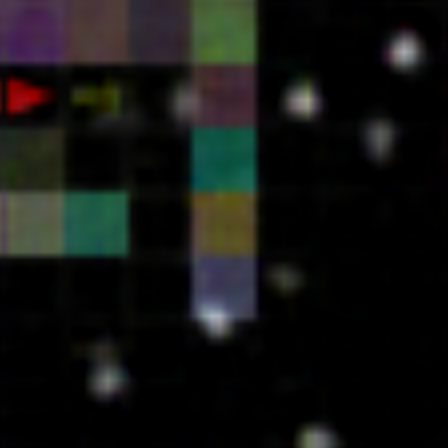} }}
    \subfloat{{\includegraphics[width=3.5cm]{./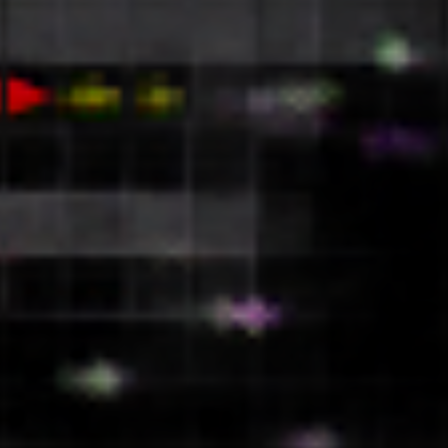} }}\\
        \subfloat[\centering Original]{{\includegraphics[width=3.5cm]{./figures/gridworld/recon_original/observation_33.png} }}
    \subfloat[\centering Forward Modeling]{{\includegraphics[width=3.5cm]{./figures/gridworld/recon_nextframe/prediction_33.png} }}
    \subfloat[\centering Autoencoder]{{\includegraphics[width=3.5cm]{./figures/gridworld/recon_autoencoder/prediction_33.png} }}
    \subfloat[\centering Temporal Contrastive]{{\includegraphics[width=3.5cm]{./figures/gridworld/recon_tempcont/prediction_33.png} }}
    \caption{Decoded image reconstructions from different latent representation learning methods in the GridWorld environment. We train a decoder on top of frozen representations trained with the three video pre-training approaches. }
    \label{fig:reconstruction_gridworld2}
\end{figure}

\begin{figure}[!thb]
\centering
    \subfloat{{\includegraphics[width=3.5cm]{./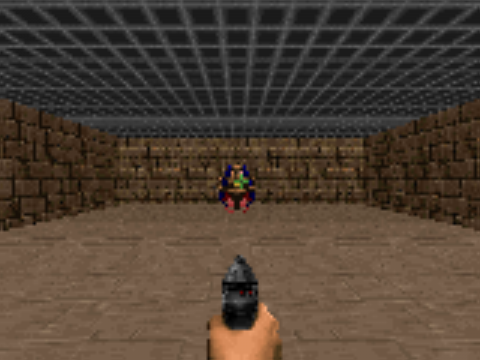} }}
    \subfloat{{\includegraphics[width=3.5cm]{./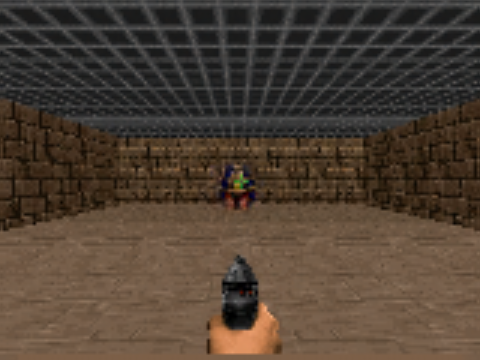} }}
    \subfloat{{\includegraphics[width=3.5cm]{./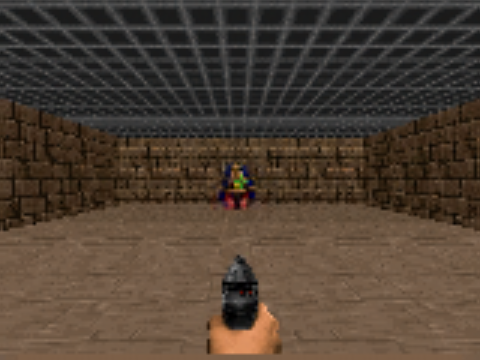} }}
    \subfloat{{\includegraphics[width=3.5cm]{./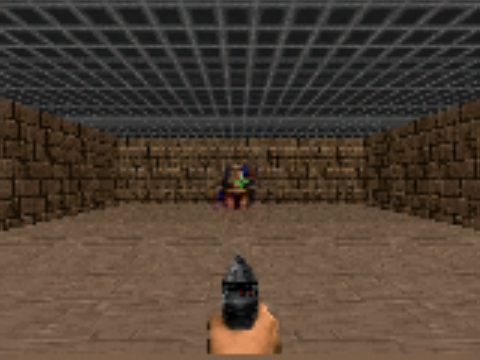} }}\\
    \subfloat{{\includegraphics[width=3.5cm]{./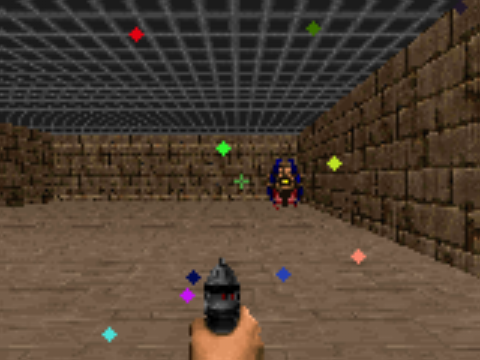} }}
    \subfloat{{\includegraphics[width=3.5cm]{./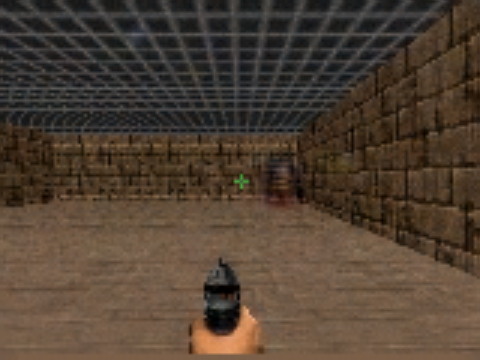} }}
    \subfloat{{\includegraphics[width=3.5cm]{./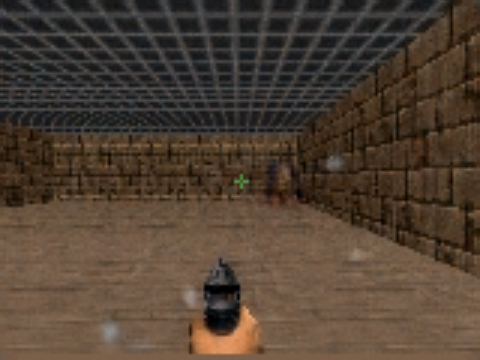} }}
    \subfloat{{\includegraphics[width=3.5cm]{./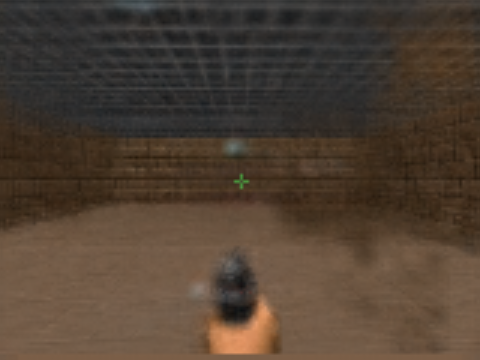} }}\\
    \subfloat[\centering Original]{{\includegraphics[width=3.5cm]{./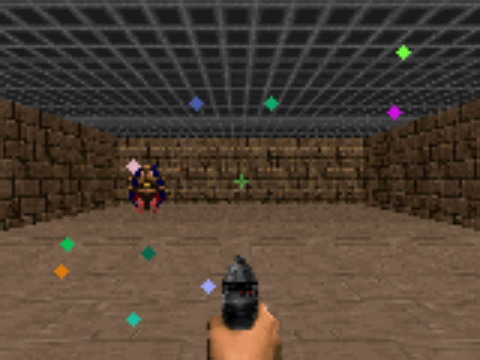} }}
    \subfloat[\centering Forward Modeling]{{\includegraphics[width=3.5cm]{./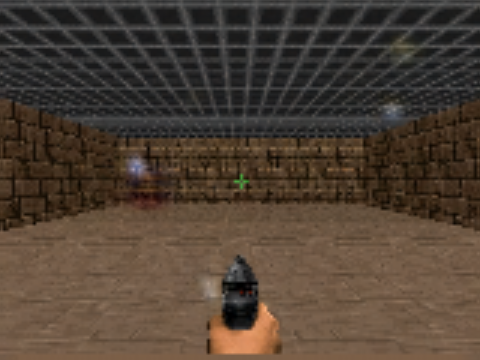} }}
    \subfloat[\centering Autoencoder]{{\includegraphics[width=3.5cm]{./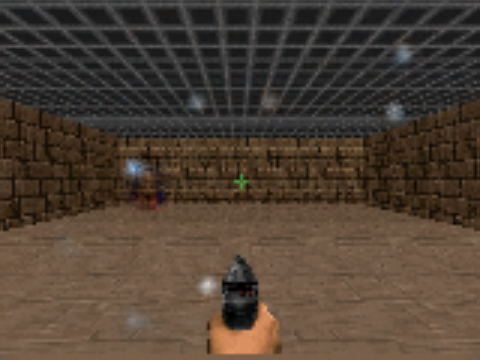} }}
    \subfloat[\centering Temporal Contrastive]{{\includegraphics[width=3.5cm]{./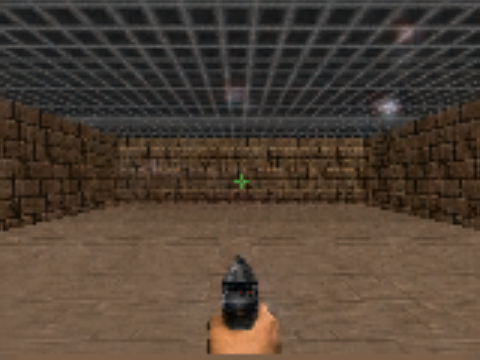} }}
    \caption{Decoded  image reconstructions from different latent representation learning methods in the ViZDoom environment. We train a decoder on top of frozen representations trained with the three video pre-training approaches.  %The top row (a-d) shows an example from the setting where there is no exogenous noise, and the bottom row (e-f) shows an example with exogenous noise.
    }
    \label{fig:vizdoom-reconstruction2}
\end{figure}

\end{document}